\begin{document}

\title{A Theoretically Guaranteed Quaternion Weighted Schatten $p$-norm Minimization Method for Color Image Restoration 
}

\titlerunning{QWSNM for Color Image Restoration}        

\author{ Qinghua Zhang         \and
        Liangtian He    \and
       Yilun Wang    \and
       Liang-Jian Deng \and
        Jun Liu
}


\institute{Qinghua Zhang \at
              Key Laboratory of Intelligent Computing and Signal Processing of Ministry of Education, School of Mathematical Sciences,  Anhui University,  Hefei 230601, People's Republic of China. \\
              \email{qinghuaz@stu.ahu.edu.cn}           
           \and
           Liangtian He \at
           Key Laboratory of Intelligent Computing and Signal Processing of Ministry of Education, School of Mathematical Sciences, Anhui University,  Hefei 230601, People's Republic of China. \\
           \email{helt@ahu.edu.cn}
           \and
           Yilun Wang \at
           Machinify Inc, Palo Alto, California 94301, United States. \\
           \email{yilun.wang@gmail.com}
           \and
           Liang-Jian Deng \at
           School of Mathematical Sciences, University of Electronic Science and Technology of China, Chengdu 611731,  People's Republic of China. \\
           \email{liangjian.deng@uestc.edu.cn}
           \and
           Jun Liu \at
           Key Laboratory for Applied Statistics of MOE, School of Mathematics and Statistics, Northeast Normal University, Changchun 130024, People's Republic of China. \\
           \email{liuj292@nenu.edu.cn}
}

\date{Received: date / Accepted: date}

\maketitle

\begin{abstract}
Inspired by the fact that the matrix formulated by nonlocal similar patches in a natural image is of low rank, the rank approximation issue have been extensively investigated over the past decades,  among which weighted nuclear norm minimization (WNNM) and weighted Schatten $p$-norm minimization (WSNM) are two prevailing methods have shown great superiority in various image restoration (IR) problems.
Due to the physical characteristic of color images, color image restoration (CIR) is often a much more difficult task than its grayscale image counterpart.
However, when applied to CIR, the traditional WNNM/WSNM method only processes three color channels individually and fails to consider their cross-channel correlations.
Very recently, a quaternion-based WNNM approach (QWNNM) has been developed to mitigate this issue, which is capable of representing the color image as a whole in the quaternion domain and preserving the inherent correlation among the three color channels.
Despite its empirical success, unfortunately, the convergence behavior of QWNNM has not been strictly studied yet.
In this paper, on the one side, we extend the WSNM into quaternion domain and correspondingly propose a novel quaternion-based WSNM model (QWSNM) for tackling the CIR problems. Extensive experiments on two representative CIR tasks, including color image denoising and deblurring,  demonstrate that the proposed QWSNM method performs favorably against many state-of-the-art alternatives, in both quantitative and qualitative evaluations.
On the other side, more importantly, we preliminarily provide a theoretical convergence analysis, that is, by modifying the quaternion alternating direction method of multipliers (QADMM) through a simple continuation strategy, we theoretically prove that both the solution sequences generated by the QWNNM and QWSNM have fixed-point convergence guarantees.
The source code of our algorithm can be downloaded at the website: \url{https://github.com/qiuxuanzhizi/QWSNM}.

\keywords{Weighted nuclear norm minimization \and Weighted Schatten $p$-norm minimization \and Color image restoration \and Quaternion representation  \and Quaternion ADMM}
\end{abstract}

\section{Introduction}
\label{intro}
Due to the imperfects of imaging systems and inevitable interference of external factors, the acquired images are often degraded in the process of image acquisition, transmission, and storage.
Restoring the clean high-quality image from the corrupted one has been a long-standing research topic for its highly practical value.
It is often acted as a crucial preprocessing step for the success of subsequent various high-level applications, such as detection, classification, recognition,  segmentation, and so on.
Let an $m\times n$ image with $q$ channels be lexicographically stacked by  $\mathbf{x}=[\mathbf{x}^{(1)},\mathbf{x}^{(2)},\cdots,\mathbf{x}^{(q)}]\in \mathbb{R}^{qmn}$, where $\mathbf{x}^{(i)}\in \mathbb{R}^{mn}$ represents the $i$-th channel for $i=1,2,\cdots,q$. Mathematically, the degradation process is generally formulated as:
\begin{equation} \label{1}
\mathbf{y}=\mathbf{Ax}+\mathbf{n},
\end{equation}
where $\mathbf{A}\in \mathbb{R}^{qmn\times qmn}$ is a linear degraded operator\footnote{In this paper, we mainly focus on the color image denoising and deblurring, note that the extension to other color image restoration tasks would be straightforward.}, typically a convolution operator in image deblurring and an identity matrix in image denoising, $\mathbf{x}\in \mathbb{R}^{qmn}$ and $\mathbf{y}\in \mathbb{R}^{qmn}$ are the underlying clean image and corrupted observation, respectively. $\mathbf{n}$ is normally assumed to be the additive white Gaussian noise (AWGN) with variance $\sigma^{2}$.
Since a significant amount of information has been lost during the degradation process, image restoration (IR) is typically an ill-posed problem, making the recovery of clean image from its degraded one rather challenging.
To compensate for the information loss, as a common used strategy, appropriate prior information of the interested image  (also known as regularization) often needs to be introduced to alleviate the ill-posedness by constraining the solution space.
During the last several decades, there have been enduring efforts to the prior exploration, such as total variation (TV) regularized methods \cite{rudin1992nonlinear,wang2008new,bredies2010total,chang2014domain}, sparse representation methods \cite{mairal2007sparse,zhang2014group,zha2021nonconvex}, nonlocal self-similarity (NSS) prior methods \cite{buades2005non,danielyan2011bm3d}, low-rank regularized methods \cite{peng2014reweighted,dong2012nonlocal,lai2018manifold}, denoiser-guided methods \cite{zhang2021plug,he2023sln}, etc.

As an important branch, low rank matrix approximation (LRMA) has been widely applied in numerous IR tasks, which intents on recovering the underlying low rank matrix from its degraded observation. Generally, the LRMA can be divided into two categories: low rank matrix factorization (LRMF) methods \cite{he2015total,chi2019nonconvex,shi2019low,ke2023quasi} and low rank minimization (LRM) methods \cite{candes2011robust,jin2016alternating,yuan2023rank}. LRMF assumes the latent matrix is a product of two low-rank matrices. However, a significant shortcoming limiting its practical applicability is that the rank of underlying matrix often can not be easily predetermined. By contrast, LRM usually solves a minimization problem with different rank approximation regularizers, such as the widely used nuclear norm. Cand\`{e}s  et al. \cite{candes2012exact} has proven that nuclear norm is the tightest convex relaxation of the NP-hard rank function. Despite a good theoretical guarantee by the singular value thresholding (SVT) operation, many studies have demonstrated that the nuclear norm minimization (NNM) tends to yield sub-optimal solutions of the original rank minimization. The main reason behind this phenomenon is that NNM treats each singular value equally, which conflicts with the fact that the larger singular values often contain more information and should be penalized slighter.
In order to enhance the stability and effectiveness of the nuclear norm regularization,
many nonconvex surrogates have been introduced instead. Among them, one of the most representative work is the nonlocal patch-based weighted NNM model (WNNM) proposed by \cite{gu2014weighted} for grayscale image denoising, which considers the physical significance of the singular values and assigns different weights to them.
In subsequent research, Gu et al. \cite{gu2017weighted} further extended WNNM to other low-level vision tasks, including background subtraction,  color image denoising and inpainting. It is worth emphasizing that though WNNM often delivers promising results, it still tends to over-shrink the dominant rank components, which constraints its capacity of recovering the images with rich textures and details.
To overcome this shortcoming, based on the Schatten $p$-norm minimization (SNM) \cite{nie2012low}, Xie et al. \cite{xie2016weighted} proposed a more effective weighted Schatten $p$-norm minimization (WSNM) model, which avoids shrinking too much the low rank components and is capable of obtaining more accurate recovery results compared to the WNNM counterpart. Similarly, Zha et al. \cite{zha2018non} proposed a nonconvex weighted $\ell_{p}$ nuclear norm minimization (NCW-NNM) model under the ADMM framework for IR problems.

Given the rapidly growing role of color images in our daily life, the color image restoration (CIR) has received increasingly attention in this research field. It should be pointed out that most of the existing IR algorithms are originally developed for grayscale images (i.e., single channel). For more complicated CIR problems, the easiest and rudest way is to implement these IR approaches on three color channels (i.e., red, green, and blue in the RGB color space) individually. Unfortunately, this straightforward extension through a channel-wise manner commonly yields unsatisfactory results since they totally ignores the inter-channel correlation between RGB components.
Alternatively, two other practicable strategies have been widely adopted to CIR \cite{luisier2008sure}.
The first strategy is to convert the color images from standard RGB space into a less correlated color space, such as YCbCr space, and then restore each color channel in the transformed space independently. For example, the color block-matching and 3D filtering (CBM3D) \cite{dabov2007image} first converts the sRGB image into a luminance-chrominance space and then applies BM3D to each channel separately. The second strategy is to concatenate the three RGB channels to make use of the channel correlation. For instance, Xu et al. \cite{xu2017multi} proposed a multi-channel WNNM (MC-WNNM) model for real color image denoising, where a weight matrix is introduced to exploit the correlated information and noise difference among channels.  In the same spirit of MC-WNNM \cite{xu2017multi}, Huang et al. \cite{huang2021multichannel} also employed the multi-channel strategy and proposed a multi-channel WSNM (MC-WSNM) model for real color image denoising.
However, from the perspective of tensor unfolding, the concatenation CIR methods just utilize one unfolding matrix and ignore the other two unfolding matrices.
According to the analyses above, a key challenge for CIR lies in preserving the channel correlations, and the above-mentioned CIR strategies fail to make full use of the correlation information among three color channels, thus the recovered results are often unsatisfactory in terms of reducing the color distortions and color artifacts.

As a new color image representation tool, quaternion encodes the color image pixel into three imaginary parts, which perfectly fits the color image structure and is capable of fully preserving the inter-relationship between the color channels. In very recent years, quaternion-based approaches have been attracting much attention attributing to its rapid development in both theory and applications.
Typically, Chen et al. \cite{chen2019low} extended LRMA into the quaternion domain and proposed the low rank quaternion approximation (LRQA) for color image denoising and inpainting.
Liu et al. \cite{liu2018infrared} combined the quaternion-based TV and sparse dictionary learning for color image super-resolution. Miao et al. \cite{miao2021color} developed an efficient low-rank quaternion matrix completion algorithm to recover missing data of a color image and both the LRMF and NNM techniques were combined in their quaternion matrix-based model.
Zou et al. \cite{zou2016quaternion} proposed a quaternion collaborative and sparse representation model with application to color face recognition.
Yu et al. \cite{yu2019quaternion} extended the nonlocal patch-based WNNM into quaternion domain and proposed the quaternion-based WNNM method (QWNNM) for color image denoising.
Thereafter, Huang et al. \cite{huang2022quaternion} proposed a nonlocal patch-based quaternion WNNM model\footnote{It is denoted as QWNNM* here to avoid confusion with the QWNNM for color image denoising application in \cite{yu2019quaternion}.}  (QWNNM*) with application to color image deblurring. It is worth mentioning that they are the first to leverage the quaternion representation to address color image deblurring problems. The authors extended the 2-dimensional blurring matrix to quaternion domain and correspondingly formulated the quaternion blurring operator.
For a comprehensive understanding of the quaternion-based color image processing methods, the interested readers are referred to \cite{huang2023review} for more details.

In this paper, on the one hand, as a significant extension, we merge the nonlocal patch-based WSNM into quaternion domain and propose a novel quaternion-based WSNM (QWSNM) model, which is capable of taking advantages of both the WSNM regularizer and quaternion representation for tacking CIR problems. Thanks to the quaternion calculation, the inner-relationship among the color channels can be well preserved. Extensive experiments demonstrate that our proposed QWSNM is superior to many state-of-the-art alternatives, in both the objective evaluation and perceptual observation.
To the best of our knowledge, the most similar to our work is the patch-based QWNNM* model proposed by Huang \cite{huang2022quaternion} et al., but they did not strictly discuss the convergence behavior of their algorithmic solver.
On the other hand, we introduce a unified algorithmic framework to the resulting nonconvex QWNNM* and QWSNM optimization problems. To be specific, by modifying the quaternion ADMM algorithm (QADMM) \cite{flamant2021general} with a continuation strategy, we theoretically prove that the variable sequences generated by QWNNM* and QWSNM converge to their corresponding stational points.
To facilitate the understanding to readers,  Table 1 summarizes the connections and differences between our work and several closely related nonlocal patch-based low rank methods.
\begin{table}[htbp]
\caption{ Comparison of the key attributes between our proposed method and several closely related nonlocal patch-based low rank approaches.}
\centering {
\resizebox{\textwidth}{12mm}{
\begin{tabular}{|c|c|c|c|c|}

\hline 

 Method & Regularizer term &  Task & Quaternion representation & Convergence analysis         \\
 \hline
WNNM \cite{gu2014weighted}    &   weighted nuclear norm    & grayscale image denoising    &   \XSolidBrush       &    \XSolidBrush      \\
  \hline
WNNM \cite{gu2017weighted}    &   weighted nuclear norm    & color image denoising/inpainting, background substraction    &   \XSolidBrush       &    \Checkmark      \\
  \hline
WSNM \cite{xie2016weighted}    &   weighted Schatten $p$-norm    & grayscale image denoising, background substraction     &   \XSolidBrush       &    \Checkmark      \\
  \hline
NCW-NNM \cite{zha2018non}   &   weighted Schatten $p$-norm   & color  image deblurring/inpainting/compressive sensing     &    \XSolidBrush       &    \XSolidBrush      \\
  \hline
QWNNM  \cite{yu2019quaternion} &    weighted nuclear norm   &  color image denoising     &  \Checkmark         &    \XSolidBrush      \\
  \hline
QWNNM* \cite{huang2022quaternion}  &  weighted nuclear norm     &  color image deblurring    &   \Checkmark      &    \XSolidBrush     \\
  \hline
QWSNM (Proposed) &    weighted Schatten $p$-norm   &  color image denoising/deblurring    &   \Checkmark       &   \Checkmark      \\
 \hline
\end{tabular}} }  \\

\label{tab: different priors}
\end{table}

The organization of this paper is as follows. Section 2 briefly reviews several necessary preliminaries, including the basic concepts of quaternion algebra, the QADMM algorithm, nonlocal patch-based QWNNM and WSNM recovery models.
Section 3 serves as the main part of this work, where a novel QWSNM model is introduced and the adapted QADMM with a continuation strategy is developed to solve the resulting nonconvex quaternion minimization problems. Moreover, the convergence analysis of our proposed algorithm is carried out by calculating the residual iteratively for each step of the modified QADMM. In Section 4, we report and discuss the recovery results on color image denoising and color image deblurring by the given method and compare it with several state-of-the-art CIR approaches. Some concluding remarks will be drawn in the Section 5.

\section{Preliminaries}
In this section, we first briefly review some mathematical notations and definitions of quaternion representation, quaternion algebra and quaternion constrained convex optimization. Then we introduce the WSNM regularization  and  QWNNM model for CIR. Throughout this paper, scalars, vectors and matrices are denoted as lowercase letters, boldface lowercase letters and boldface capital letters, respectively, e.g., $a$, $\mathbf{a}$, $\mathbf{A}$. Following \cite{yu2019quaternion}, the variables with a dot above (e.g., $\dot{a}$, $\dot{\mathbf{a}}$ and $\dot{\mathbf{A}}$) are used to denote the corresponding quaternion variables in the quaternion domain.

\subsection{Quaternion Representation and Algebra}
As a generalization of the real space $\mathbb{R}$ and complex space $\mathbb{C}$, the quaternion space is defined as
\begin{equation}\label{2}
\begin{aligned}
\mathbb{H}=\{a_{0}+ a_{1}\mathbf{i}+a_{2}\mathbf{j}+a_{3}\mathbf{k} | a_{0},a_{1},a_{2},a_{3}\in \mathbb{R}\},
\end{aligned}
\end{equation}
where $\{1,\mathbf{i},\mathbf{j},\mathbf{k}\}$ is the basis of $\mathbb{H}$, and the imaginary units $\mathbf{i},\mathbf{j},\mathbf{k}$ obey the quaternion rules $\mathbf{i}^{2}=\mathbf{j}^{2}=\mathbf{k}^{2}=\mathbf{ijk}=-1$, which implies $\mathbf{ij}=\mathbf{k}=-\mathbf{ji}$, $\mathbf{jk}=\mathbf{i}=-\mathbf{kj}$, $\mathbf{ki}=\mathbf{j}=-\mathbf{ik}$.

Let $\dot{a}=a_{0}+a_{1}\mathbf{i}+a_{2}\mathbf{j}+a_{3}\mathbf{k}\in\mathbb{H}$, $\dot{b}=b_{0}+b_{1}\mathbf{i}+b_{2}\mathbf{j}+b_{3}\mathbf{k}\in\mathbb{H}$, and $\lambda\in\mathbb{R}$, then we have
\begin{equation}\label{3}
\begin{aligned}
\dot{a}+\dot{b}&=(a_{0}+b_{0})+(a_{1}+b_{1})\mathbf{i}+(a_{2}+b_{2})\mathbf{j}+(a_{3}+b_{3})\mathbf{k},\\
\lambda\dot{a}&=(\lambda a_{0})+(\lambda a_{1})\mathbf{i}+(\lambda a_{2})\mathbf{j}+(\lambda a_{3})\mathbf{k},
\end{aligned}
\end{equation}
and
\begin{equation}\label{4}
\begin{aligned}
\dot{a}\dot{b}&=(a_{0}b_{0}-a_{1}b_{1}-a_{2}b_{2}-a_{3}b_{3})+(a_{0}b_{1}+a_{1}b_{0}+a_{2}b_{3}-a_{3}b_{2})\mathbf{i}\\
&+(a_{0}b_{2}-a_{1}b_{3}+a_{2}b_{0}+a_{3}b_{1})\mathbf{j}+(a_{0}b_{3}+a_{1}b_{2}-a_{2}b_{1}+a_{3}b_{0})\mathbf{k}.
\end{aligned}
\end{equation}
The conjugate and modulus of $\dot{a}$ are defined by
\begin{equation}\label{5}
\begin{aligned}
\dot{a}^{\ast}&=a_{0}-a_{1}\mathbf{i}-a_{2}\mathbf{j}-a_{3}\mathbf{k},\\
|\dot{a}|&=\sqrt{a_{0}^{2}+a_{1}^{2}+a_{2}^{2}+a_{3}^{2}}.
\end{aligned}
\end{equation}
The transformation of the quaternion $\dot{a}$ with pure unit quaternions $\mathbf{i}, \mathbf{j}, \mathbf{k}$ is defined as
\begin{equation}\label{6}
\begin{aligned}
\dot{a}^{\mathbf{i}}&=-\mathbf{i}\dot{a}\mathbf{i}=a_{0}+a_{1}\mathbf{i}-a_{2}\mathbf{j}-a_{3}\mathbf{k},\\
\dot{a}^{\mathbf{j}}&=-\mathbf{j}\dot{a}\mathbf{j}=a_{0}-a_{1}\mathbf{i}+a_{2}\mathbf{j}-a_{3}\mathbf{k},\\
\dot{a}^{\mathbf{k}}&=-\mathbf{k}\dot{a}\mathbf{k}=a_{0}-a_{1}\mathbf{i}-a_{2}\mathbf{j}+a_{3}\mathbf{k}.
\end{aligned}
\end{equation}
For quaternion matrix $\mathbf{\dot{X}}=(\dot{x}_{ij})\in \mathbb{H}^{m\times n}$, where $\mathbf{\dot{X}}=\mathbf{X}_{0}+\mathbf{X}_{1}\mathbf{i}+\mathbf{X}_{2}\mathbf{j}+\mathbf{X}_{3}\mathbf{k}$, and $\mathbf{X}_{l}\in \mathbb{R}^{m\times n}(l=0,1,2,3)$ are real matrices.
When $\mathbf{X}_{0}=\mathbf{0}$, $\mathbf{\dot{X}}$ reduces to be a pure quaternion matrix. Since color image has three channels $(R,G,B)$, we can represent a color image with a pure quaternion matrix, and the RGB channels of a color image pixel $\dot{x}_{ij}$ can be encoded as three imaginary parts of the quaternion, i.e.,
\begin{equation}\label{7}
\begin{aligned}
\dot{x}_{ij}=\dot{x}_{ij}^{r}\mathbf{i}+\dot{x}_{ij}^{g}\mathbf{j}+\dot{x}_{ij}^{b}\mathbf{k},
\end{aligned}
\end{equation}
where $i=1,\ldots,m,j=1,\ldots,n$, and $\dot{x}_{ij}^{r}, \dot{x}_{ij}^{g}, \dot{x}_{ij}^{b} \in \mathbb{R}$ are the red, green, blue components at position $(i,j)$ in the color image, respectively.

The quaternion identity matrix $\dot{\mathbf{I}}$ is similar to real-valued identity matrix. For example, if $\dot{\mathbf{I}}_2 \in \mathbb{H}^{2\times 2}$, it has the form
\begin{equation}\label{8}
\begin{aligned}
\dot{\mathbf{I}}_2=
    \begin{pmatrix}
       1+0\mathbf{i}+0\mathbf{j}+0\mathbf{k} & \quad 0+0\mathbf{i}+0\mathbf{j}+0\mathbf{k} \\
       0+0\mathbf{i}+0\mathbf{j}+0\mathbf{k} & \quad 1+0\mathbf{i}+0\mathbf{j}+0\mathbf{k}
    \end{pmatrix}
\end{aligned},
\end{equation}
The conjugate operator $\mathbf{\dot{X}}^{\ast}$, the transpose operator $\mathbf{\dot{X}}^{T}$ and the conjugate transpose operator $\mathbf{\dot{X}}^{\vartriangleleft}$ are defined as: $\mathbf{\dot{X}}^{\ast}=(\dot{x}_{ij}^{\ast}), \mathbf{\dot{X}}^{T}=(\dot{x}_{ji})$ and $\mathbf{\dot{X}}^{\vartriangleleft}=(\dot{x}_{ji}^{\ast})$.

The norms of quaternion vectors and matrices are defined as follows \cite{jia2021structure}.
\begin{definition}
The $\ell_{2}$-norm of quaternion vector $\mathbf{\dot{a}}=\alpha_{0}+\alpha_{1}\mathbf{i}+\alpha_{2}\mathbf{j}+\alpha_{3}\mathbf{k}\in \mathbb{H}^{n}$ is $\|\mathbf{\dot{a}}\|_{2}:=\sqrt{\sum_{i}|\alpha_{i}|^{2}}$; the $\ell_{2}$-norm of quaternion matrix $\mathbf{\dot{X}}=(\dot{x}_{ij})_{m\times n}$ is $\|\mathbf{\dot{X}}\|_{2} :=max(\sigma(\mathbf{\dot{X}}))$, where $\sigma(\mathbf{\dot{X}})$ is the set of singular values of $\mathbf{\dot{X}}$; the Frobenius norm of the quaternion matrix $\mathbf{\dot{X}}$ is $\|\mathbf{\dot{X}}\|_{F}:=\sqrt{\sum_{i,j}|\dot{x}_{ij}|^{2}}.$
\end{definition}

\begin{definition}[Unitary quaternion matrix]
$\mathbf{\dot{X}}\in \mathbb{H}^{m\times m}$ is called unitary quaternion matrix if and only if $\mathbf{\dot{X}}\mathbf{\dot{X}}^{\vartriangleleft}=\mathbf{\dot{X}}^{\vartriangleleft}\mathbf{\dot{X}}=\mathbf{\dot{I}}_{m}$, where $\mathbf{\dot{I}}_{m}$ is the quaternion identity matrix.
\end{definition}

The following theorem of quaternion singular value decomposition (QSVD) has been proved by Zhang \cite{zhang1997quaternions}. Similar to the SVD for real-valued matrices, all the singular values of quaternion matrices are nonnegative and also have the decreasing order property, and the larger singular values contain more color image information.
\begin{theopargself}
\begin{theorem}[(QSVD)]
 Given a quaternion matrix $\mathbf{\dot{Q}}\in \mathbb{H}^{m\times n}$ of rank $r$, there exist two unitary quaternion matrices $\mathbf{\dot{U}}\in \mathbb{H}^{m\times m}$ and $\mathbf{\dot{V}}\in \mathbb{H}^{n\times n}$ such that $\mathbf{\dot{Q}}=\mathbf{\dot{U}}
\begin{pmatrix}
\mathbf{\sum}_{r}&0\\
0&0\\
\end{pmatrix}
\mathbf{\dot{V}^{\ast}}$, where $\sum_{r}=diag(\sigma_{1},\cdots,\sigma_{r})\in \mathbb{R}^{r\times r}$
, and all singular values $\sigma_{i}>0, i=1,\cdots,r.$
\end{theorem}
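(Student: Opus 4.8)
The plan is to reduce the statement to the classical complex singular value decomposition by means of the complex adjoint representation of quaternion matrices. Using the Cayley--Dickson splitting I would write $\dot{\mathbf{Q}}=\mathbf{A}+\mathbf{B}\mathbf{j}$ with $\mathbf{A},\mathbf{B}\in\mathbb{C}^{m\times n}$ and attach to $\dot{\mathbf{Q}}$ its complex adjoint
\[
\chi(\dot{\mathbf{Q}})=\begin{pmatrix}\mathbf{A}&\mathbf{B}\\-\overline{\mathbf{B}}&\overline{\mathbf{A}}\end{pmatrix}\in\mathbb{C}^{2m\times2n}.
\]
The first step is to record the functorial properties of $\chi$ that make the reduction work: $\chi$ is additive and multiplicative ($\chi(\dot{\mathbf{P}}\dot{\mathbf{Q}})=\chi(\dot{\mathbf{P}})\chi(\dot{\mathbf{Q}})$), it carries the quaternion conjugate transpose to the complex conjugate transpose, i.e. $\chi(\dot{\mathbf{X}}^{\vartriangleleft})=\chi(\dot{\mathbf{X}})^{H}$, and $\chi(\dot{\mathbf{I}}_m)=\mathbf{I}_{2m}$. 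An immediate corollary is that $\dot{\mathbf{X}}$ satisfies $\dot{\mathbf{X}}\dot{\mathbf{X}}^{\vartriangleleft}=\dot{\mathbf{I}}$ if and only if $\chi(\dot{\mathbf{X}})$ is a unitary complex matrix; this equivalence is the device that lets me transport the orthogonal structure between $\mathbb{H}$ and $\mathbb{C}$. These verifications are routine but must be done first, since the whole argument hinges on them.

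With the correspondence in hand, the second step applies the ordinary complex SVD to $\chi(\dot{\mathbf{Q}})$, writing $\chi(\dot{\mathbf{Q}})=\mathbf{P}\boldsymbol{\Lambda}\mathbf{W}^{H}$ with $\mathbf{P},\mathbf{W}$ complex unitary and $\boldsymbol{\Lambda}$ diagonal and nonnegative. The crucial and, I expect, hardest point is to exploit the built-in symmetry of the adjoint so that the decomposition itself is in adjoint form. Setting $\mathbf{J}_k=\begin{pmatrix}\mathbf{0}&\mathbf{I}_k\\-\mathbf{I}_k&\mathbf{0}\end{pmatrix}$, a direct computation gives the structural identity $\overline{\chi(\dot{\mathbf{Q}})}=\mathbf{J}_m\,\chi(\dot{\mathbf{Q}})\,\mathbf{J}_n^{T}$. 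This invariance forces $\mathrm{rank}\,\chi(\dot{\mathbf{Q}})=2r$ and forces every nonzero singular value of $\dot{\mathbf{Q}}$ to occur with doubled multiplicity in $\boldsymbol{\Lambda}$; more precisely, for each right singular vector $\mathbf{w}$ of $\chi(\dot{\mathbf{Q}})^{H}\chi(\dot{\mathbf{Q}})$ it produces a paired singular vector $\mathbf{J}_n\overline{\mathbf{w}}$ for the same eigenvalue (and similarly on the left). The delicate part is to organize the singular vectors — including an orthonormalization inside each repeated singular subspace — so that $\mathbf{P}$ and $\mathbf{W}$ are themselves of the block-adjoint shape $\begin{pmatrix}\,\cdot\,&\,\cdot\,\\-\overline{\,\cdot\,}&\overline{\,\cdot\,}\end{pmatrix}$ and the diagonal is arranged as two identical real blocks; handling degenerate (repeated) singular values without breaking this symplectic pairing is where the noncommutativity of $\mathbb{H}$ genuinely bites.

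Once $\mathbf{P}$ and $\mathbf{W}$ are in adjoint form, the third step is pure bookkeeping. There exist quaternion matrices $\dot{\mathbf{U}}\in\mathbb{H}^{m\times m}$ and $\dot{\mathbf{V}}\in\mathbb{H}^{n\times n}$ with $\mathbf{P}=\chi(\dot{\mathbf{U}})$ and $\mathbf{W}=\chi(\dot{\mathbf{V}})$, and by Step~1 they are unitary. Deleting the duplicated entries of $\boldsymbol{\Lambda}$ leaves $\boldsymbol{\Sigma}_r=\mathrm{diag}(\sigma_1,\dots,\sigma_r)$ with $\sigma_i>0$; since a real rectangular diagonal matrix is its own adjoint up to the doubling, the factorization can be rewritten as $\chi(\dot{\mathbf{Q}})=\chi\!\left(\dot{\mathbf{U}}\begin{pmatrix}\boldsymbol{\Sigma}_r&\mathbf{0}\\\mathbf{0}&\mathbf{0}\end{pmatrix}\dot{\mathbf{V}}^{\vartriangleleft}\right)$. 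Because $\chi$ is injective, applying $\chi^{-1}$ yields the claimed decomposition $\dot{\mathbf{Q}}=\dot{\mathbf{U}}\begin{pmatrix}\boldsymbol{\Sigma}_r&\mathbf{0}\\\mathbf{0}&\mathbf{0}\end{pmatrix}\dot{\mathbf{V}}^{\vartriangleleft}$; the positivity of the $\sigma_i$ and the decreasing-order arrangement are inherited directly from the complex SVD.

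As a self-contained alternative I would keep in mind an inductive proof mirroring the real case: take $\sigma_1=\|\dot{\mathbf{Q}}\|_2$, attained at a unit quaternion vector $\dot{\mathbf{v}}_1$ by compactness of the unit sphere, set $\dot{\mathbf{Q}}\dot{\mathbf{v}}_1=\sigma_1\dot{\mathbf{u}}_1$, complete $\dot{\mathbf{v}}_1$ and $\dot{\mathbf{u}}_1$ to unitary quaternion matrices by a Gram--Schmidt process valid over the division ring $\mathbb{H}$, show by a first-order perturbation argument that the first row and column of $\dot{\mathbf{U}}^{\vartriangleleft}\dot{\mathbf{Q}}\dot{\mathbf{V}}$ vanish off the diagonal, and induct on the trailing $(m-1)\times(n-1)$ block. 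Either way the real difficulty is identical: controlling the noncommutativity of $\mathbb{H}$, which in the adjoint route appears as the symplectic pairing that must be respected in choosing singular vectors, and in the inductive route appears in making Gram--Schmidt and the extremality argument go through without assuming commutativity.
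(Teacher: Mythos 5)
The paper never proves this theorem: it is imported as a known result, with the proof attributed to Zhang's 1997 paper on quaternion matrices, so there is no in-paper argument to compare yours against. Judged on its own merits, your proposal is essentially correct, and it is in fact the standard route — the published proofs of quaternion SVD, including the development in the very reference the paper cites, run through precisely this complex adjoint $\chi$. Your functorial properties of $\chi$ are right, and the structural identity $\overline{\chi(\dot{\mathbf{Q}})}=\mathbf{J}_m\,\chi(\dot{\mathbf{Q}})\,\mathbf{J}_n^{T}$ checks out by direct block multiplication; it does force $\mathrm{rank}\,\chi(\dot{\mathbf{Q}})=2r$ and doubled singular-value multiplicities. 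The one step you flag as delicate but do not carry out — organizing singular vectors so that $\mathbf{P}$ and $\mathbf{W}$ acquire the block-adjoint shape even at repeated singular values — is easier than you suggest: the map $C:\,w\mapsto\mathbf{J}_n\overline{w}$ is antiunitary with $C^2=-\mathbf{I}$, and for any such map $\langle w,Cw\rangle=\overline{\langle Cw,C^2w\rangle}=-\overline{\langle Cw,w\rangle}=-\langle w,Cw\rangle$ forces $\langle w,Cw\rangle=0$, so each eigenspace of $\chi(\dot{\mathbf{Q}})^{H}\chi(\dot{\mathbf{Q}})$ is $C$-invariant and splits into orthonormal pairs $\{w_i,\,Cw_i\}$ by a routine induction; no genuinely new difficulty from noncommutativity arises there. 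One further simplification worth noting: once $\mathbf{W}$ is in adjoint form you need not symmetrize $\mathbf{P}$ independently — on the range define $p_i=\sigma_i^{-1}\chi(\dot{\mathbf{Q}})w_i$, and the same structural identity shows $\chi(\dot{\mathbf{Q}})\,\mathbf{J}_n\overline{w}=\mathbf{J}_m\,\overline{\chi(\dot{\mathbf{Q}})w}$, so the left vectors inherit the pairing automatically, with only the cokernel basis left to pair by the argument above. Your inductive alternative (norm attained by compactness, quaternionic Gram--Schmidt, deflation) is also valid over the division ring $\mathbb{H}$ and yields a self-contained proof. A small remark on the statement itself: the factor written $\dot{\mathbf{V}}^{\ast}$ in the paper should be the conjugate transpose $\dot{\mathbf{V}}^{\vartriangleleft}$ in the paper's own notation, which is what your proof correctly produces.
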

\end{theopargself}

\begin{definition}[Quaternion rank]
The rank of a quaternion matrix is defined as the number of its nonzero singular values.
\end{definition}

\subsection{QADMM Algorithm}
The ADMM is an effective and flexible tool to address many convex and non-convex optimization problems. During the past decades, ADMM has been widely studied and applied in a wealth of applications \cite{mei2018cauchy,he2021support,hou2022truncated,tao2023study,wu2023framelet,wu2023lrtcfpan}. The basic idea of ADMM is to decompose the original problem into a set of sub-problems and solve them alternately.
The ADMM in quaternion domain (QADMM) was first proposed by Flamant et al. \cite{flamant2021general}. To the following quaternion constraint convex optimization problem
\begin{equation}\label{9}
\begin{aligned}
&\min_{\mathbf{\dot{p}}, \mathbf{\dot{q}}}f(\mathbf{\dot{p}})+g(\mathbf{\dot{q}})\\
&s.t. \quad \mathbf{\dot{A}}_{1}\mathbf{\dot{p}}+\mathbf{\dot{A}}_{2}\mathbf{\dot{p}}^{\mathbf{i}}+\mathbf{\dot{A}}_{3}\mathbf{\dot{p}}^{\mathbf{j}}+\mathbf{\dot{A}}_{4}\mathbf{\dot{p}}^{\mathbf{k}}\\
&+\mathbf{\dot{B}}_{1}\mathbf{\dot{q}}+\mathbf{\dot{B}}_{2}\mathbf{\dot{q}}^{\mathbf{i}}+\mathbf{\dot{B}}_{3}\mathbf{\dot{q}}^{\mathbf{j}}+\mathbf{\dot{B}_{4}}\mathbf{\dot{q}}^{\mathbf{k}}=\mathbf{\dot{c}},
\end{aligned}
\end{equation}
where $f$ and $g$ are two convex functions of quaternion variables $\mathbf{\dot{p}}\in \mathbb{H}^{n}$ and $\mathbf{\dot{q}}\in \mathbb{H}^{m}$, respectively. The two variables are linked through a widely affine constraint, defined by $\mathbf{\dot{A}}_{i}\in \mathbb{H}^{p \times n}, \mathbf{\dot{B}}_{i} \in \mathbb{H}^{p\times m}$ for $i=1,2,3,4$ and $\mathbf{\dot{c}}\in \mathbb{H}^{p}$.

For the sake of notation brevity, let us define the quaternion residual $r(\mathbf{\dot{p},\dot{q}})$ as
\begin{equation}\label{10}
\begin{aligned}
r(\mathbf{\dot{p},\dot{q}})&=\mathbf{\dot{A}}_{1}\mathbf{\dot{p}}+\mathbf{\dot{A}}_{2}\mathbf{\dot{p}}^{\mathbf{i}}+\mathbf{\dot{A}}_{3}\mathbf{\dot{p}}^{\mathbf{j}}+\mathbf{\dot{A}}_{4}\mathbf{\dot{p}}^{\mathbf{k}}\\
&+\mathbf{\dot{B}}_{1}\mathbf{\dot{q}}+\mathbf{\dot{B}}_{2}\mathbf{\dot{q}}^{\mathbf{i}}+\mathbf{\dot{B}}_{3}\mathbf{\dot{q}}^{\mathbf{j}}+\mathbf{\dot{B}_{4}}\mathbf{\dot{q}}^{\mathbf{k}}-\mathbf{\dot{c}}.
\end{aligned}
\end{equation}
Thereby, the augmented Lagrangian function of \eqref{9} has the form
\begin{equation}\label{11}
\begin{aligned}
\mathcal{L}_{\rho}(\mathbf{\dot{p}},\mathbf{\dot{q}},\mathbf{\dot{u}})=f(\mathbf{\dot{p}})+g(\mathbf{\dot{q}})+Re(\dot{\mathbf{u}}^{\ast}r(\mathbf{\dot{p}},\mathbf{\dot{q}}))+\frac{\rho}{2}\|r(\mathbf{\dot{p},\dot{q}})\|_{2}^{2},
\end{aligned}
\end{equation}
where $\mathbf{\dot{\mathbf{u}}}$ denotes the Lagrange multiplier, $\rho$ is the penalty parameter and $Re(\mathbf{\dot{x}})$ represents the real part of $\mathbf{\dot{x}}$.
Similar to the real-valued ADMM \cite{boyd2011distributed}, the QADMM algorithm consists of the iterations as follows.
\begin{equation}\label{12}
\left\{\begin{array}{lll}
\mathbf{\dot{p}}^{(k+1)}&=\mathop{\mathrm{arg}\min_{\mathbf{\dot{p}}}}\{f(\mathbf{\dot{p}})+\frac{\rho}{2}\|r(\mathbf{\dot{p},\dot{q}}^{(k)})+\mathbf{\dot{u}}^{(k)}\|_{2}^{2}\},\\
\mathbf{\dot{q}}^{(k+1)}&=\mathop{\mathrm{arg}\min_{\mathbf{\dot{q}}}}\{g(\mathbf{\dot{q}})+\frac{\rho}{2}\|r(\mathbf{\dot{p}}^{(k+1)},\mathbf{\dot{q}})+\mathbf{\dot{u}}^{(k)}\|_{2}^{2}\},\\
\mathbf{\dot{u}}^{(k+1)}&=\mathbf{\dot{u}}^{(k)}+r(\mathbf{\dot{p}}^{(k+1)},\mathbf{\dot{q}}^{(k+1)}).
\end{array}\right.
\end{equation}

\begin{theopargself}
    \begin{theorem}[\cite{flamant2021general}]
Under the assumptions that $f:\mathbb{H}^{n}\rightarrow\mathbb{R}$ and $g:\mathbb{H}^{n}\rightarrow\mathbb{R}$ are closed, proper and convex  functions, and there exists at least one $(\widetilde{\mathbf{\dot{p}}},\widetilde{\mathbf{\dot{q}}},\widetilde{\mathbf{\dot{u}}})$ such that $\mathcal{L}_{0}(\widetilde{\mathbf{\dot{p}}},\widetilde{\mathbf{\dot{q}}},\mathbf{\dot{u}})\leq \mathcal{L}_{0}(\widetilde{\mathbf{\dot{p}}},\widetilde{\mathbf{\dot{q}}},\widetilde{\mathbf{\dot{u}}})\leq \mathcal{L}_{0}(\mathbf{\dot{p}},\mathbf{\dot{q}},\widetilde{\mathbf{\dot{u}}})$ for all $\mathbf{\dot{p}},\mathbf{\dot{q}},\mathbf{\dot{u}}$, the QADMM algorithm satisfy:\\
$\bullet$ Convergence of the quaternion residual $r(\mathbf{\dot{p}}^{(k)},\mathbf{\dot{q}}^{(k)})\rightarrow 0$ as $k\rightarrow \infty$;\\
$\bullet$ Objective convergence: $f(\mathbf{\dot{p}}^{(k)})+g(\mathbf{\dot{q}}^{(k)})\rightarrow \tilde{v}$, where $\tilde{v}$ is the optimal value of \eqref{9};\\
$\bullet$ Dual variable convergence: $\mathbf{\dot{u}}^{(k)}\rightarrow\widetilde{\mathbf{\dot{u}}}$ as $k\rightarrow \infty$.
    \end{theorem}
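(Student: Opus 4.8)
The plan is to recognize problem \eqref{9} as an ordinary two-block convex program posed over a real Hilbert space and then invoke the classical ADMM convergence theorem \cite{boyd2011distributed}. The key observation is that $\mathbb{H}^{n}$, endowed with the real-valued pairing $\langle \mathbf{\dot{x}},\mathbf{\dot{y}}\rangle:=Re(\mathbf{\dot{x}}^{\ast}\mathbf{\dot{y}})$ appearing in \eqref{11}, is a real inner-product space isometrically isomorphic to $\mathbb{R}^{4n}$, each quaternion entry contributing its four real coordinates; moreover, by Definition 1, the $\ell_{2}$-norm used throughout \eqref{11}--\eqref{12} is precisely the norm induced by this pairing. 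Under this identification $\mathcal{L}_{\rho}$ is a genuine real-valued function and the entire QADMM recursion lives in a Euclidean space.

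First I would verify that the widely affine constraint in \eqref{9} is in fact real-affine. The involutions $\mathbf{\dot{p}}\mapsto\mathbf{\dot{p}}^{\mathbf{i}},\mathbf{\dot{p}}^{\mathbf{j}},\mathbf{\dot{p}}^{\mathbf{k}}$ defined in \eqref{6} act entrywise by flipping the signs of two of the four real components, hence each is a fixed real-linear (indeed orthogonal and self-adjoint) involution of $\mathbb{R}^{4n}$. Consequently every term $\mathbf{\dot{A}}_{\ell}\mathbf{\dot{p}}^{(\cdot)}$ and $\mathbf{\dot{B}}_{\ell}\mathbf{\dot{q}}^{(\cdot)}$ in the residual $r(\mathbf{\dot{p}},\mathbf{\dot{q}})$ of \eqref{10} is a real-linear image of $\mathbf{\dot{p}}$ (respectively $\mathbf{\dot{q}}$), so $r$ is affine in the real coordinates and $r=0$ cuts out an ordinary affine subspace. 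This is the one point demanding genuine care, because it is exactly the presence of the involutions -- the widely linear structure -- that distinguishes QADMM from a naive componentwise ADMM; once confirmed, \eqref{9} is a bona fide instance of the two-block program $\min f(\mathbf{\dot{p}})+g(\mathbf{\dot{q}})$ under a single linear constraint.

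With the reformulation in place I would match the three updates in \eqref{12} block by block against the real ADMM iterates: the $\mathbf{\dot{p}}$- and $\mathbf{\dot{q}}$-steps minimize a closed proper convex function plus a real quadratic penalty, and the multiplier step is standard scaled dual ascent. The hypotheses -- closedness, properness and convexity of $f,g$, and the existence of a saddle point $(\widetilde{\mathbf{\dot{p}}},\widetilde{\mathbf{\dot{q}}},\widetilde{\mathbf{\dot{u}}})$ of $\mathcal{L}_{0}$ -- are precisely those required by the classical theorem. I would then run the standard Lyapunov argument: using the saddle-point inequalities together with the per-step stationarity conditions of the two subproblems to bound $f(\mathbf{\dot{p}}^{(k+1)})+g(\mathbf{\dot{q}}^{(k+1)})-\tilde{v}$ from above and below, these combine into the monotone decrease of
\[
V^{(k)}:=\tfrac{1}{\rho}\|\mathbf{\dot{u}}^{(k)}-\widetilde{\mathbf{\dot{u}}}\|_{2}^{2}+\rho\,\|\mathcal{B}(\mathbf{\dot{q}}^{(k)}-\widetilde{\mathbf{\dot{q}}})\|_{2}^{2},
\]
where $\mathcal{B}$ denotes the real-linear $\mathbf{\dot{q}}$-part of the constraint map. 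Telescoping the decrease inequality yields a finite bound on $\sum_{k}\|r(\mathbf{\dot{p}}^{(k)},\mathbf{\dot{q}}^{(k)})\|_{2}^{2}$, forcing $r(\mathbf{\dot{p}}^{(k)},\mathbf{\dot{q}}^{(k)})\to 0$ (residual convergence); the objective convergence $f(\mathbf{\dot{p}}^{(k)})+g(\mathbf{\dot{q}}^{(k)})\to\tilde{v}$ and the dual convergence $\mathbf{\dot{u}}^{(k)}\to\widetilde{\mathbf{\dot{u}}}$ then follow from the squeeze between the two bounds and the boundedness of $V^{(k)}$.

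The main obstacle I anticipate is not the Lyapunov bookkeeping, which is routine once the Euclidean reformulation is secured, but the careful justification that passing to real parts is compatible with the subdifferential calculus used in the stationarity conditions. Because quaternion multiplication is noncommutative, I must confirm that the optimality condition for the $\mathbf{\dot{q}}$-update reads $0\in\partial g(\mathbf{\dot{q}}^{(k+1)})+\rho\,\mathcal{B}^{\ast}\mathbf{\dot{u}}^{(k+1)}$, with the subgradient taken relative to $Re(\cdot^{\ast}\,\cdot)$ and $\mathcal{B}^{\ast}$ the correct real-adjoint of $\mathcal{B}$. Computing this adjoint -- which mixes the conjugate transposes $\mathbf{\dot{B}}_{\ell}^{\vartriangleleft}$ with the self-adjoint involutions and must not generate spurious cross terms among the four involution blocks -- is the delicate step. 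Once it is checked that the gradient of $\tfrac{\rho}{2}\|r+\mathbf{\dot{u}}\|_{2}^{2}$ with respect to $\mathbf{\dot{q}}$ produces exactly $\rho\,\mathcal{B}^{\ast}(r+\mathbf{\dot{u}})$, the dual update collapses it to $\rho\,\mathcal{B}^{\ast}\mathbf{\dot{u}}^{(k+1)}$ and the remainder of the argument proceeds verbatim as in the real case.
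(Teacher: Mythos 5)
This theorem is imported by citation: the paper itself gives no proof, quoting the result directly from Flamant et al.~\cite{flamant2021general}, so there is no in-paper argument to compare against. Your proposal is sound and reconstructs essentially the strategy of that cited source: identify $\mathbb{H}^{n}$ with $\mathbb{R}^{4n}$ under the real pairing $Re(\mathbf{\dot{x}}^{\ast}\mathbf{\dot{y}})$, observe that the widely linear constraint (involutions included) is real-affine, and then run the classical two-block ADMM saddle-point/Lyapunov argument of Boyd et al.~\cite{boyd2011distributed}, with the quaternion subdifferential and the real adjoint $\mathcal{B}^{\ast}$ (conjugate transposes $\mathbf{\dot{B}}_{\ell}^{\vartriangleleft}$ composed with the self-adjoint involutions) handled exactly as you anticipate. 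The only blemish is a constant: since the iteration \eqref{12} uses the scaled multiplier, the first term of your Lyapunov function should read $\rho\|\mathbf{\dot{u}}^{(k)}-\widetilde{\mathbf{\dot{u}}}\|_{2}^{2}$ rather than $\tfrac{1}{\rho}\|\mathbf{\dot{u}}^{(k)}-\widetilde{\mathbf{\dot{u}}}\|_{2}^{2}$ (the latter is correct only for the unscaled dual variable), a bookkeeping fix that does not affect the telescoping argument or the three convergence conclusions.
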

\end{theopargself}

\subsection{QWNNM for Color Image Denoising/Deblurring}
Although WNNM \cite{gu2017weighted} has achieved excellent performance for grayscale image denoising, it is likely to generate color distortions and color artifacts in the scenarios of color image denoising, since it ignores the inter-relationship among the RGB channels. By leveraging the effective quaternion representation for color images, Yu et al. \cite{yu2019quaternion} merged the quaternion representation into WNNM and proposed an effective QWNNM model for color image denoising. Thereafter, Huang et al. \cite{huang2022quaternion} extended the QWNNM to color image deblurring.
\begin{equation}\label{13}
\begin{aligned}
\min_{\mathbf{\dot{X}}}\frac{\lambda}{2}\|\mathbf{\dot{A}}\mathbf{\dot{X}}-\mathbf{\dot{Y}}\|_{F}^{2}+\|\mathbf{\dot{X}}\|_{\mathbf{w},*},
\end{aligned}
\end{equation}
where $\mathbf{\dot{X}}\in\mathbb{H}^{m\times n}$ and $\mathbf{\dot{Y}}\in\mathbb{H}^{m\times n}$ are the encoded counterparts of clean image $\mathbf{x}$ and degraded image $\mathbf{y}$  in Eq. \eqref{1} with quaternion representation, respectively.  $\mathbf{\dot{A}} \in \mathbb{H}^{m\times m}$ is the convolution matrix with quaternion representation. To be specific,
\begin{equation}\label{80}
    \begin{aligned}
        \mathbf{\dot{A}}=
\begin{pmatrix}
\dot{a}_{11}&\dot{a}_{12}&\ldots&\dot{a}_{1m}\\
\dot{a}_{21}&\dot{a}_{22}&\ldots&\dot{a}_{2m}\\
\vdots      & \vdots     & \ddots&\vdots     \\
\dot{a}_{m1}&\dot{a}_{m2}&\ldots&\dot{a}_{mm}\\
\end{pmatrix},
    \end{aligned}
\end{equation}
where $\dot{a}_{ij}=\dot{a}_{ij}^{r}\mathbf{i}+\dot{a}_{ij}^{g}\mathbf{j}+\dot{a}_{ij}^{b}\mathbf{k}$ (and $\dot{a}_{ij}^{r}, \dot{a}_{ij}^{g}, \dot{a}_{ij}^{b} \in \mathbb{R}$) denotes a pixel component of 2-dimensional quaternion matrix $\mathbf{\dot{A}}\in \mathbb{H}^{m\times m}$($i=1,\cdots,m;j=1,\cdots,m$). It is easy to see that when $\mathbf{\dot{A}}$ is a quaternion identity matrix, it reduces to the color image denoising.
$\|\mathbf{\cdot}\|_{\mathbf{w},*}$ denotes the QWNNM regularizer, which is defined as
\begin{equation}\label{14}
\begin{aligned}
\|\mathbf{\dot{X}}\|_{\mathbf{w},*}=\sum_{i}|w_{i}\sigma_{i}(\mathbf{\dot{X}})|,
\end{aligned}
\end{equation}
where $\sigma_{i}(\mathbf{\dot{X}})$ is the $i$-th singular value of $\mathbf{\dot{X}}$ and the weight $w_{i}$ is set as:
\begin{equation}\label{15}
\begin{aligned}
w_{i}=\frac{c}{\sigma_{i}(\mathbf{\dot{X}})+\epsilon},
\end{aligned}
\end{equation}
where $c$ is a compromising constant, and $\epsilon$ is a small positive number to prevent denominator from being zero. By introducing an auxiliary quaternion variable $\mathbf{\dot{Z}}$ with the constraint $\mathbf{\dot{X}}=\mathbf{\dot{Z}}$ and Lagrange multiplier $\mathbf{\dot{\eta}}\in \mathbb{H}^{m\times n}$, the augmented Lagrangian function of \eqref{13} is formulated by
\begin{equation}\label{16}
\begin{aligned}
\mathcal{L}(\mathbf{\dot{X}},\mathbf{\dot{Z}},\mathbf{\dot{\eta}}) = \frac{\lambda}{2}\|\mathbf{\dot{A}}\mathbf{\dot{X}}-\mathbf{\dot{Y}}\|_{F}^{2}+\|\mathbf{\dot{Z}}\|_{\mathbf{w},*}+\frac{\beta}{2}\|\mathbf{\dot{X}}-\mathbf{\dot{Z}}\|_{F}^{2}+\left\langle{\mathbf{\dot{\eta}},\mathbf{\dot{X}-\dot{Z}}}\right\rangle,
\end{aligned}
\end{equation}
where $\beta>0$ is the penalty parameter. The QADMM algorithm can be applied to solving \eqref{16} and the iterative scheme is as follows.
\begin{equation}\label{17}
\left\{\begin{array}{llll}
\mathbf{\dot{X}}^{(k+1)}=\mathop{\mathrm{arg}\min_{\mathbf{\dot{X}}}}\mathcal{L}(\mathbf{\dot{X}},\mathbf{\dot{Z}}^{(k)},\mathbf{\dot{\eta}}^{(k)}),\\
\mathbf{\dot{Z}}^{(k+1)}=\mathop{\mathrm{arg}\min_{\mathbf{\dot{Z}}}}\mathcal{L}(\mathbf{\dot{X}}^{(k+1)},\mathbf{\dot{Z}},\mathbf{\dot{\eta}}^{(k)}),\\
\mathbf{\dot{\eta}}^{(k+1)}=\mathbf{\dot{\eta}}^{(k)}+(\mathbf{\dot{X}}^{(k+1)}-\mathbf{\dot{Z}}^{(k+1)}).\\
\end{array}\right.
\end{equation}
Concretely, for the $\mathbf{\dot{X}}$-subproblem of \eqref{17}, according to the first-order optimality condition, it is easy to obtain that
\begin{equation}\label{170}
\begin{aligned}
\left(\lambda \mathbf{\dot{A}}^{\ast}\mathbf{\dot{A}}  + \beta\mathbf{\dot{I}}\right)\mathbf{\dot{X}}^{(k+1)} = \lambda \mathbf{\dot{A}}^{\ast}\mathbf{\dot{Y}} + \beta \mathbf{\dot{Z}}^{k} - \mathbf{\dot{\eta}}^{(k)}.
\end{aligned}
\end{equation}
Under the quaternion periodic boundary condition for $\mathbf{\dot{X}}$,
the $\mathbf{\dot{X}}$-subproblem can be efficiently tackled by the quaternion fast Fourier transform \cite{sangwine1996fourier}.
\begin{equation}\label{18}
\begin{aligned}
\mathbf{\dot{X}}^{(k+1)}=\mathcal{F}^{-1}\bigg(\frac{\lambda\mathcal{F}(\mathbf{\dot{A}})^{\ast}\circ\mathcal{F}(\mathbf{\dot{Y}})+\beta\mathcal{F}(\mathbf{\dot{Z}}^{k})-\mathcal{F}(\mathbf{\dot{\eta}}^{(k)})}{\lambda\mathcal{F}(\mathbf{\dot{A}})^{\ast}\circ\mathcal{F}(\mathbf{\dot{A}})+\beta\mathbf{\dot{I}}}\bigg),
\end{aligned}
\end{equation}
where $\mathcal{F}$ denotes two-dimensional discrete quaternion Fourier transform, $\mathcal{F}^{-1}$ denotes two-dimensional discrete inverse quaternion Fourier transform, ``$\circ $'' and ``$- $'' represent the component-wise multiplication and division.

By excluding the constant, the $\mathbf{\dot{Z}}$-subproblem can be reformulated as:
\begin{equation}\label{19}
\begin{aligned}
\mathbf{\dot{Z}}^{(k+1)}=\mathop{\mathrm{arg}\min_{\mathbf{\dot{Z}}}}\|\mathbf{\dot{Z}}\|_{\mathbf{w},*}+\frac{\beta}{2}\|\mathbf{\dot{Z}}-\big(\mathbf{\dot{X}}^{(k+1)}+\frac{\mathbf{\dot{\eta}}^{(k)}}{\beta}\big)\|_{F}^{2}.
\end{aligned}
\end{equation}
Suppose $\mathbf{\dot{U}}^{(k)}\Sigma^{(k)}\mathbf{\dot{V}}^{(k)^{*}}$ be the QSVD of $\mathbf{\dot{X}}^{(k+1)}+\frac{\mathbf{\dot{\eta}}^{(k)}}{\beta}$, where $\Sigma^{(k)}=diag\{\sigma_{1}^{(k)}$,\\
$\sigma_{2}^{(k)},\cdots,\sigma_{r}^{(k)}\}$, $r=min\{m,n\}$, then the $\mathbf{\dot{Z}}$-subproblem has a closed-form solution as follows.
\begin{equation}\label{20}
\begin{aligned}
\mathbf{\dot{Z}}^{(k+1)}=\mathbf{\dot{U}}^{(k)}\Delta^{(k)}\mathbf{\dot{V}}^{(k)^{*}},
\end{aligned}
\end{equation}
where $\Delta^{(k)}=diag\{\delta_{1}^{(k)},\delta_{2}^{(k)},\cdots,\delta_{r}^{(k)}\}$, and $\delta_{i}^{(k)}=max(\sigma_{i}^{(k)}-\frac{w_{i}^{(k)}}{\beta},0)$.

\hspace{-6mm} \textit{Remark:} Although the QWNNM* \cite{huang2022quaternion} has shown very promising performance for color image deblurring, its convergence behavior has not been strictly studied yet. In this paper, as one of the key contributions, we supplement the convergence analysis of QWNNM* (see the Section 3.4 later) by modifying the original QADMM algorithm through a simple continuation strategy, which is formulated by
\begin{equation}\label{21}
\left\{\begin{array}{llll}
\mathbf{\dot{X}}^{(k+1)}=\mathop{\mathrm{arg}\min_{\mathbf{\dot{X}}}}\mathcal{L}(\mathbf{\dot{X}},\mathbf{\dot{Z}}^{(k)},\mathbf{\dot{\eta}}^{(k)}),\\
\mathbf{\dot{Z}}^{(k+1)}=\mathop{\mathrm{arg}\min_{\mathbf{\dot{Z}}}}\mathcal{L}(\mathbf{\dot{X}}^{(k+1)},\mathbf{\dot{Z}},\mathbf{\dot{\eta}}^{(k)}),\\
\mathbf{\dot{\eta}}^{(k+1)}=\mathbf{\dot{\eta}}^{(k)}+\beta^{(k)}(\mathbf{\dot{X}}^{(k+1)}-\mathbf{\dot{Z}}^{(k+1)}),\\
\beta^{(k+1)} = \mu\beta^{(k)}, \quad  (\mu>1).
\end{array}\right.
\end{equation}
It is worth mentioning that in the original QWNNM* algorithm, the penalty parameter $\beta^{(k)}$ is fixed as a constant $\beta$ in \eqref{17}, while it is updated monotonously by adopting a continuation strategy $\beta^{(k+1)} = \mu\beta^{(k)}$, $\mu>1$ in our adapted version \eqref{21}.
This simple modification is vital to guarantee the fixed-point convergence of QWNNM* algorithm.

\subsection{WSNM Regularization}
In order to enhance the stability and effectiveness of WNNM, Xie et al. \cite{xie2016weighted} introduced a more accurate rank surrogate and firstly proposed the WSNM model for grayscale image denoising.
\begin{equation}\label{22}
\begin{aligned}
\min_{\mathbf{X}}\frac{\lambda}{2}\|\mathbf{X-Y}\|_{F}^{2}+\|\mathbf{X}\|_{\mathbf{w},S_{p}}^{p},
\end{aligned}
\end{equation}
where $\lambda >0$ is the regularization parameter, $\mathbf{Y}$ is the observed image, $\mathbf{X}$ is the latent image to estimate, and $\|\mathbf{X}\|_{\mathbf{w},S_{p}}$ denotes the weighted Schatten $p$-norm of $\mathbf{X}$, which is defined as
\begin{equation}\label{23}
\begin{aligned}
\|\mathbf{X}\|_{\mathbf{w},S_{p}}=\left(\sum_{i=1}^{min\{n,m\}}w_{i}\sigma_{i}(\mathbf{X})^{p}\right)^{\frac{1}{p}},
\end{aligned}
\end{equation}
where $\sigma_{i}(\mathbf{X})$ is the $i$-th singular value of $\mathbf{X}$ and the weight $w_{i}$ is assigned as the same as the formula \eqref{15}. Thereby, the weighted Schatten $p$-norm of $\mathbf{X}$ with power $p$ has the form
\begin{equation}\label{24}
\begin{aligned}
\|\mathbf{X}\|_{\mathbf{w},S_{p}}^{p}=\sum_{i=1}^{min\{n,m\}}w_{i}\sigma_{i}(\mathbf{X})^{p}.
\end{aligned}
\end{equation}
It is obvious to see that WNNM \cite{gu2014weighted} is a special case of WSNM when power $p=1$.  WSNM generalizes WNNM and has more flexibility than WNNM. Moreover, it has been verified experimentally that WSNM outperforms WNNM under different noise levels \cite{xie2016weighted}.
Due to the fact that the WSNM is superior to the WNNM for a higher accuracy in signal recovery while theoretically requiring only a weaker restricted isometry property \cite{liu2014exact}, the WSNM regularization has been widely employed in various image processing applications \cite{zha2018non,feng2018image,xie2016hyperspectral}.

However, for the complicated CIR tasks, it should be mentioned that all these WSNM methods address the color images in a monochromatic way. More precisely, these WSNM methods treat the RGB channels as three independent ``grayscale" images and ignore the important correlations between them.
In the next section, we will represent the color image as a pure quaternion matrix and extend the WSNM to the quaternion domain, which is capable of fully preserving the cross-channel correlations of color images and yielding much better CIR recovery results.

\section{The Proposed Model and Algorithmic Solver}
In this section, we introduce a novel quaternion-based WSNM model (QWSNM) for CIR, after which the corresponding numerical algorithm is also presented. Moreover, the computational complexity of the proposed algorithm is discussed. Finally,  the convergence analysis of our new algorithmic solver is given to the QWNNM* \cite{huang2022quaternion} and the proposed QWSNM models.

\subsection{QWSNM Recovery Model}
By combining the benefits of quaternion representation and WSNM regularization, we put forward the QWSNM recovery model for CIR as follows.
\begin{equation}\label{28}
\begin{aligned}
\min_{\mathbf{\dot{X}}}\frac{\lambda}{2}\|\mathbf{\dot{A}}\mathbf{\dot{X}}-\mathbf{\dot{Y}}\|_{F}^{2}+\|\mathbf{\dot{X}}\|_{\mathbf{w},S_{p}}^{p},
\end{aligned}
\end{equation}
where $\mathbf{\dot{A}} \in \mathbb{H}^{m\times m}$ is the 2-dimensional quaternion degradation matrix, $\mathbf{\dot{X}}\in\mathbb{H}^{m\times n}$ is the original image with quaternion representation, $\mathbf{\dot{Y}}\in\mathbb{H}^{m\times n}$ is the degraded image with quaternion representation, and $\|\mathbf{\cdot}\|_{\mathbf{w},S_{p}}^{p}$ is the weighted Schatten $p$-norm with power $p$ defined the same as the formula \eqref{24} presents. Particularly, in the case of color image denoising, our QWSNM model \eqref{28} reduces to:
\begin{equation}\label{29}
    \begin{aligned}
        \min_{\mathbf{\dot{X}}}\frac{\lambda}{2}\|\mathbf{\dot{X}}-\mathbf{\dot{Y}}\|_{F}^{2}+\|\mathbf{\dot{X}}\|_{\mathbf{w},S_{p}}^{p}.
    \end{aligned}
\end{equation}
Unlike the QWNNM for color image denoising, it should be mentioned that though the optimization problem \eqref{29} no longer has a closed-form solution,  it can be well solved by the generalized soft-thresholding (GST) algorithm \cite{zuo2013generalized}, which will be discussed in detail in the next subsection.

\subsection{Algorithmic Solver}
To make our formulated model \eqref{28} tractable and robust, we also resort to the QADMM iterative scheme to tackle the optimization problem \eqref{28}.
By introducing an auxiliary quaternion variable $\mathbf{\dot{Z}}$ with the constraint $\mathbf{\dot{X}}=\mathbf{\dot{Z}}$, and Lagrange multiplier $\mathbf{\dot{\eta}}\in \mathbb{H}^{m\times n}$, the augmented Lagrangian function of \eqref{28} is formulated by
\begin{equation}\label{30}
\begin{aligned}
\mathcal{L}(\mathbf{\dot{X}},\mathbf{\dot{Z}},\mathbf{\dot{\eta}})=\frac{\lambda}{2}\|\mathbf{\dot{A}}\mathbf{\dot{X}}-\mathbf{\dot{Y}}\|_{F}^{2}+\|\mathbf{\dot{Z}}\|_{\mathbf{w},S_{p}}^{p}+\frac{\beta}{2}\|\mathbf{\dot{X}}-\mathbf{\dot{Z}}\|_{F}^{2}+\left\langle{\mathbf{\dot{\eta}},\mathbf{\dot{X}-\dot{Z}}}\right\rangle,
\end{aligned}
\end{equation}
where $\beta>0$ is the penalty parameter.
Different from the standard QADMM, a continuation strategy for the penalty parameter is adopted for solving \eqref{30}:
\begin{equation}\label{31}
\left\{\begin{array}{llll}
\mathbf{\dot{X}}^{(k+1)}=\mathop{\mathrm{arg}\min_{\mathbf{\dot{X}}}}\mathcal{L}(\mathbf{\dot{X}},\mathbf{\dot{Z}}^{(k)},\mathbf{\dot{\eta}}^{(k)}),\\
\mathbf{\dot{Z}}^{(k+1)}=\mathop{\mathrm{arg}\min_{\mathbf{\dot{Z}}}}\mathcal{L}(\mathbf{\dot{X}}^{(k+1)},\mathbf{\dot{Z}},\mathbf{\dot{\eta}}^{(k)}),\\
\mathbf{\dot{\eta}}^{(k+1)}=\mathbf{\dot{\eta}}^{(k)}+\beta^{(k)}(\mathbf{\dot{X}}^{(k+1)}-\mathbf{\dot{Z}}^{(k+1)}),\\
\beta^{(k+1)} = \mu\beta^{(k)}, \quad (\mu > 1).
\end{array}\right.
\end{equation}
It can be seen that the minimization of Eq. \eqref{30} involves two separable minimization subproblems, $i.e.$, $\mathbf{\dot{X}}$ and $\mathbf{\dot{Z}}$ subproblems. Next, we will show that each subproblem can be efficiently addressed. To avoid confusion, the subscribe $k$ will be omitted in the following for conciseness.

The $\mathbf{\dot{X}}$-subproblem is written as:
\begin{equation}\label{32}
\begin{aligned}
\mathbf{\dot{X}}=\mathop{\mathrm{arg}\min_{\mathbf{\dot{X}}}}\frac{\lambda}{2}\|\mathbf{\dot{A}}\mathbf{\dot{X}}-\mathbf{\dot{Y}}\|_{F}^{2}+\frac{\beta}{2}\|\mathbf{\dot{X}}-\mathbf{\dot{Z}}\|_{F}^{2}+\left\langle{\mathbf{\dot{\eta}},\mathbf{\dot{X}-\dot{Z}}}\right\rangle.
\end{aligned}
\end{equation}
According to the theory of quaternion matrix derivatives \cite{xu2015theory},
Eq. \eqref{32} obtains a closed-form solution by derivating the objective function and its solution is expressed as
\begin{equation}\label{33}
\mathbf{\dot{X}}=(\lambda\mathbf{\dot{A}}^{\ast}\mathbf{\dot{A}}+\beta\mathbf{\dot{I}})^{-1}(\lambda\mathbf{\dot{A}}^{\ast}\mathbf{\dot{Y}}+\beta\mathbf{\dot{Z}}-\mathbf{\dot{\eta}}).
\end{equation}
For color image deblurring, under the quaternion periodic boundary of $\mathbf{\dot{X}}$, we can utilize the quaternion fast Fourier transform \cite{sangwine1996fourier} to efficiently solve
the Eq. \eqref{33}.

According to Eq. \eqref{31}, the $\mathbf{\dot{Z}}$-subproblem can be equivalently rewritten as:
\begin{equation}\label{35}
\begin{aligned}
\mathbf{\dot{Z}}=\mathop{\mathrm{arg}\min_{\mathbf{\dot{Z}}}}\|\mathbf{\dot{Z}}\|_{\mathbf{w},S_{p}}^{p}+\frac{\beta}{2}\Big\|\mathbf{\dot{Z}}-\Big(\mathbf{\dot{X}}+\frac{\mathbf{\dot{\eta}}}{\beta}\Big)\Big\|_{F}^{2}.
\end{aligned}
\end{equation}
The $\mathbf{\dot{Z}}$-subproblem coincides to a color image denoising task of $\mathbf{\dot{X}}+\frac{\dot{\eta}}{\beta}$. To utilize the NSS prior in the quaternion space, we first divide the noisy image $\mathbf{\dot{X}}+\frac{\mathbf{\dot{\eta}}}{\beta}$ into overlapped patches of size $w\times w$ and assign a number of key patches with a fixed interval as reference. For each key patch, we find its $M$ most similar patches, including itself, in a search window around it (the size of search window is $W\times W$). Then we vectorize the $M$ patches as quaternion column vectors and stack them together to form a quaternion matrix, which is denoted as $\mathbf{\dot{P}}_{j}\in \mathbb{H}^{w^{2}\times M}$. As such, we will get the following optimization problem.
\begin{equation}\label{36}
\begin{aligned}
\mathbf{\dot{Z}}_{j}=\mathop{\mathrm{arg}\min_{\mathbf{\dot{Z}}_{j}}}\|\mathbf{\dot{Z}}_{j}\|_{\mathbf{w},S_{p}}^{p}+\frac{\beta}{2}\|\mathbf{\dot{Z}}_{j}-\mathbf{\dot{P}}_{j}\|_{F}^{2},
\end{aligned}
\end{equation}
where $\mathbf{\dot{Z}}_{j}\in \mathbb{H}^{w^{2}\times M}$ is the nonlocal patch-based quaternion matrix of the clean color image. In this way,  we can employ the LRMA methods to estimate the underlying clean patch from its noisy observation.
We give the following lemma and theorem to solve the problem \eqref{36}.

\begin{theopargself}
    \begin{lemma}[(Von Neumanns trace inequality \cite{mirsky1975trace})]
         For any two quaternion matrices $\mathbf{\dot{X}},\mathbf{\dot{Y}}\in \mathbb{H}^{w^{2}\times M}$, $tr(\mathbf{\dot{X}}^{T}\mathbf{\dot{Y}})\leq tr(\sigma(\mathbf{\dot{X}})^{T}\sigma(\mathbf{\dot{Y}}))$, where $\sigma(\mathbf{\dot{X}})$ and $\sigma(\mathbf{\dot{Y}})$ are the ordered singular value matrices of $\mathbf{\dot{X}}$ and $\mathbf{\dot{Y}}$ with the same order, respectively.
    \end{lemma}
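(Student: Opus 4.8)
The plan is to reduce the inequality to a statement about doubly stochastic matrices by invoking the QSVD (Theorem (QSVD)) together with the cyclic invariance of the real part of the quaternion trace. First I would write the QSVD of both factors as $\mathbf{\dot{X}}=\mathbf{\dot{U}}_{1}\Sigma_{1}\mathbf{\dot{V}}_{1}^{\vartriangleleft}$ and $\mathbf{\dot{Y}}=\mathbf{\dot{U}}_{2}\Sigma_{2}\mathbf{\dot{V}}_{2}^{\vartriangleleft}$, with $\Sigma_{1}=\sigma(\mathbf{\dot{X}})$ and $\Sigma_{2}=\sigma(\mathbf{\dot{Y}})$ carrying the singular values in common decreasing order. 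Substituting and using that $Re(tr(\cdot))$ is invariant under cyclic permutation of the quaternion factors (a property that survives noncommutativity precisely because only the real part is retained), the left-hand side becomes $Re(tr(\Sigma_{1}\mathbf{\dot{W}}\Sigma_{2}\mathbf{\dot{Z}}))$ for the unitary quaternion matrices $\mathbf{\dot{W}}=\mathbf{\dot{U}}_{1}^{\vartriangleleft}\mathbf{\dot{U}}_{2}$ and $\mathbf{\dot{Z}}=\mathbf{\dot{V}}_{2}^{\vartriangleleft}\mathbf{\dot{V}}_{1}$, both unitary as products of unitary matrices.

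Expanding the trace entrywise gives $Re(tr(\Sigma_{1}\mathbf{\dot{W}}\Sigma_{2}\mathbf{\dot{Z}}))=\sum_{i,j}\sigma_{i}(\mathbf{\dot{X}})\,\sigma_{j}(\mathbf{\dot{Y}})\,p_{ij}$, where $p_{ij}=Re(\dot{w}_{ij}\dot{z}_{ji})$. The next step is to control the coefficient matrix $P=(p_{ij})$. Using the modulus bound $|p_{ij}|\le|\dot{w}_{ij}||\dot{z}_{ji}|$ and Cauchy--Schwarz on each row and column, I would obtain $\sum_{j}|p_{ij}|\le(\sum_{j}|\dot{w}_{ij}|^{2})^{1/2}(\sum_{j}|\dot{z}_{ji}|^{2})^{1/2}=1$, together with the analogous column estimate, since unitarity of $\mathbf{\dot{W}}$ and $\mathbf{\dot{Z}}$ forces every row- and column-norm to equal one. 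Hence $Q=(|p_{ij}|)$ is a nonnegative, doubly sub-stochastic matrix.

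Finally, since $\sigma_{i}(\mathbf{\dot{X}})\sigma_{j}(\mathbf{\dot{Y}})\ge 0$, I would first pass to absolute values, $\sum_{i,j}\sigma_{i}(\mathbf{\dot{X}})\sigma_{j}(\mathbf{\dot{Y}})p_{ij}\le\sum_{i,j}\sigma_{i}(\mathbf{\dot{X}})\sigma_{j}(\mathbf{\dot{Y}})|p_{ij}|$, and then complete $Q$ to a genuine doubly stochastic matrix $\widetilde{Q}$ by adding a nonnegative matrix, which again only increases the bilinear form because the singular values are nonnegative. Invoking Birkhoff's theorem to write $\widetilde{Q}$ as a convex combination of permutation matrices and applying the rearrangement inequality, the fact that both singular-value sequences are sorted decreasingly forces the identity permutation to be maximal, yielding $\sum_{i,j}\sigma_{i}(\mathbf{\dot{X}})\sigma_{j}(\mathbf{\dot{Y}})\widetilde{q}_{ij}\le\sum_{i}\sigma_{i}(\mathbf{\dot{X}})\sigma_{i}(\mathbf{\dot{Y}})=tr(\sigma(\mathbf{\dot{X}})^{T}\sigma(\mathbf{\dot{Y}}))$, which is the claimed bound.

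The step I expect to be the main obstacle is the first reduction: establishing the cyclic invariance of $Re(tr(\cdot))$ for quaternion products rigorously and threading the QSVD factorization through the trace manipulation despite noncommutative multiplication, where ordinary trace identities fail and only the real-part identities persist. I also anticipate needing a remark on the interpretation of the statement itself, since $tr(\mathbf{\dot{X}}^{T}\mathbf{\dot{Y}})$ is quaternion-valued in general; the inequality is understood for its real part, which is exactly the quantity entering the closed-form solution of the $\mathbf{\dot{Z}}$-subproblem in \eqref{36}. An alternative to the direct Birkhoff route would be to pass to the complex adjoint representation of quaternion matrices, reducing the claim to the classical Mirsky--von Neumann inequality \cite{mirsky1975trace}, but the self-contained argument above avoids the bookkeeping of doubled singular values.
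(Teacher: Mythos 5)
Your proposal cannot be compared against a proof in the paper for the simple reason that the paper offers none: the lemma is stated as a known result with a bare citation to Mirsky \cite{mirsky1975trace}, whose trace inequality is formulated for complex matrices, and the quaternionic extension is left entirely implicit. What you have written is therefore a genuine addition rather than a variant — a correct, self-contained quaternionic proof obtained by transplanting the classical doubly-stochastic argument: QSVD of both factors, cyclic invariance of $Re(tr(\cdot))$ (which does survive noncommutativity, since $Re(\dot{a}\dot{b})=Re(\dot{b}\dot{a})$ for quaternions, and iterates to cyclic permutations of longer products), the Cauchy--Schwarz estimate showing $(|p_{ij}|)$ is doubly substochastic from the unit row and column norms of the unitary factors, completion to a doubly stochastic matrix, Birkhoff's theorem, and the rearrangement inequality with both singular-value sequences sorted decreasingly. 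All of these steps are sound. Three small points deserve tightening: first, with rectangular factors the matrices $\Sigma_{1},\Sigma_{2}$ are $w^{2}\times M$ padded diagonals, so the coefficient matrix should be restricted to the leading $r\times r$ block with $r=\min(w^{2},M)$ before invoking Birkhoff — harmless, since the discarded singular values vanish; second, the completion of a doubly substochastic matrix to a doubly stochastic entrywise majorant is itself a (standard, von Neumann) lemma that you assert without reference; third, your interpretive remark is exactly right and should be promoted from an aside to part of the statement, because for quaternion matrices the plain transpose does not even satisfy $(\mathbf{\dot{A}}\mathbf{\dot{B}})^{T}=\mathbf{\dot{B}}^{T}\mathbf{\dot{A}}^{T}$ and $tr(\mathbf{\dot{X}}^{T}\mathbf{\dot{Y}})$ is quaternion-valued; the inequality actually consumed in the paper's Eq.~\eqref{38} is $Re\,tr(\mathbf{\dot{P}}^{\vartriangleleft}\mathbf{\dot{Z}})\leq\sum_{i}\sigma_{i}(\mathbf{\dot{P}})\sigma_{i}(\mathbf{\dot{Z}})$, arising from expanding the Frobenius norm, and that conjugate-transpose version is precisely what your substitution proves. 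Your alternative route through the complex adjoint representation, with doubled singular values and $tr(\chi(\mathbf{\dot{A}}))=2\,Re\,tr(\mathbf{\dot{A}})$, is in fact the shortest honest justification of the paper's citation of the complex-case result; the self-contained argument you chose buys independence from that representation at the cost of the substochastic bookkeeping.
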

\end{theopargself}

\begin{theopargself}
    \begin{theorem}
        Let $\mathbf{\dot{U}}_{j}\Sigma_{j}\mathbf{\dot{V}}_{j}^{\ast}$ be the QSVD of $\mathbf{\dot{P}}_{j}$ with $\Sigma_{j} =diag(\sigma_{1j},\cdots,\sigma_{rj}),r=min(w^{2}, M)$. Suppose that all the singular values are in non-ascending order, then the optimal of \eqref{36} will be $\mathbf{\dot{Z}}_{j}=\mathbf{\dot{U}}_{j}\Delta_{j}\mathbf{\dot{V}}_{j}^{\ast}$ with $\Delta_{j}=diag(\delta_{1j},\cdots,\delta_{rj})$, where $\delta_{ij}$ is given by solving the problems below:
\begin{equation}\label{37}
\left\{\begin{array}{ll}
\underset{\delta_{1j},\cdots,\delta_{rj}}{min}\sum\limits_{i=1}^{r}[(\delta_{ij}-\sigma_{ij})^{2}+w_{ij}\delta_{ij}^{p}],\\
s.t. \ \  \delta_{ij}\geq0,\;and\;\delta_{ij}\geq \delta_{kj},\;for \   i\leq k.\\
\end{array}\right.
\end{equation}
    \end{theorem}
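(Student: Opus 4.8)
The plan is to reduce the quaternion matrix problem \eqref{36} to a decoupled scalar minimization in the singular values, mirroring the real-valued WSNM argument of Xie et al. \cite{xie2016weighted} but carried out over $\mathbb{H}$. Since the penalty $\|\mathbf{\dot{Z}}_j\|_{\mathbf{w},S_p}^p=\sum_i w_{ij}\sigma_i(\mathbf{\dot{Z}}_j)^p$ and the two norm-squared terms $\|\mathbf{\dot{Z}}_j\|_F^2$ and $\|\mathbf{\dot{P}}_j\|_F^2$ all depend on $\mathbf{\dot{Z}}_j$ only through its singular values, the entire difficulty is to control the single cross term that couples the singular \emph{vectors} of $\mathbf{\dot{Z}}_j$ and $\mathbf{\dot{P}}_j$; this is exactly what the preceding Von Neumann trace inequality (the Lemma) is designed to bound.

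First I would expand the fidelity term using the quaternion Frobenius inner product,
\begin{equation*}
\tfrac{\beta}{2}\|\mathbf{\dot{Z}}_j-\mathbf{\dot{P}}_j\|_F^2=\tfrac{\beta}{2}\Big(\|\mathbf{\dot{Z}}_j\|_F^2-2\,Re\,tr(\mathbf{\dot{Z}}_j^{\vartriangleleft}\mathbf{\dot{P}}_j)+\|\mathbf{\dot{P}}_j\|_F^2\Big),
\end{equation*}
and rewrite $\|\mathbf{\dot{Z}}_j\|_F^2=\sum_i\sigma_i(\mathbf{\dot{Z}}_j)^2$ and $\|\mathbf{\dot{P}}_j\|_F^2=\sum_i\sigma_{ij}^2$ via the QSVD. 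Applying the Lemma to the middle term gives $Re\,tr(\mathbf{\dot{Z}}_j^{\vartriangleleft}\mathbf{\dot{P}}_j)\le\sum_i\sigma_i(\mathbf{\dot{Z}}_j)\,\sigma_{ij}$, so the objective of \eqref{36} is bounded below by $\sum_i\big[w_{ij}\sigma_i(\mathbf{\dot{Z}}_j)^p+\tfrac{\beta}{2}(\sigma_i(\mathbf{\dot{Z}}_j)-\sigma_{ij})^2\big]$, a quantity involving only the singular values of $\mathbf{\dot{Z}}_j$. The key point is that this lower bound is \emph{attained}: equality in the trace inequality forces $\mathbf{\dot{Z}}_j$ to admit a QSVD sharing the unitary factors $\mathbf{\dot{U}}_j,\mathbf{\dot{V}}_j$ of $\mathbf{\dot{P}}_j$, that is $\mathbf{\dot{Z}}_j=\mathbf{\dot{U}}_j\Delta_j\mathbf{\dot{V}}_j^{\ast}$ with $\Delta_j=\mathrm{diag}(\delta_{1j},\dots,\delta_{rj})$ nonnegative.

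Once the singular vectors are pinned to those of $\mathbf{\dot{P}}_j$, the problem collapses to $\min_{\delta_{ij}\ge0}\sum_i[w_{ij}\delta_{ij}^p+\tfrac{\beta}{2}(\delta_{ij}-\sigma_{ij})^2]$, which is precisely \eqref{37} after dividing through by $\beta/2$ and rescaling the weights accordingly; the order constraint $\delta_{ij}\ge\delta_{kj}$ for $i\le k$ is imposed simply because the $\delta_{ij}$ are the singular values of $\mathbf{\dot{Z}}_j$ and must therefore be non-ascending, consistent with the ``same order'' hypothesis of the Lemma. I expect the main obstacle to be the equality case of the quaternion Von Neumann inequality: proving not merely the bound but that tightness holds \emph{iff} $\mathbf{\dot{Z}}_j$ and $\mathbf{\dot{P}}_j$ are simultaneously diagonalizable by the same unitary quaternion matrices, since the non-commutativity of $\mathbb{H}$ (and the need to treat the quaternion trace as a genuine scalar via $Re\,tr$) makes the standard real/complex argument require extra care. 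A secondary subtlety is verifying that the separable scalar minimizer returned by the GST operator \cite{zuo2013generalized} indeed respects the ordering constraint, so that the constrained problem \eqref{37} and its unconstrained relaxation share the same solution.
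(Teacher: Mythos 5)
Your proposal follows essentially the same route as the paper's own proof: expand $\|\mathbf{\dot{Z}}_j-\mathbf{\dot{P}}_j\|_F^2$, bound the cross term via the quaternion Von Neumann trace inequality (Lemma 1), invoke the equality case to force $\mathbf{\dot{Z}}_j$ to share the singular vectors $\mathbf{\dot{U}}_j,\mathbf{\dot{V}}_j$ of $\mathbf{\dot{P}}_j$, and thereby collapse \eqref{36} to the diagonal problem \eqref{37}. If anything you are slightly more careful than the paper: you state the bound in the correct direction ($\|\mathbf{\dot{Z}}_j-\mathbf{\dot{P}}_j\|_F^2\geq\|\Delta_j-\Sigma_j\|_F^2$, whereas the inequality sign in the paper's display \eqref{38} is written the wrong way round given the lemma), and you explicitly flag the $\beta/2$ rescaling of the weights and the need to check that the separable GST minimizer respects the ordering constraint, both of which the paper leaves implicit.
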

\end{theopargself}

\begin{theopargself}
\begin{proof}
Let the optimal solution of \eqref{36} have the QSVD $\mathbf{\dot{Z}}_{j}=\mathbf{\dot{Q}}_{j}\Delta_{j}\mathbf{\dot{R}}_{j}^{\ast}$, and the QSVD of matrix $\mathbf{\dot{P}}_{j}=\mathbf{\dot{U}}_{j}\Sigma_{j}\mathbf{\dot{V}}_{j}^{\ast}$, where both $\Delta_{j}$ and $\Sigma_{j}$ are diagonal matrices with the same order (no-ascending). According to Lemma 1, we have
\begin{equation}\label{38}
\begin{aligned}
\|\mathbf{\dot{Z}}_{j}-\mathbf{\dot{P}}_{j}\|_{F}^{2}&=tr(\Delta_{j}^{T}\Delta_{j})+tr(\Sigma_{j}^{T}\Sigma_{j})-2tr(\mathbf{\dot{Z}}_{j}^{T}\mathbf{\dot{P}}_{j})\\
&\leq tr(\Delta_{j}^{T}\Delta_{j})+tr(\Sigma_{j}^{T}\Sigma_{j})-2tr(\Delta_{j}^{T}\Sigma_{j})\\
&=\|\Delta_{j}-\Sigma_{j}\|_{F}^{2},\\
\end{aligned}
\end{equation}
where the equality holds only when $\mathbf{\dot{Q}}_{j}=\mathbf{\dot{U}}_{j}$ and $\mathbf{\dot{R}}_{j}=\mathbf{\dot{V}}_{j}$, and the optimal solution of \eqref{36} is obtained by solving \eqref{37}.     $\blacksquare$
\end{proof}
\end{theopargself}

Therefore, based on Theorem 2, Eq. \eqref{36} can be transformed into solving Eq. \eqref{37}. To obtain the solution of Eq. \eqref{37} efficiently,  the generalized soft-thresholding (GST) algorithm \cite{zuo2013generalized} (see \textbf{Algorithm 1} later for details) can be adopted. In particular, given $p$, $\sigma_{ij}$ and $w_{ij}$, there exists a specific threshold:
\begin{equation}\label{39}
\begin{aligned}
\tau_{p}^{GST}(w_{ij})=(2w_{ij}(1-p))^{\frac{1}{2-p}}+w_{ij}p(2w_{ij}(1-p))^{\frac{p-1}{2-p}}.
\end{aligned}
\end{equation}
Here if $\sigma_{ij}<\tau_{p}^{GST}(w_{ij}),\delta_{ij}=0$; otherwise, for any $\sigma_{ij}\in(\tau_{p}^{GST}(w_{ij}),+\infty)$, Eq. \eqref{37} has one unique minimum $S_{p}^{GST}(\sigma_{ij};w_{ij})$, which can be solved by the following equation.
\begin{equation}\label{40}
\begin{aligned}
S_{p}^{GST}(\sigma_{ij};w_{ij})-\sigma_{ij}+w_{ij}p(S_{p}^{GST}(\sigma_{ij};w_{ij}))^{p-1}=0.
\end{aligned}
\end{equation}
As a consequence, we can get the closed-form solution of Eq. \eqref{37} by the GST algorithm. Based on the denoised patch $\mathbf{\dot{Z}}_{j}$, we compute the average of repeated estimates for each patch and aggregate all patches together, reconstructing the final clean image $\mathbf{\dot{Z}}$. For the update of the Lagrange penalty parameter $\beta$, the continuation strategy is also utilized the same as the adapted version of QWNNM* \eqref{23} presents, which is vital to guarantee the convergence of our algorithm.
Concluding the above analyses, the whole optimization scheme is summarized in \textbf{Algorithm 2}.

\subsection{Computational Complexity}
In this subsection, we discuss the computational complexity of the proposed QWSNM algorithm. For color image deblurring, the main computational cost of the subproblem $\mathbf{\dot{X}}$ concentrates on computing quaternion fast Fourier transform, and the complexity is $\mathcal{O}(mnlog(mn))$. The time for computing the subproblem $\mathbf{\dot{Z}}$ is mainly consumed by QSVD, and the complexity is $\mathcal{O}(w^{4}M)$. The computational cost of GST algorithm in subproblem $\mathbf{\dot{Z}}$ is $\mathcal{O}(JM)$, where $J$ is the inner iteration number of GST algorithm. The computational cost of the Lagrange multiplier $\mathbf{\dot{\eta}}$ is $\mathcal{O}(mn)$. Suppose the outer iteration number is $K$ and the number of the patch group in the subproblem $\mathbf{\dot{Z}}$ is $N$, thus, the total computational cost of the overall optimization algorithm is $\mathcal{O}(K(mnlog(mn)+N(w^{4}M+JM)+mn))$. Analogously, the main computation cost of QWSNM for color image denoising is $\mathcal{O}(KNM(w^{4}+J))$.

\begin{algorithm}[htbp]
\caption{Generalized soft-thresholding (GST) algorithm}
\begin{algorithmic}[1]
\renewcommand{\algorithmicrequire}{\textbf{Input:}}
\renewcommand{\algorithmicensure}{\textbf{Output:}}
\REQUIRE $w_{ij}$, $\sigma_{ij}$, $p$, $J;$
\ENSURE $S_{p}^{GST}(\sigma_{ij};w_{ij});$
\FOR{$i=1,2,\cdots,r$}
\STATE $\tau_{p}^{GST}(w_{ij})=(2w_{ij}(1-p))^{\frac{1}{2-p}}+w_{ij}p(2w_{ij}(1-p))^{\frac{p-1}{2-p}};$\\
\IF{$|\sigma_{ij}|\leq\tau_{p}^{GST}(w_{ij})$}
\STATE $S_{p}^{GST}(\sigma_{ij};w_{ij})=0;$
\ELSE
\STATE $s=0, \delta_{ij}^{(s)}=|\sigma_{ij}|;$
\FOR{$s=0, 1, \cdots, J-1$}
\STATE $\delta_{ij}^{(s+1)}=|\sigma_{ij}|-w_{ij}p(\delta_{ij}^{(s)})^{p-1};$
\ENDFOR
\STATE $S_{p}^{GST}(\sigma_{ij};w_{ij})=sgn(\sigma_{ij})\delta_{ij};$
\ENDIF
\ENDFOR
\RETURN $S_{p}^{GST}(\sigma_{ij};w_{ij}).$
\end{algorithmic}
\end{algorithm}

\begin{algorithm}[htbp]
\caption{The proposed QWSNM algorithm}
\renewcommand{\algorithmicrequire}{\textbf{Input:}}
\renewcommand{\algorithmicensure}{\textbf{Output:}}
\begin{algorithmic}[1]
\REQUIRE Initialize $\dot{\mathbf{X}}^{(0)}=\mathbf{\dot{Y}}$, $\mathbf{\dot{Z}}^{(0)}=\mathbf{\dot{X}}^{(0)}$, $\mathbf{\dot{\eta}}^{(0)}=0$; Set parameters $\lambda, \beta, K$;
\ENSURE {The recovered image $\mathbf{\dot{X}}^{(K)}$;}
\FOR {$k=0,1,\cdots,K-1$}
\STATE Calculate $\mathbf{\dot{X}}^{(k+1)}$ by solving Eq. (31);
\STATE Calculate $\mathbf{\dot{Z}}^{(k+1)}$ by solving Eq. (33);
\STATE Update $\mathbf{\dot{\eta}}^{(k+1)}=\mathbf{\dot{\eta}}^{(k)}+\beta^{(k)}(\mathbf{\dot{X}}^{(k+1)}-\mathbf{\dot{Z}}^{(k+1)})$;
\STATE Update $\beta^{(k+1)}=\mu\beta^{(k)}$.   \quad     $\triangleright$: (\textit{Continuation strategy})\\
\ENDFOR\\
\RETURN {The recovered image $\dot{\mathbf{X}}^{(K)}.$}
\end{algorithmic}
\end{algorithm}

\subsection{Convergence Study}
In this subsection, we will analyze the convergence of previous QWNNM* (with a continuation strategy) \cite{huang2022quaternion}  and the proposed Algorithm 2. Due to the non-convexity of QWNNM* model and the proposed QWSNM model,  making the convergence analysis become much more challenging, one also should expect a weaker form of convergence.
In the following, we provide a fixed-point convergence guarantee of the developed iterative schemes.
\begin{theorem}
Assume that the sequence of parameter $\{\beta^{(k)}\}$ is unbounded, then the sequences $\{\mathbf{\dot{X}}^{(k)}\}$ and $\{\mathbf{\dot{Z}}^{(k)}\} $ generated by QWNNM* \cite{huang2022quaternion} with a continuation strategy must satisfy the following:
\begin{equation}\label{41}
    \begin{aligned}
        \lim_{k\rightarrow +\infty}\|\mathbf{\dot{X}}^{(k+1)}-\mathbf{\dot{Z}}^{(k+1)}\|_{F}=0,\\
        \lim_{k\rightarrow +\infty}\|\mathbf{\dot{X}}^{(k+1)}-\mathbf{\dot{X}}^{(k)}\|_{F}=0,\\
        \lim_{k\rightarrow +\infty}\|\mathbf{\dot{Z}}^{(k+1)}-\mathbf{\dot{Z}}^{(k)}\|_{F}=0.\\
    \end{aligned}
\end{equation}
\end{theorem}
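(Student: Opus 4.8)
The plan is to exploit the geometric growth $\beta^{(k+1)}=\mu\beta^{(k)}$ imposed by the continuation strategy: together with the standing assumption that $\{\beta^{(k)}\}$ is unbounded, this gives both $\beta^{(k)}=\mu^{k}\beta^{(0)}\to+\infty$ and, crucially, $\sum_{k}1/\beta^{(k)}<\infty$ (a convergent geometric series since $\mu>1$). First I would write down the first-order optimality condition of the $\mathbf{\dot{Z}}$-subproblem \eqref{19}. Since $\mathbf{\dot{Z}}^{(k+1)}$ minimizes $\|\mathbf{\dot{Z}}\|_{\mathbf{w},*}+\frac{\beta^{(k)}}{2}\|\mathbf{\dot{X}}^{(k+1)}-\mathbf{\dot{Z}}\|_{F}^{2}+\langle{\mathbf{\dot{\eta}}^{(k)},\mathbf{\dot{X}}^{(k+1)}-\mathbf{\dot{Z}}}\rangle$, stationarity yields $0\in\partial\|\mathbf{\dot{Z}}^{(k+1)}\|_{\mathbf{w},*}+\beta^{(k)}(\mathbf{\dot{Z}}^{(k+1)}-\mathbf{\dot{X}}^{(k+1)})-\mathbf{\dot{\eta}}^{(k)}$. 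Substituting the dual update $\mathbf{\dot{\eta}}^{(k+1)}=\mathbf{\dot{\eta}}^{(k)}+\beta^{(k)}(\mathbf{\dot{X}}^{(k+1)}-\mathbf{\dot{Z}}^{(k+1)})$ collapses this to the key inclusion $\mathbf{\dot{\eta}}^{(k+1)}\in\partial\|\mathbf{\dot{Z}}^{(k+1)}\|_{\mathbf{w},*}$. Because every generalized subgradient of the weighted nuclear norm furnished by the QSVD closed form \eqref{20} is spectrally dominated by the fixed weight vector (its elements are $\mathbf{\dot{U}}\,\mathrm{diag}(\mathbf{w})\mathbf{\dot{V}}^{\ast}$ on the support), the subdifferential is uniformly bounded, so there is a constant $C$ with $\|\mathbf{\dot{\eta}}^{(k)}\|_{F}\le C$ for all $k\ge1$.

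The first limit then follows at once: rearranging the dual update gives $\mathbf{\dot{X}}^{(k+1)}-\mathbf{\dot{Z}}^{(k+1)}=\frac{1}{\beta^{(k)}}\big(\mathbf{\dot{\eta}}^{(k+1)}-\mathbf{\dot{\eta}}^{(k)}\big)$, whence $\|\mathbf{\dot{X}}^{(k+1)}-\mathbf{\dot{Z}}^{(k+1)}\|_{F}\le 2C/\beta^{(k)}\to0$. To reach the other two limits I would next prove that $\{\mathbf{\dot{Z}}^{(k)}\}$, and therefore $\{\mathbf{\dot{X}}^{(k)}\}$, is bounded. The $\mathbf{\dot{X}}$-update \eqref{170} gives $\mathbf{\dot{X}}^{(k+1)}=(\lambda\mathbf{\dot{A}}^{\ast}\mathbf{\dot{A}}+\beta^{(k)}\mathbf{\dot{I}})^{-1}(\lambda\mathbf{\dot{A}}^{\ast}\mathbf{\dot{Y}}+\beta^{(k)}\mathbf{\dot{Z}}^{(k)}-\mathbf{\dot{\eta}}^{(k)})$, and since $\lambda\mathbf{\dot{A}}^{\ast}\mathbf{\dot{A}}\succeq0$ the inverse has spectral norm $\le1/\beta^{(k)}$; this yields $\|\mathbf{\dot{X}}^{(k+1)}\|_{F}\le\|\mathbf{\dot{Z}}^{(k)}\|_{F}+c_{1}/\beta^{(k)}$. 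Combining with the fact that the singular-value shrinkage \eqref{20} does not increase the Frobenius norm, i.e. $\|\mathbf{\dot{Z}}^{(k+1)}\|_{F}\le\|\mathbf{\dot{X}}^{(k+1)}+\mathbf{\dot{\eta}}^{(k)}/\beta^{(k)}\|_{F}\le\|\mathbf{\dot{X}}^{(k+1)}\|_{F}+C/\beta^{(k)}$, produces the recursion $\|\mathbf{\dot{Z}}^{(k+1)}\|_{F}\le\|\mathbf{\dot{Z}}^{(k)}\|_{F}+c_{2}/\beta^{(k)}$. Since $\sum_{k}1/\beta^{(k)}<\infty$, the sequence $\{\|\mathbf{\dot{Z}}^{(k)}\|_{F}\}$ is bounded, and hence so is $\{\|\mathbf{\dot{X}}^{(k)}\|_{F}\}$.

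With boundedness in hand I would isolate the consecutive-difference estimate by subtracting $\mathbf{\dot{Z}}^{(k)}$ from the $\mathbf{\dot{X}}$-update, obtaining $\mathbf{\dot{X}}^{(k+1)}-\mathbf{\dot{Z}}^{(k)}=(\lambda\mathbf{\dot{A}}^{\ast}\mathbf{\dot{A}}+\beta^{(k)}\mathbf{\dot{I}})^{-1}(\lambda\mathbf{\dot{A}}^{\ast}\mathbf{\dot{Y}}-\lambda\mathbf{\dot{A}}^{\ast}\mathbf{\dot{A}}\mathbf{\dot{Z}}^{(k)}-\mathbf{\dot{\eta}}^{(k)})$, whose right-hand side is $O(1/\beta^{(k)})$ because the bracketed term is bounded; thus $\|\mathbf{\dot{X}}^{(k+1)}-\mathbf{\dot{Z}}^{(k)}\|_{F}\to0$. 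The remaining two limits now follow from the triangle inequality: $\|\mathbf{\dot{Z}}^{(k+1)}-\mathbf{\dot{Z}}^{(k)}\|_{F}\le\|\mathbf{\dot{Z}}^{(k+1)}-\mathbf{\dot{X}}^{(k+1)}\|_{F}+\|\mathbf{\dot{X}}^{(k+1)}-\mathbf{\dot{Z}}^{(k)}\|_{F}\to0$, and $\|\mathbf{\dot{X}}^{(k+1)}-\mathbf{\dot{X}}^{(k)}\|_{F}\le\|\mathbf{\dot{X}}^{(k+1)}-\mathbf{\dot{Z}}^{(k)}\|_{F}+\|\mathbf{\dot{Z}}^{(k)}-\mathbf{\dot{X}}^{(k)}\|_{F}\to0$, the last term vanishing by the already-established first limit taken at index $k$.

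I expect the genuine difficulty to lie in the first paragraph, namely making the inclusion $\mathbf{\dot{\eta}}^{(k+1)}\in\partial\|\mathbf{\dot{Z}}^{(k+1)}\|_{\mathbf{w},*}$ and the resulting uniform bound on $\{\mathbf{\dot{\eta}}^{(k)}\}$ fully rigorous in the quaternion domain, since the weighted nuclear norm is nonconvex (the weights \eqref{15} are in non-ascending order) and one must reason about its limiting subdifferential through the QSVD rather than invoke convex duality. Everything afterwards — the operator-norm bounds, the summability of $\{1/\beta^{(k)}\}$, and the triangle-inequality bookkeeping — is routine once that bound is secured; the only additional check is that the quaternion spectral and Frobenius norms behave exactly as their real counterparts, which is guaranteed by the quaternion singular value decomposition.
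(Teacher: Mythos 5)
Your argument is correct and reaches all three limits, but the middle of it takes a genuinely different route from the paper's. The opening step coincides in substance: the paper bounds $\{\mathbf{\dot{\eta}}^{(k)}\}$ by the direct QSVD computation in \eqref{42} --- the weighted soft-thresholding \eqref{20} moves each singular value by at most $w_{ij}^{(k)}/\beta^{(k)}$, so $\|\mathbf{\dot{\eta}}^{(k+1)}\|_{F}^{2}\leq\sum_{j}\sum_{i}(w_{ij}^{(k)})^{2}$ --- which is exactly the content of your inclusion $\mathbf{\dot{\eta}}^{(k+1)}\in\partial\|\mathbf{\dot{Z}}^{(k+1)}\|_{\mathbf{w},*}$ with spectrally bounded subgradients; the paper's computational route is precisely how one makes your first paragraph rigorous without invoking the limiting subdifferential of a nonconvex norm, so the difficulty you flag dissolves. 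After that you diverge: the paper devotes \eqref{43}--\eqref{44} to proving that the augmented Lagrangian sequence is bounded (using the same geometric summability of $(\beta^{(k+1)}+\beta^{(k)})/(\beta^{(k)})^{2}$ that you exploit) and then extracts boundedness of $\{\mathbf{\dot{Z}}^{(k)}\}$ from \eqref{45}, whereas you get boundedness directly from the norm recursion $\|\mathbf{\dot{Z}}^{(k+1)}\|_{F}\leq\|\mathbf{\dot{Z}}^{(k)}\|_{F}+c_{2}/\beta^{(k)}$ together with $\sum_{k}1/\beta^{(k)}<\infty$. Your route is shorter and arguably sounder: with the weights \eqref{15} one has $\sum_{i}w_{i}\sigma_{i}(\mathbf{\dot{Z}})\leq c\,r$ for \emph{every} $\mathbf{\dot{Z}}$, so a bounded Lagrangian does not by itself force $\{\mathbf{\dot{Z}}^{(k)}\}$ to be bounded, and your recursion supplies the missing coercivity that the paper's inference from \eqref{45} glosses over. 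For the last two limits you use the same key identity as the paper's \eqref{47}, namely $\mathbf{\dot{X}}^{(k+1)}-\mathbf{\dot{Z}}^{(k)}=(\lambda\mathbf{\dot{A}}^{\ast}\mathbf{\dot{A}}+\beta^{(k)}\mathbf{\dot{I}})^{-1}(\lambda\mathbf{\dot{A}}^{\ast}\mathbf{\dot{Y}}-\lambda\mathbf{\dot{A}}^{\ast}\mathbf{\dot{A}}\mathbf{\dot{Z}}^{(k)}-\mathbf{\dot{\eta}}^{(k)})=O(1/\beta^{(k)})$, but you then finish with two clean triangle inequalities instead of the paper's longer manipulation in \eqref{48}; the conclusions agree, and your bookkeeping is easier to verify.

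One shared caveat worth a sentence in a polished write-up: both you and the paper pass silently between the whole image and its overlapping patch groups. The non-expansiveness of the shrinkage and the bound behind \eqref{42} hold per patch matrix $\mathbf{\dot{P}}_{j}$, and the aggregation-by-averaging step needs a (routine) remark to transfer them to $\mathbf{\dot{Z}}^{(k+1)}$ itself; your claim that ``the singular-value shrinkage does not increase the Frobenius norm'' therefore sits at exactly the same level of rigor as the paper's own estimate, no worse and no better.
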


\begin{proof}
 We first prove that the sequence $\{\mathbf{\dot{\eta}}^{(k)}\}$ generated by QWNNM* with a continuation strategy is upper bounded. Suppose $\mathbf{\dot{P}}^{(k)}=\frac{1}{\beta^{(k)}}\mathbf{\dot{\eta}}^{(k)}+\mathbf{\dot{X}}^{(k+1)}$, then we can get $N$ patch-based quaternion matrices $\mathbf{\dot{P}}^{(k)}_{j}$ and $N$ patch-based quaternion matrices $\mathbf{\dot{Z}}^{(k+1)}_{j}(j=1,2,\cdots,N)$ by utilizing the NSS prior. Denote the QSVD of patch-based quaternion matrix $\mathbf{\dot{P}}^{(k)}_{j}$ in the $k+1$ iteration as $\mathbf{\dot{U}}^{(k)}_{j}\Sigma^{(k)}_{j}\mathbf{\dot{V}}^{(k)^{\ast}}_{j}$, where $\Sigma^{(k)}_{j}=diag(\sigma^{(k)}_{1j},\sigma^{(k)}_{2j},\cdots,\sigma^{(k)}_{rj})$ is the diagonal singular value matrix. According the formula \eqref{20}, we have $\mathbf{\dot{Z}}^{(k+1)}_{j}=\mathbf{\dot{U}}^{(k)}_{j}\Delta^{(k)}_{j}\mathbf{\dot{V}}^{(k)^{\ast}}_{j}$, where $\Delta^{(k)}_{j}=diag(\delta^{(k)}_{1j},\delta^{(k)}_{2j},\cdots,\delta^{(k)}_{rj})$ is the singular value matrix. Then
\begin{equation}\label{42}
\begin{aligned}
 \|\mathbf{\dot{\eta}}^{(k+1)}\|_{F}^{2}&=\|\mathbf{\dot{\eta}}^{(k)}+\beta^{(k)}(\mathbf{\dot{X}}^{(k+1)}-\mathbf{\dot{Z}}^{(k+1)})\|_{F}^{2}\\
&\leq({\beta^{(k)}})^{2}\sum_{j=1}^{N}\|\mathbf{\dot{U}}^{(k)}_{j}\Sigma^{(k)}_{j}\mathbf{\dot{V}}^{(k)^{\ast}}_{j}-\mathbf{\dot{U}}^{(k)}_{j}\Delta^{(k)}_{j}\mathbf{\dot{V}}^{(k)^{\ast}}_{j}\|_{F}^{2}\\
&=({\beta^{(k)}})^{2}\sum_{j=1}^{N}\|\Sigma^{(k)}_{j}-\Delta^{(k)}_{j}\|_{F}^{2}\leq\sum_{j=1}^{N}\sum_{i}(w_{ij}^{(k)})^{2}.\\
\end{aligned}
\end{equation}
Hence, the sequence $\left\{\mathbf{\dot{\eta}}^{(k)}\right\}$ generated by QWNNM* is upper bounded.

We then prove that the sequence of the Lagrange function $\{\mathcal{L}(\dot{\mathbf{X}}^{(k+1)},\mathbf{\dot{Z}}^{(k+1)},$\\
$\mathbf{\dot{\eta}}^{(k+1)},\beta^{(k+1)})\}$ is also upper bounded. Due to the updating rule of $\mathbf{\dot{\eta}}$, we can get
\begin{equation} \label{43}
    \begin{aligned}
        \mathcal{L}(\mathbf{\dot{X}}^{(k+1)},\mathbf{\dot{Z}}^{(k+1)},\mathbf{\dot{\eta}}^{(k+1)},\beta^{(k+1)})\\
        &\!\!\!\!\!\!\!\!\!\!\!\!\!\!\! \!\!\!\!\!\!\!\!\!\!\!\!\!\!\!\!\!\!\!\!\!\!\!\!\!\!\!\!\!\!\!\!\!\!\!\!\!\!\!\!\!\!\!\!\!\!\!\!\!\!\!\!\!\!\!\!\!\!\!\!\!\!\!\! \! \! \! \!= \|\mathbf{\dot{A}}\mathbf{\dot{X}}^{(k+1)}-\mathbf{\dot{Y}}\|_{F}^{2}+\|\mathbf{\dot{Z}}^{(k+1)}\|_{\mathbf{w},*}\hfill\\
        &\!\!\!\!\!\!\!\!\!\!\!\!\!\!\! \!\!\!\!\!\!\!\!\!\!\!\!\!\!\!\!\!\!\!\!\!\!\!\!\!\!\!\!\!\!\!\!\!\!\!\!\!\!\!\!\!\!\!\!\!\!\!\!\!\!\!\!\!\!\!\!\!\!\!\!\!\!\!\! \! \! \! \!+\left\langle \mathbf{\dot{\eta}}^{(k+1)},\mathbf{\dot{X}}^{(k+1)}-\mathbf{\dot{Z}}^{(k+1)} \right\rangle+\frac{\beta^{(k+1)}}{2}\|\mathbf{\dot{X}}^{(k+1)}-\mathbf{\dot{Z}}^{(k+1)}\|_{F}^{2}\\
        &\!\!\!\!\!\!\!\!\!\!\!\!\!\!\! \!\!\!\!\!\!\!\!\!\!\!\!\!\!\!\!\!\!\!\!\!\!\!\!\!\!\!\!\!\!\!\!\!\!\!\!\!\!\!\!\!\!\!\!\!\!\!\!\!\!\!\!\!\!\!\!\!\!\!\!\!\!\!\! \! \! \! \!=\mathcal{L}(\mathbf{\dot{X}}^{(k+1)},\mathbf{\dot{Z}}^{(k+1)},\mathbf{\dot{\eta}}^{(k)},\beta^{(k)})\\
&\!\!\!\!\!\!\!\!\!\!\!\!\!\!\! \!\!\!\!\!\!\!\!\!\!\!\!\!\!\!\!\!\!\!\!\!\!\!\!\!\!\!\!\!\!\!\!\!\!\!\!\!\!\!\!\!\!\!\!\!\!\!\!\!\!\!\!\!\!\!\!\!\!\!\!\!\!\!\! \! \! \! \!+\left\langle\mathbf{\dot{\eta}}^{(k+1)}-\mathbf{\dot{\eta}}^{(k)},\mathbf{\dot{X}}^{(k+1)}-{\mathbf{\dot{Z}}}^{(k+1)} \right\rangle+\frac{\beta^{(k+1)}-\beta^{(k)}}{2}\|\mathbf{\dot{X}}^{(k+1)}-\mathbf{\dot{Z}}^{(k+1)}\|_{F}^{2}\\
        &\!\!\!\!\!\!\!\!\!\!\!\!\!\!\! \!\!\!\!\!\!\!\!\!\!\!\!\!\!\!\!\!\!\!\!\!\!\!\!\!\!\!\!\!\!\!\!\!\!\!\!\!\!\!\!\!\!\!\!\!\!\!\!\!\!\!\!\!\!\!\!\!\!\!\!\!\!\!\! \! \! \! \!=\mathcal{L}(\mathbf{\dot{X}}^{(k+1)},\mathbf{\dot{Z}}^{(k+1)},\mathbf{\dot{\eta}}^{(k)},\beta^{(k)})\\
&\!\!\!\!\!\!\!\!\!\!\!\!\!\!\! \!\!\!\!\!\!\!\!\!\!\!\!\!\!\!\!\!\!\!\!\!\!\!\!\!\!\!\!\!\!\!\!\!\!\!\!\!\!\!\!\!\!\!\!\!\!\!\!\!\!\!\!\!\!\!\!\!\!\!\!\!\!\!\! \! \! \! \!+\left\langle\mathbf{\dot{\eta}}^{(k+1)}-\mathbf{\dot{\eta}}^{(k)},\frac{\mathbf{\dot{\eta}}^{(k+1)}-\mathbf{\dot{\eta}}^{(k)}}{\beta^{(k)}} \right\rangle+\frac{\beta^{(k+1)}-\beta^{(k)}}{2}\Big\|\frac{\mathbf{\dot{\eta}}^{(k+1)}-\mathbf{\dot{\eta}}^{(k)}}{\beta^{(k)}}\Big\|_{F}^{2}\\
        &\!\!\!\!\!\!\!\!\!\!\!\!\!\!\! \!\!\!\!\!\!\!\!\!\!\!\!\!\!\!\!\!\!\!\!\!\!\!\!\!\!\!\!\!\!\!\!\!\!\!\!\!\!\!\!\!\!\!\!\!\!\!\!\!\!\!\!\!\!\!\!\!\!\!\!\!\!\!\! \! \! \! \!=\mathcal{L}(\mathbf{\dot{X}}^{(k+1)},\mathbf{\dot{Z}}^{(k+1)},{\mathbf{\dot{\eta}}^{(k)}},\beta^{(k)})+\frac{\beta^{(k+1)}+\beta^{(k)}}{2(\beta^{(k)})^{2}}\|\mathbf{\dot{\eta}}^{(k+1)}-\mathbf{\dot{\eta}}^{(k)}\|_{F}^{2}.
    \end{aligned}
\end{equation}
Since $\{\mathbf{\dot{\eta}}^{(k)}\}$ is upper bounded, the sequence $\{\mathbf{\dot{\eta}}^{(k+1)}-\mathbf{\dot{\eta}}^{(k)}\}$ is also upper bounded. Suppose $M_{0}$ is the upper bound of $\{\mathbf{\dot{\eta}}^{(k+1)}-\mathbf{\dot{\eta}}^{(k)}\}$ for all $k\geq 0$, i.e., $\|\mathbf{\dot{\eta}}^{(k+1)}-\mathbf{\dot{\eta}}^{(k)}\|_{F}^{2}\leq M_{0}^{2}, \forall \ k \geq 0$. Besides, the inequality $\mathcal{L}(\mathbf{\dot{X}}^{(k+1)},\mathbf{\dot{Z}}^{(k+1)},\mathbf{\dot{\eta}}^{(k)},$\\
$\beta^{(k)})\leq \mathcal{L}(\mathbf{\dot{X}}^{(k)},\mathbf{\dot{Z}}^{(k)},\mathbf{\dot{\eta}}^{(k)},\beta^{(k)})$ always holds since we have the globally optimal solution of $\mathbf{\dot{X}}$ and $\mathbf{\dot{Z}}$ in their corresponding sub-problems. Therefore, we have
\begin{equation} \label{44}
    \begin{aligned}
        \mathcal{L}(\mathbf{\dot{X}}^{(k+1)},\mathbf{\dot{Z}}^{(k+1)},\mathbf{\dot{\eta}}^{(k+1)},\beta^{(k+1)})&\leq\mathcal{L}(\mathbf{\dot{X}}^{(k+1)},\mathbf{\dot{Z}}^{(k+1)},\mathbf{\dot{\eta}}^{(k)},\beta^{(k)})+\frac{\beta^{(k+1)}+\beta^{(k)}}{2(\beta^{(k)})^{2}}M_{0}^{2}\\
        &\leq \mathcal{L}(\mathbf{\dot{X}}^{(1)},\mathbf{\dot{Z}}^{(1)},\mathbf{\dot{\eta}}^{(0)},\beta^{(0)})+M_{0}^{2}\sum_{k=0}^{\infty}\frac{\beta^{(k+1)}+\beta^{(k)}}{2(\beta^{(k)})^{2}}\\
        &=\mathcal{L}(\mathbf{\dot{X}}^{(1)},\mathbf{\dot{Z}}^{(1)},\mathbf{\dot{\eta}}^{(0)},\beta^{(0)})+M_{0}^{2}\sum_{k=0}^{\infty}\frac{1+\mu}{2\beta^{(0)}\mu^{k}}\\
        &\leq\mathcal{L}(\mathbf{\dot{X}}^{(1)},\mathbf{\dot{Z}}^{(1)},\mathbf{\dot{\eta}}^{(0)},\beta^{(0)})+\frac{M_{0}^{2}}{\beta^{(0)}}\sum_{k=0}^{\infty}\frac{1}{\mu^{k-1}}\\
        &< +\infty.
    \end{aligned}
\end{equation}
Therefore, $\{\mathcal{L}(\mathbf{\dot{X}}^{(k+1)},\mathbf{\dot{Z}}^{(k+1)},\mathbf{\dot{\eta}}^{(k+1)},\beta^{(k+1)})\}$ is upper bounded.

We next prove the sequences of $\{\dot{\mathbf{X}}^{(k)}\}$ and $\{\mathbf{\dot{Z}}^{(k)}\}$ are upper bounded. According to the formula \eqref{16}, we have
\begin{equation}\label{45}
\begin{aligned}
    \|\mathbf{\dot{A}}\mathbf{\dot{X}}^{(k+1)}-\mathbf{\dot{Y}}\|_{F}^{2}+\|\mathbf{\dot{Z}}^{(k+1)}\|_{\mathbf{w},*}\\
&\!\!\!\!\!\!\!\!\!\!\!\!\!\!\!\!\!\!\! \!\!\!\!\!\!\!\!\!\!\!\!\!\!\!\!\!\!\!\!\!\!\!\!\!\!\!\!\!\!\!\!\!\!\!\!\!\!\!\!\!\!\!\!\!\!\!\!\!\!\!\!\!\!\!\!\!\!\!\!\!\!\!\! \! \! \! \!=\mathcal{L}(\mathbf{\dot{X}}^{(k)},\mathbf{\dot{Z}}^{(k)},\mathbf{\dot{\eta}}^{(k-1)},\beta^{(k-1)})-\left\langle\mathbf{\dot{\eta}}^{(k)},\mathbf{\dot{X}}^{(k)}-\mathbf{\dot{Z}}^{(k)} \right\rangle-\frac{\beta^{(k-1)}}{2}\|\mathbf{\dot{X}}^{(k)}-\mathbf{\dot{Z}}^{(k)}\|_{F}^{2}\\
&\!\!\!\!\!\!\!\!\!\!\!\!\!\!\!\!\!\!\! \!\!\!\!\!\!\!\!\!\!\!\!\!\!\!\!\!\!\!\!\!\!\!\!\!\!\!\!\!\!\!\!\!\!\!\!\!\!\!\!\!\!\!\!\!\!\!\!\!\!\!\!\!\!\!\!\!\!\!\!\!\!\!\! \! \! \! \!=\mathcal{L}(\mathbf{\dot{X}}^{(k)},\mathbf{\dot{Z}}^{(k)},\mathbf{\dot{\eta}}^{(k-1)},\beta^{(k-1)})-\left\langle\mathbf{\dot{\eta}}^{(k-1)},\frac{\mathbf{\dot{\eta}}^{(k)}-\mathbf{\dot{\eta}}^{(k-1)}}{\beta^{(k)}} \right\rangle-\frac{\beta^{(k-1)}}{2}\Big\|\frac{\mathbf{\dot{\eta}}^{(k)}-\mathbf{\dot{\eta}}^{(k-1)}}{\beta^{(k)}}\Big\|_{F}^{2}\\
&\!\!\!\!\!\!\!\!\!\!\!\!\!\!\!\!\!\!\! \!\!\!\!\!\!\!\!\!\!\!\!\!\!\!\!\!\!\!\!\!\!\!\!\!\!\!\!\!\!\!\!\!\!\!\!\!\!\!\!\!\!\!\!\!\!\!\!\!\!\!\!\!\!\!\!\!\!\!\!\!\!\!\! \! \! \! \!=\mathcal{L}(\mathbf{\dot{X}}^{(k)},\mathbf{\dot{Z}}^{(k)},\mathbf{\dot{\eta}}^{(k-1)},\beta^{(k-1)})+\frac{\|\mathbf{\dot{\eta}}^{(k-1)}\|_{F}^{2}-\|\mathbf{\dot{\eta}}^{(k)}\|_{F}^{2}}{2\beta^{(k-1)}}.
\end{aligned}
\end{equation}
Then, we can conclude that the sequence $\{\mathbf{\dot{Z}}^{(k)}\}$ is upper bounded. Since $\mathbf{\dot{\eta}}^{(k+1)}=\mathbf{\dot{\eta}}^{(k)}+\beta^{(k)}(\mathbf{\dot{X}}^{(k+1)}-\mathbf{\dot{Z}}^{(k+1)})$, $\{\mathbf{\dot{X}}^{(k)}\}$ is also upper bounded. Thus, there exists at least one accumulation point for $\{\mathbf{\dot{X}}^{(k)},\mathbf{\dot{Z}}^{(k)}\}$. Moreover, we obtain that
\begin{equation}\label{46}
    \lim_{k\rightarrow +\infty}\|\dot{\mathbf{X}}^{(k+1)}-\mathbf{\dot{Z}}^{(k+1)}\|_{F}=\lim_{k\rightarrow +\infty}\frac{1}{\beta^{(k)}}\|\mathbf{\dot{\eta}}^{(k+1)}-\mathbf{\dot{\eta}}^{(k)}\|_{F}=0,
\end{equation}
and that the accumulation point is a feasible solution to the objective function. Thus, the first equation in \eqref{41} is proved.

Finally, we prove that the change of sequence $\{\mathbf{\dot{X}}^{(k)}\}$ and $\{\mathbf{\dot{Z}}^{(k)}\}$ between consecutive iterations tends to be 0. For $\mathbf{\dot{X}}^{(k+1)}$ and $\mathbf{\dot{X}}^{(k)}=\frac{1}{\beta^{(k-1)}}(\mathbf{\dot{\eta}}^{(k)}-\mathbf{\dot{\eta}}^{(k-1)})+\mathbf{\dot{Z}}^{(k)}$, we have
\begin{equation}\label{47}
    \begin{aligned}
        \lim_{k\rightarrow +\infty}\|\mathbf{\dot{X}}^{(k+1)}-\mathbf{\dot{X}}^{(k)}\|_{F}&=\lim_{k\rightarrow +\infty}\Big\|(\lambda\mathbf{\dot{A}}^{\ast}\mathbf{\dot{A}}+\beta^{(k)}\mathbf{\dot{I}})^{-1}(\lambda\mathbf{\dot{A}}^{\ast}\mathbf{\dot{Y}}+\beta^{(k)}\mathbf{\dot{Z}}^{(k)}-\mathbf{\dot{\eta}}^{(k)})\\
        &-\frac{1}{\beta^{(k-1)}}(\mathbf{\dot{\eta}}^{(k)}-\mathbf{\dot{\eta}}^{(k-1)})-\mathbf{\dot{Z}}^{(k)}\Big\|_{F}\\
        &=\lim_{k\rightarrow +\infty}\Big\|(\lambda\mathbf{\dot{A}}^{\ast}\mathbf{\dot{A}}+\beta^{(k)}\mathbf{\dot{I}})^{-1}(\lambda\mathbf{\dot{A}}^{\ast}\mathbf{\dot{Y}}-\lambda\mathbf{\dot{A}}^{\ast}\mathbf{\dot{A}}\mathbf{\dot{Z}}^{(k)}-\mathbf{\dot{\eta}}^{(k)})\\
        &-\frac{1}{\beta^{(k-1)}}(\mathbf{\dot{\eta}}^{(k)}-\mathbf{\dot{\eta}}^{(k-1)})\Big\|_{F}\\
        &\leq \lim_{k\rightarrow +\infty}\|(\lambda\mathbf{\dot{A}}^{\ast}\mathbf{\dot{A}}+\beta^{(k)}\mathbf{\dot{I}})^{-1}(\lambda\mathbf{\dot{A}}^{\ast}\mathbf{\dot{Y}}-\lambda\mathbf{\dot{A}}^{\ast}\mathbf{\dot{A}}\mathbf{\dot{Z}}^{(k)}-\mathbf{\dot{\eta}}^{(k)})\|_{F}\\
        &+\frac{1}{\beta^{(k-1)}}\|\mathbf{\dot{\eta}}^{(k)}-\mathbf{\dot{\eta}}^{(k-1)}\|_{F}=0.
    \end{aligned}
\end{equation}
Since $\mathbf{\dot{Z}}^{(k+1)}=\frac{1}{\beta^{(k)}}\mathbf{\dot{\eta}}^{(k)}-\frac{1}{\beta^{(k)}}\mathbf{\dot{\eta}}^{(k+1)}+\mathbf{\dot{X}}^{(k+1)}$, we conclude that
\begin{equation}\label{48}
    \begin{aligned}
        \lim_{k\rightarrow +\infty}\|\dot{\mathbf{Z}}^{(k+1)}-\mathbf{\mathbf{Z}}^{(k)}\|_{F}&=\lim_{k\rightarrow +\infty}\Big\|\frac{1}{\beta^{(k)}}\mathbf{\dot{\eta}}^{(k)}-\frac{1}{\beta^{(k)}}\mathbf{\dot{\eta}}^{(k+1)}+\mathbf{\dot{X}}^{(k+1)}-\mathbf{\dot{Z}}^{(k)}\Big\|_{F}\\
        &=\lim_{k\rightarrow +\infty}\Big\|\mathbf{\dot{X}}^{(k)}+\frac{1}{\beta^{(k-1)}}\mathbf{\dot{\eta}}^{(k-1)}-\mathbf{\dot{Z}}^{(k)}+\mathbf{\dot{X}}^{(k+1)}\\
        &-\mathbf{\dot{X}}^{(k)}-\frac{1}{\beta^{(k-1)}}\mathbf{\dot{\eta}}^{(k-1)}+\frac{1}{\beta^{(k)}}\mathbf{\dot{\eta}}^{(k)}-\frac{1}{\beta^{(k)}}\mathbf{\dot{\eta}}^{(k+1)}\Big\|_{F}\\
        &\leq \lim_{k\rightarrow +\infty}\sum_{j=1}^{N}\|\sum_{i}w_{ij}^{(k-1)}/\beta^{(k-1)}\|_{F}+\|\mathbf{\dot{X}}^{(k+1)}-\mathbf{\dot{X}}^{(k)}\|_{F}\\\
        &+\Big\|\frac{1}{\beta^{(k-1)}}\mathbf{\dot{\eta}}^{(k-1)}-\frac{1}{\beta^{(k)}}\mathbf{\dot{\eta}}^{(k)}+\frac{1}{\beta^{(k)}}\mathbf{\dot{\eta}}^{(k+1)}\Big\|_{F}=0.
    \end{aligned}
\end{equation}
This completes the proof. $\blacksquare$
\end{proof}

\begin{theorem}
Assume that the sequence of parameter $\{\beta^{(k)}\}$ is unbounded, then the sequences $\{\mathbf{\dot{X}}^{(k)}\}$ and $\{\mathbf{\dot{Z}}^{(k)}\} $ generated by Algorithm 2 must satisfy the following:
\begin{equation}\label{49}
    \begin{aligned}
        \lim_{k\rightarrow +\infty}\|\mathbf{\dot{X}}^{(k+1)}-\mathbf{\dot{Z}}^{(k+1)}\|_{F}=0,\\
        \lim_{k\rightarrow +\infty}\|\mathbf{\dot{X}}^{(k+1)}-\mathbf{\dot{X}}^{(k)}\|_{F}=0,\\
        \lim_{k\rightarrow +\infty}\|\mathbf{\dot{Z}}^{(k+1)}-\mathbf{\dot{Z}}^{(k)}\|_{F}=0.\\
    \end{aligned}
\end{equation}
\end{theorem}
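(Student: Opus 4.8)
The plan is to transplant the four-stage argument used for Theorem~3 onto Algorithm~2, since the recursion \eqref{31} differs from the QWNNM* recursion only in the $\mathbf{\dot{Z}}$-subproblem. Stage~(i) controls the multiplier $\{\mathbf{\dot{\eta}}^{(k)}\}$; stage~(ii) shows the augmented Lagrangian stays finite along the iterates through a telescoping estimate in the spirit of \eqref{43}--\eqref{44}; stage~(iii) deduces that $\{\mathbf{\dot{X}}^{(k)}\}$ and $\{\mathbf{\dot{Z}}^{(k)}\}$ are bounded, as in \eqref{45}; and stage~(iv) reads off the three limits in \eqref{49} from the identities \eqref{46}--\eqref{48}. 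The $\mathbf{\dot{X}}$-update \eqref{33}, the multiplier update, and the continuation rule $\beta^{(k+1)}=\mu\beta^{(k)}$ are all unchanged, so the algebra of stages (ii)--(iv) carries over essentially verbatim once the quantitative input from stage~(i) is replaced.

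The only genuinely new work is stage~(i). Exactly as in the derivation leading to \eqref{42}, setting $\mathbf{\dot{P}}^{(k)}=\mathbf{\dot{X}}^{(k+1)}+\mathbf{\dot{\eta}}^{(k)}/\beta^{(k)}$ and using the QSVD $\mathbf{\dot{P}}_j^{(k)}=\mathbf{\dot{U}}_j^{(k)}\Sigma_j^{(k)}\mathbf{\dot{V}}_j^{(k)\ast}$, the multiplier update collapses patchwise to $\mathbf{\dot{\eta}}^{(k+1)}=\beta^{(k)}\mathbf{\dot{U}}_j^{(k)}(\Sigma_j^{(k)}-\Delta_j^{(k)})\mathbf{\dot{V}}_j^{(k)\ast}$, whence $\|\mathbf{\dot{\eta}}^{(k+1)}\|_F^2=(\beta^{(k)})^2\sum_j\|\Sigma_j^{(k)}-\Delta_j^{(k)}\|_F^2$. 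In QWNNM* the soft-threshold \eqref{20} gives the uniform bound $|\sigma_{ij}^{(k)}-\delta_{ij}^{(k)}|\le w_{ij}^{(k)}/\beta^{(k)}$, which cancels the $(\beta^{(k)})^2$ and leaves the constant $\sum_j\sum_i(w_{ij}^{(k)})^2$. For QWSNM the shrinkage is governed instead by the GST rule \eqref{37}--\eqref{40}: below the threshold $\delta_{ij}^{(k)}=0$ with $\sigma_{ij}^{(k)}\le\tau_p^{GST}(w_{ij}^{(k)})$, while above it the optimality condition \eqref{40} (with the penalty $\beta^{(k)}$ folded in) gives $\sigma_{ij}^{(k)}-\delta_{ij}^{(k)}=(w_{ij}^{(k)}p/\beta^{(k)})(\delta_{ij}^{(k)})^{p-1}$. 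Since on the support this difference is maximized at the jump value of the GST operator, both cases yield the uniform estimate $|\sigma_{ij}^{(k)}-\delta_{ij}^{(k)}|\le\tau_p^{GST}(w_{ij}^{(k)})$, where by \eqref{39} the threshold behaves like $(w_{ij}^{(k)}/\beta^{(k)})^{1/(2-p)}$.

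Here lies the main obstacle. Because $0<p<1$, the Schatten-$p$ shrinkage decays only like $(1/\beta^{(k)})^{1/(2-p)}$, which is too slow to absorb the factor $(\beta^{(k)})^2$; consequently $\{\mathbf{\dot{\eta}}^{(k)}\}$ is \emph{not} bounded and the literal statement of \eqref{42} fails. What survives is the sublinear growth $\|\mathbf{\dot{\eta}}^{(k)}\|_F=O\big((\beta^{(k)})^{(1-p)/(2-p)}\big)$, equivalently $\|\mathbf{\dot{\eta}}^{(k)}\|_F/\beta^{(k)}=O\big((\beta^{(k)})^{-1/(2-p)}\big)\to 0$; note that at $p=1$ this exponent is $0$ and we recover the bounded multiplier of Theorem~3, a useful consistency check. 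The task is then to verify that this weaker decay still powers stages (ii)--(iii). It does: the increment $\|\mathbf{\dot{\eta}}^{(k+1)}-\mathbf{\dot{\eta}}^{(k)}\|_F^2=O\big((\beta^{(k)})^{2(1-p)/(2-p)}\big)$, multiplied by the coefficient $\frac{\beta^{(k+1)}+\beta^{(k)}}{2(\beta^{(k)})^2}$ appearing in \eqref{43}, scales like $(\beta^{(k)})^{-p/(2-p)}=(\beta^{(0)})^{-p/(2-p)}\mu^{-kp/(2-p)}$, a convergent geometric series since $\mu>1$, so the Lagrangian bound \eqref{44} persists; likewise $\big(\|\mathbf{\dot{\eta}}^{(k-1)}\|_F^2-\|\mathbf{\dot{\eta}}^{(k)}\|_F^2\big)/2\beta^{(k-1)}\to 0$ keeps the right-hand side of \eqref{45} finite, giving boundedness of $\{\mathbf{\dot{Z}}^{(k)}\}$ and hence of $\{\mathbf{\dot{X}}^{(k)}\}$.

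Finally the three limits follow as in \eqref{46}--\eqref{48}, with $\|\mathbf{\dot{\eta}}^{(k)}\|_F/\beta^{(k)}\to 0$ playing the role that boundedness of $\mathbf{\dot{\eta}}$ played for QWNNM*: the first from $\|\mathbf{\dot{X}}^{(k+1)}-\mathbf{\dot{Z}}^{(k+1)}\|_F=\|\mathbf{\dot{\eta}}^{(k+1)}-\mathbf{\dot{\eta}}^{(k)}\|_F/\beta^{(k)}$ together with the vanishing shrinkage; the second from the resolvent $(\lambda\mathbf{\dot{A}}^{\ast}\mathbf{\dot{A}}+\beta^{(k)}\mathbf{\dot{I}})^{-1}$ decaying like $1/\beta^{(k)}$ while the quantities it multiplies, namely $\lambda\mathbf{\dot{A}}^{\ast}\mathbf{\dot{Y}}-\lambda\mathbf{\dot{A}}^{\ast}\mathbf{\dot{A}}\mathbf{\dot{Z}}^{(k)}-\mathbf{\dot{\eta}}^{(k)}$, grow only sublinearly in $\beta^{(k)}$, plus $\|\mathbf{\dot{\eta}}^{(k)}-\mathbf{\dot{\eta}}^{(k-1)}\|_F/\beta^{(k-1)}\to 0$; and the third by combining the first two via the identity $\mathbf{\dot{Z}}^{(k+1)}=\mathbf{\dot{X}}^{(k+1)}+(\mathbf{\dot{\eta}}^{(k)}-\mathbf{\dot{\eta}}^{(k+1)})/\beta^{(k)}$, exactly as in \eqref{48}. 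I expect the sharp control of the GST shrinkage by the threshold \eqref{39}, and the attendant bookkeeping of the exponent $(1-p)/(2-p)$, to be the delicate point; everything downstream is the same geometric-series and resolvent-decay reasoning already established for QWNNM*.
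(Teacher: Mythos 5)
Your proposal is correct in outline and, at the decisive step, deliberately departs from the paper's own argument --- and the departure is to your credit. The paper's proof of this theorem transplants Theorem 3's four stages essentially verbatim; at stage (i) it asserts in \eqref{50} that the GST shrinkage obeys $\|\Sigma^{(k)}_{j}-\Delta^{(k)}_{j}\|_{F}\le \|\sum_{i}Jw_{ij}^{(k)}/\beta^{(k)}\|$, so the factor $(\beta^{(k)})^{2}$ cancels, $\{\mathbf{\dot{\eta}}^{(k)}\}$ stays bounded, and a uniform constant $M_{0}$ feeds the geometric series in \eqref{52}, exactly as for the soft threshold in QWNNM*. You instead bound the shrinkage by its true worst case, the GST threshold $\tau_{p}^{GST}(w/\beta)\sim(w/\beta)^{1/(2-p)}$, which is attained (up to constants) on the hard-threshold branch $\delta_{ij}=0$ and at retained singular values near the threshold; for $0<p<1$ this strictly dominates $Jw/\beta$ as $\beta^{(k)}\to\infty$, so the multiplier is in general only $O\big((\beta^{(k)})^{(1-p)/(2-p)}\big)$ rather than bounded. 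Your subsequent bookkeeping is sound: the Lagrangian increments scale like $(\beta^{(k)})^{-p/(2-p)}=O(\mu^{-kp/(2-p)})$, still summable, and $\|\mathbf{\dot{\eta}}^{(k)}\|_{F}/\beta^{(k)}=O\big((\beta^{(k)})^{-1/(2-p)}\big)\to 0$ replaces bounded-$\mathbf{\dot{\eta}}$ in stages (iii)--(iv), with the $p=1$ limit recovering Theorem 3 as a consistency check. What each route buys: the paper's is shorter and formally identical to the QWNNM* case, but its shrinkage bound implicitly presumes no singular value falls at or below the threshold (and that $(\delta^{(s)})^{p-1}\le 1$ in the GST inner loop), neither of which is guaranteed once $\beta^{(k)}\to\infty$; your exponent-tracking version is the one that actually closes the argument for all $p\in(0,1)$. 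Two small loose ends you inherit from the paper rather than introduce: after aggregating overlapping patches the patchwise QSVD identity yields only an inequality (your ``whence $\ldots=$'' should be ``$\le$'', as in \eqref{50} itself), and both proofs infer boundedness of $\{\mathbf{\dot{Z}}^{(k)}\}$ from boundedness of $\|\mathbf{\dot{Z}}^{(k)}\|_{\mathbf{w},S_{p}}^{p}$ even though the adaptive weights $w_{i}=c/(\sigma_{i}+\epsilon)$ make $w_{i}\sigma_{i}^{p}\to 0$ along diverging singular values, so that inference needs an extra word in either version.
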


\begin{proof}
 We first prove that the sequence $\{\mathbf{\dot{\eta}}^{(k)}\}$ generated by Algorithm 2 is upper bounded. Suppose $\mathbf{\dot{P}}^{(k)}=\frac{1}{\beta^{(k)}}\mathbf{\dot{\eta}}^{(k)}+\mathbf{\dot{X}}^{(k+1)}$, then we can get $N$ patch-based quaternion matrices $\mathbf{\dot{P}}^{(k)}_{j}$ and $N$ patch-based quaternion matrices $\mathbf{\dot{Z}}^{(k+1)}_{j}(j=1,2,\cdots,N)$ by utilizing the NSS prior. Denote the QSVD of patch-based quaternion matrix $\mathbf{\dot{P}}^{(k)}_{j}$ in the $k+1$ iteration as $\mathbf{\dot{U}}^{(k)}_{j}\Sigma^{(k)}_{j}\mathbf{\dot{V}}^{(k)^{\ast}}_{j}$, where $\Sigma^{(k)}_{j}=diag(\sigma^{(k)}_{1j},\sigma^{(k)}_{2j},\cdots,\sigma^{(k)}_{rj})$ is the diagonal singular value matrix. By using the GST algorithm for $\mathbf{\dot{Z}}$ subproblem, we have $\mathbf{\dot{Z}}^{(k+1)}_{j}=\mathbf{\dot{U}}^{(k)}_{j}\Delta^{(k)}_{j}\mathbf{\dot{V}}^{(k)^{\ast}}_{j}$, where $\Delta^{(k)}_{j}=diag(\delta^{(k)}_{1j},\delta^{(k)}_{2j},\cdots,\delta^{(k)}_{rj})$ is the singular value matrix after generalized soft-thresholding operation. Then,
\begin{equation}\label{50}
\begin{aligned}
\|\mathbf{\dot{\eta}}^{(k+1)}\|_{F}^{2}&=\|\mathbf{\dot{\eta}}^{(k)}+\beta^{(k)}(\mathbf{\dot{X}}^{(k+1)}-\mathbf{\dot{Z}}^{(k+1)})\|_{F}^{2}\\
&=(\beta^{(k)})^{2}\|\frac{1}{\beta^{(k)}}\mathbf{\dot{\eta}}^{(k)}+\mathbf{\dot{X}}^{(k+1)}-\mathbf{\dot{Z}}^{(k+1)}\|_{F}^{2}\\
&\leq({\beta^{(k)}})^{2}\sum_{j=1}^{N}\|\mathbf{\dot{U}}^{(k)}_{j}\Sigma^{(k)}_{j}\mathbf{\dot{V}}^{(k)^{\ast}}_{j}-\mathbf{\dot{U}}^{(k)}_{j}\Delta^{(k)}_{j}\mathbf{\dot{V}}^{(k)^{\ast}}_{j}\|_{F}^{2}\\
&=({\beta^{(k)}})^{2}\sum_{j=1}^{N}\|\Sigma^{(k)}_{j}-\Delta^{(k)}_{j}\|_{F}^{2}=\sum_{j=1}^{N}({\beta^{(k)}})^{2}\|\sum_{i}Jw_{ij}^{(k)}/\beta^{(k)}\|_{F}^{2}\\
&=\sum_{j=1}^{N}\|J\sum_{i}w_{ij}^{(k)}\|_{F}^{2}.
\end{aligned}
\end{equation}
Hence, the sequence $\{\mathbf{\dot{\eta}}^{(k)}\}$ generated by Algorithm 2 is upper bounded.

We then prove that the sequence of the Lagrange function $\{\mathcal{L}(\dot{\mathbf{X}}^{(k+1)},\mathbf{\dot{Z}}^{(k+1)},$\\
$\mathbf{\dot{\eta}}^{(k+1)},\beta^{(k+1)})\}$ is also upper bounded. Due to the updating rule of $\mathbf{\dot{\eta}}$, we can get
\begin{equation} \label{51}
    \begin{aligned}
        \mathcal{L}(\mathbf{\dot{X}}^{(k+1)},\mathbf{\dot{Z}}^{(k+1)},\mathbf{\dot{\eta}}^{(k+1)},\beta^{(k+1)})\\
        &\!\!\!\!\!\!\!\!\!\!\!\!\!\!\! \!\!\!\!\!\!\!\!\!\!\!\!\!\!\!\!\!\!\!\!\!\!\!\!\!\!\!\!\!\!\!\!\!\!\!\!\!\!\!\!\!\!\!\!\!\!\!\!\!\!\!\!\!\!\!\!\!\!\!\!\!\!\!\! \! \! \! \!= \|\mathbf{\dot{A}}\mathbf{\dot{X}}^{(k+1)}-\mathbf{\dot{Y}}\|_{F}^{2}+\|\mathbf{\dot{Z}}^{(k+1)}\|_{\mathbf{w},S_{p}}^{p}\hfill\\
        &\!\!\!\!\!\!\!\!\!\!\!\!\!\!\! \!\!\!\!\!\!\!\!\!\!\!\!\!\!\!\!\!\!\!\!\!\!\!\!\!\!\!\!\!\!\!\!\!\!\!\!\!\!\!\!\!\!\!\!\!\!\!\!\!\!\!\!\!\!\!\!\!\!\!\!\!\!\!\! \! \! \! \!+\left\langle \mathbf{\dot{\eta}}^{(k+1)},\mathbf{\dot{X}}^{(k+1)}-\mathbf{\dot{Z}}^{(k+1)} \right\rangle+\frac{\beta^{(k+1)}}{2}\|\mathbf{\dot{X}}^{(k+1)}-\mathbf{\dot{Z}}^{(k+1)}\|_{F}^{2}\\
        &\!\!\!\!\!\!\!\!\!\!\!\!\!\!\! \!\!\!\!\!\!\!\!\!\!\!\!\!\!\!\!\!\!\!\!\!\!\!\!\!\!\!\!\!\!\!\!\!\!\!\!\!\!\!\!\!\!\!\!\!\!\!\!\!\!\!\!\!\!\!\!\!\!\!\!\!\!\!\! \! \! \! \!=\mathcal{L}(\mathbf{\dot{X}}^{(k+1)},\mathbf{\dot{Z}}^{(k+1)},\mathbf{\dot{\eta}}^{(k)},\beta^{(k)})\\
&\!\!\!\!\!\!\!\!\!\!\!\!\!\!\! \!\!\!\!\!\!\!\!\!\!\!\!\!\!\!\!\!\!\!\!\!\!\!\!\!\!\!\!\!\!\!\!\!\!\!\!\!\!\!\!\!\!\!\!\!\!\!\!\!\!\!\!\!\!\!\!\!\!\!\!\!\!\!\! \! \! \! \!+\left\langle\mathbf{\dot{\eta}}^{(k+1)}-\mathbf{\dot{\eta}}^{(k)},\mathbf{\dot{X}}^{(k+1)}-{\mathbf{\dot{Z}}}^{(k+1)} \right\rangle+\frac{\beta^{(k+1)}-\beta^{(k)}}{2}\|\mathbf{\dot{X}}^{(k+1)}-\mathbf{\dot{Z}}^{(k+1)}\|_{F}^{2}\\
        &\!\!\!\!\!\!\!\!\!\!\!\!\!\!\! \!\!\!\!\!\!\!\!\!\!\!\!\!\!\!\!\!\!\!\!\!\!\!\!\!\!\!\!\!\!\!\!\!\!\!\!\!\!\!\!\!\!\!\!\!\!\!\!\!\!\!\!\!\!\!\!\!\!\!\!\!\!\!\! \! \! \! \!=\mathcal{L}(\mathbf{\dot{X}}^{(k+1)},\mathbf{\dot{Z}}^{(k+1)},\mathbf{\dot{\eta}}^{(k)},\beta^{(k)})\\
&\!\!\!\!\!\!\!\!\!\!\!\!\!\!\! \!\!\!\!\!\!\!\!\!\!\!\!\!\!\!\!\!\!\!\!\!\!\!\!\!\!\!\!\!\!\!\!\!\!\!\!\!\!\!\!\!\!\!\!\!\!\!\!\!\!\!\!\!\!\!\!\!\!\!\!\!\!\!\! \! \! \! \!+\left\langle\mathbf{\dot{\eta}}^{(k+1)}-\mathbf{\dot{\eta}}^{(k)},\frac{\mathbf{\dot{\eta}}^{(k+1)}-\mathbf{\dot{\eta}}^{(k)}}{\beta^{(k)}} \right\rangle+\frac{\beta^{(k+1)}-\beta^{(k)}}{2}\Big\|\frac{\mathbf{\dot{\eta}}^{(k+1)}-\mathbf{\dot{\eta}}^{(k)}}{\beta^{(k)}}\Big\|_{F}^{2}\\
        &\!\!\!\!\!\!\!\!\!\!\!\!\!\!\! \!\!\!\!\!\!\!\!\!\!\!\!\!\!\!\!\!\!\!\!\!\!\!\!\!\!\!\!\!\!\!\!\!\!\!\!\!\!\!\!\!\!\!\!\!\!\!\!\!\!\!\!\!\!\!\!\!\!\!\!\!\!\!\! \! \! \! \!=\mathcal{L}(\mathbf{\dot{X}}^{(k+1)},\mathbf{\dot{Z}}^{(k+1)},{\mathbf{\dot{\eta}}^{(k)}},\beta^{(k)})+\frac{\beta^{(k+1)}+\beta^{(k)}}{2(\beta^{(k)})^{2}}\|\mathbf{\dot{\eta}}^{(k+1)}-\mathbf{\dot{\eta}}^{(k)}\|_{F}^{2}.
    \end{aligned}
\end{equation}
Since $\{\mathbf{\dot{\eta}}^{(k)}\}$ is upper bounded, the sequence $\{\mathbf{\dot{\eta}}^{(k+1)}-\mathbf{\dot{\eta}}^{(k)}\}$ is also upper bounded. Suppose $M_{0}$ is the upper bound of $\{\mathbf{\dot{\eta}}^{(k+1)}-\mathbf{\dot{\eta}}^{(k)}\}$ for all $k\geq 0$, i.e., $\|\mathbf{\dot{\eta}}^{(k+1)}-\mathbf{\dot{\eta}}^{(k)}\|_{F}^{2}\leq M_{0}^{2}, \forall \ k\geq 0$. Besides, the inequality $\mathcal{L}(\mathbf{\dot{X}}^{(k+1)},\mathbf{\dot{Z}}^{(k+1)},\mathbf{\dot{\eta}}^{(k)},$\\
$\beta^{(k)})\leq \mathcal{L}(\mathbf{\dot{X}}^{(k)},\mathbf{\dot{Z}}^{(k)},\mathbf{\dot{\eta}}^{(k)},\beta^{(k)})$ always holds since we have the globally optimal solution of $\mathbf{\dot{X}}$ and $\mathbf{\dot{Z}}$ in their corresponding sub-problems. Therefore, we have
\begin{equation} \label{52}
    \begin{aligned}
        \mathcal{L}(\mathbf{\dot{X}}^{(k+1)},\mathbf{\dot{Z}}^{(k+1)},\mathbf{\dot{\eta}}^{(k+1)},\beta^{(k+1)})&\leq\mathcal{L}(\mathbf{\dot{X}}^{(k+1)},\mathbf{\dot{Z}}^{(k+1)},\mathbf{\dot{\eta}}^{(k)},\beta^{(k)})+\frac{\beta^{(k+1)}+\beta^{(k)}}{2(\beta^{(k)})^{2}}M_{0}^{2}\\
        &\leq \mathcal{L}(\mathbf{\dot{X}}^{(1)},\mathbf{\dot{Z}}^{(1)},\mathbf{\dot{\eta}}^{(0)},\beta^{(0)})+M_{0}^{2}\sum_{k=0}^{\infty}\frac{\beta^{(k+1)}+\beta^{(k)}}{2(\beta^{(k)})^{2}}\\
        &=\mathcal{L}(\mathbf{\dot{X}}^{(1)},\mathbf{\dot{Z}}^{(1)},\mathbf{\dot{\eta}}^{(0)},\beta^{(0)})+M_{0}^{2}\sum_{k=0}^{\infty}\frac{1+\mu}{2\beta^{(0)}\mu^{k}}\\
        &\leq\mathcal{L}(\mathbf{\dot{X}}^{(1)},\mathbf{\dot{Z}}^{(1)},\mathbf{\dot{\eta}}^{(0)},\beta^{(0)})+\frac{M_{0}^{2}}{\beta^{(0)}}\sum_{k=0}^{\infty}\frac{1}{\mu^{k-1}}\\
        &< +\infty.
    \end{aligned}
\end{equation}
Thus, $\{\mathcal{L}(\mathbf{\dot{X}}^{(k+1)},\mathbf{\dot{Z}}^{(k+1)},\mathbf{\dot{\eta}}^{(k+1)},\beta^{(k+1)})\}$ is upper bounded.

We next prove the sequences of $\{\dot{\mathbf{X}}^{(k)}\}$ and $\{\mathbf{\dot{Z}}^{(k)}\}$ are upper bounded. According to the formula \eqref{30}, we have
\begin{equation}\label{53}
\begin{aligned}
    \|\mathbf{\dot{A}}\mathbf{\dot{X}}^{(k+1)}-\mathbf{\dot{Y}}\|_{F}^{2}+\|\mathbf{\dot{Z}}^{(k+1)}\|_{\mathbf{w},S_{p}}^{p}\\
&\!\!\!\!\!\!\!\!\!\!\!\!\!\!\!\!\!\!\!\! \!\!\!\!\!\!\!\!\!\!\!\!\!\!\!\!\!\!\!\!\!\!\!\!\!\!\!\!\!\!\!\!\!\!\!\!\!\!\!\!\!\!\!\!\!\!\!\!\!\!\!\!\!\!\!\!\!\!\!\!\!\!\!\! \! \! \! \!=\mathcal{L}(\mathbf{\dot{X}}^{(k)},\mathbf{\dot{Z}}^{(k)},\mathbf{\dot{\eta}}^{(k-1)},\beta^{(k-1)})-\left\langle\mathbf{\dot{\eta}}^{(k)},\mathbf{\dot{X}}^{(k)}-\mathbf{\dot{Z}}^{(k)} \right\rangle-\frac{\beta^{(k-1)}}{2}\|\mathbf{\dot{X}}^{(k)}-\mathbf{\dot{Z}}^{(k)}\|_{F}^{2}\\
&\!\!\!\!\!\!\!\!\!\!\!\!\!\!\!\!\!\!\!\! \!\!\!\!\!\!\!\!\!\!\!\!\!\!\!\!\!\!\!\!\!\!\!\!\!\!\!\!\!\!\!\!\!\!\!\!\!\!\!\!\!\!\!\!\!\!\!\!\!\!\!\!\!\!\!\!\!\!\!\!\!\!\!\! \! \! \! \!=\mathcal{L}(\mathbf{\dot{X}}^{(k)},\mathbf{\dot{Z}}^{(k)},\mathbf{\dot{\eta}}^{(k-1)},\beta^{(k-1)})-\left\langle\mathbf{\dot{\eta}}^{(k-1)},\frac{\mathbf{\dot{\eta}}^{(k)}-\mathbf{\dot{\eta}}^{(k-1)}}{\beta^{(k)}} \right\rangle-\frac{\beta^{(k-1)}}{2}\Big\|\frac{\mathbf{\dot{\eta}}^{(k)}-\mathbf{\dot{\eta}}^{(k-1)}}{\beta^{(k)}}\Big\|_{F}^{2}\\
&\!\!\!\!\!\!\!\!\!\!\!\!\!\!\!\!\!\!\!\! \!\!\!\!\!\!\!\!\!\!\!\!\!\!\!\!\!\!\!\!\!\!\!\!\!\!\!\!\!\!\!\!\!\!\!\!\!\!\!\!\!\!\!\!\!\!\!\!\!\!\!\!\!\!\!\!\!\!\!\!\!\!\!\! \! \! \! \!=\mathcal{L}(\mathbf{\dot{X}}^{(k)},\mathbf{\dot{Z}}^{(k)},\mathbf{\dot{\eta}}^{(k-1)},\beta^{(k-1)})+\frac{\|\mathbf{\dot{\eta}}^{(k-1)}\|_{F}^{2}-\|\mathbf{\dot{\eta}}^{(k)}\|_{F}^{2}}{2\beta^{(k-1)}}.
\end{aligned}
\end{equation}
Therefore, we can conclude that the sequence $\{\mathbf{\dot{Z}}^{(k)}\}$ is upper bounded. Since $\mathbf{\dot{\eta}}^{(k+1)}=\mathbf{\dot{\eta}}^{(k)}+\beta^{(k)}(\mathbf{\dot{X}}^{(k+1)}-\mathbf{\dot{Z}}^{(k+1)})$, $\{\mathbf{\dot{X}}^{(k)}\}$ is also upper bounded. Thus, there exists at least one accumulation point for $\{\mathbf{\dot{X}}^{(k)},\mathbf{\dot{Z}}^{(k)}\}$. Moreover, we obtain that
\begin{equation}\label{54}
    \lim_{k\rightarrow +\infty}\|\dot{\mathbf{X}}^{(k+1)}-\mathbf{\dot{Z}}^{(k+1)}\|_{F}=\lim_{k\rightarrow +\infty}\frac{1}{\beta^{(k)}}\|\mathbf{\dot{\eta}}^{(k+1)}-\mathbf{\dot{\eta}}^{(k)}\|_{F}=0,
\end{equation}
and that the accumulation point is a feasible solution to the objective function. Thus, the first equation in \eqref{49} is proved.

Finally, we prove that the change of sequence $\{\mathbf{\dot{X}}^{(k)}\}$ and $\{\mathbf{\dot{Z}}^{(k)}\}$ between consecutive iterations tends to be 0. For $\mathbf{\dot{X}}^{(k+1)}$ and $\mathbf{\dot{X}}^{(k)}=\frac{1}{\beta^{(k-1)}}(\mathbf{\dot{\eta}}^{(k)}-\mathbf{\dot{\eta}}^{(k-1)})+\mathbf{\dot{Z}}^{(k)}$, we have
\begin{equation}\label{55}
    \begin{aligned}
        \lim_{k\rightarrow +\infty}\|\mathbf{\dot{X}}^{(k+1)}-\mathbf{\dot{X}}^{(k)}\|_{F}&=\lim_{k\rightarrow +\infty}\Big\|(\lambda\mathbf{\dot{A}}^{\ast}\mathbf{\dot{A}}+\beta^{(k)}\mathbf{\dot{I}})^{-1}(\lambda\mathbf{\dot{A}}^{\ast}\mathbf{\dot{Y}}+\beta^{(k)}\mathbf{\dot{Z}}^{(k)}-\mathbf{\dot{\eta}}^{(k)})\\
        &-\frac{1}{\beta^{(k-1)}}(\mathbf{\dot{\eta}}^{(k)}-\mathbf{\dot{\eta}}^{(k-1)})-\mathbf{\dot{Z}}^{(k)}\Big\|_{F}\\
        &=\lim_{k\rightarrow +\infty}\Big\|(\lambda\mathbf{\dot{A}}^{\ast}\mathbf{\dot{A}}+\beta^{(k)}\mathbf{\dot{I}})^{-1}(\lambda\mathbf{\dot{A}}^{\ast}\mathbf{\dot{Y}}-\lambda\mathbf{\dot{A}}^{\ast}\mathbf{\dot{A}}\mathbf{\dot{Z}}^{(k)}-\mathbf{\dot{\eta}}^{(k)})\\
        &-\frac{1}{\beta^{(k-1)}}(\mathbf{\dot{\eta}}^{(k)}-\mathbf{\dot{\eta}}^{(k-1)})\Big\|_{F}\\
        &\leq \lim_{k\rightarrow +\infty}\|(\lambda\mathbf{\dot{A}}^{\ast}\mathbf{\dot{A}}+\beta^{(k)}\mathbf{\dot{I}})^{-1}(\lambda\mathbf{\dot{A}}^{\ast}\mathbf{\dot{Y}}-\lambda\mathbf{\dot{A}}^{\ast}\mathbf{\dot{A}}\mathbf{\dot{Z}}^{(k)}-\mathbf{\dot{\eta}}^{(k)})\|_{F}\\
        &+\frac{1}{\beta^{(k-1)}}\|\mathbf{\dot{\eta}}^{(k)}-\mathbf{\dot{\eta}}^{(k-1)}\|_{F}=0.
    \end{aligned}
\end{equation}
Since $\mathbf{\dot{Z}}^{(k+1)}=\frac{1}{\beta^{(k)}}\mathbf{\dot{\eta}}^{(k)}-\frac{1}{\beta^{(k)}}\mathbf{\dot{\eta}}^{(k+1)}+\mathbf{\dot{X}}^{(k+1)}$, we conclude that
\begin{equation}\label{56}
    \begin{aligned}
        \lim_{k\rightarrow +\infty}\|\dot{\mathbf{Z}}^{(k+1)}-\mathbf{\mathbf{Z}}^{(k)}\|_{F}&=\lim_{k\rightarrow +\infty}\Big\|\frac{1}{\beta^{(k)}}\mathbf{\dot{\eta}}^{(k)}-\frac{1}{\beta^{(k)}}\mathbf{\dot{\eta}}^{(k+1)}+\mathbf{\dot{X}}^{(k+1)}-\mathbf{\dot{Z}}^{(k)}\Big\|_{F}\\
        &=\lim_{k\rightarrow +\infty}\Big\|\mathbf{\dot{X}}^{(k)}+\frac{1}{\beta^{(k-1)}}\mathbf{\dot{\eta}}^{(k-1)}-\mathbf{\dot{Z}}^{(k)}+\mathbf{\dot{X}}^{(k+1)}\\
        &-\mathbf{\dot{X}}^{(k)}-\frac{1}{\beta^{(k-1)}}\mathbf{\dot{\eta}}^{(k-1)}+\frac{1}{\beta^{(k)}}\mathbf{\dot{\eta}}^{(k)}-\frac{1}{\beta^{(k)}}\mathbf{\dot{\eta}}^{(k+1)}\Big\|_{F}\\
        &\leq \lim_{k\rightarrow +\infty}\sum_{j=1}^{N}\|\sum_{i}Jw_{ij}^{(k-1)}/\beta^{(k-1)}\|_{F}+\|\mathbf{\dot{X}}^{(k+1)}-\mathbf{\dot{X}}^{(k)}\|_{F}\\\
        &+\Big\|\frac{1}{\beta^{(k-1)}}\mathbf{\dot{\eta}}^{(k-1)}-\frac{1}{\beta^{(k)}}\mathbf{\dot{\eta}}^{(k)}+\frac{1}{\beta^{(k)}}\mathbf{\dot{\eta}}^{(k+1)}\Big\|_{F}=0.
    \end{aligned}
\end{equation}
This completes the proof. $\blacksquare$
\end{proof}

It can be seen intuitively from Fig. 1 that $\|\mathbf{\dot{X}}^{(k+1)}-\mathbf{\dot{X}}^{(k)}\|_{F}, \|\mathbf{\dot{Z}}^{(k+1)}-\mathbf{\dot{Z}}^{(k)}\|_{F}$ and $\|\mathbf{\dot{X}}^{(k+1)}-\mathbf{\dot{Z}}^{(k+1)}\|_{F}$ simultaneously approach 0 during the iteration process.



\begin{figure}[htbp]
    \centering
    \includegraphics[width=0.9\textwidth]{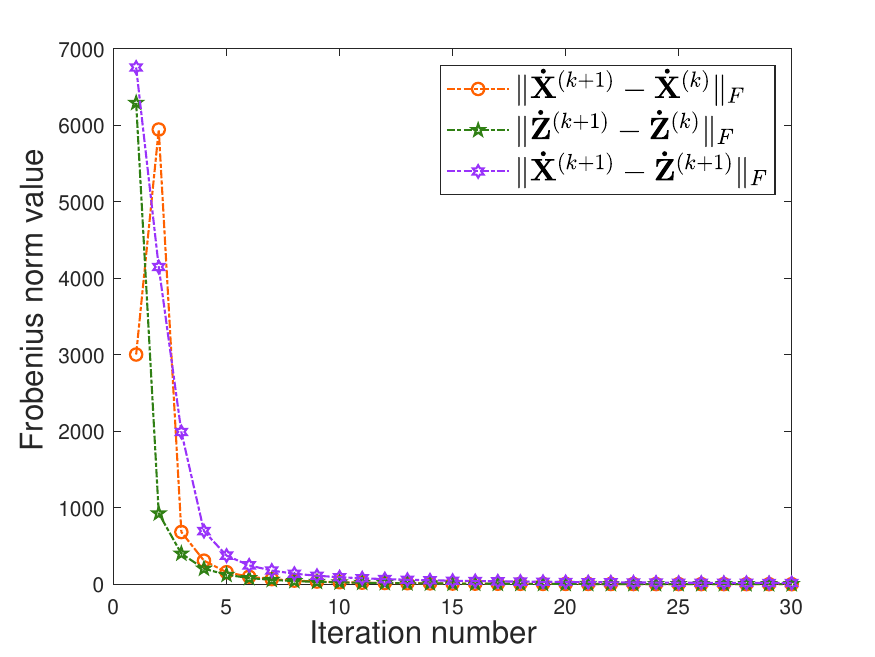}
    \caption{The convergence curves of $\|\mathbf{\dot{X}}^{(k+1)}-\mathbf{\dot{X}
    }^{(k)}\|_{F}$, $\|\mathbf{\dot{Z}}^{(k+1)}-\mathbf{\dot{Z}}^{(k)}\|_{F}$ and $\|\mathbf{\dot{X}}^{(k+1)}-\mathbf{\dot{Z}}^{(k+1)}\|_{F}$ of the proposed QWSNM algorithm. Test image: ``Bird''.}
\end{figure}

\section{Experimental Results}
To demonstrate the effectiveness of our proposed QWSNM algorithm, we carry out extensive experiments on two representative CIR tasks, i.e.,  color image denoising and deblurring. To evaluate the quality of the restored images, the peak signal to noise ratio (PSNR) and structural similarity (SSIM)  \cite{2004Image} are chosen as the quantitative metrics.
Generally, a higher PSNR value suggests the recovery result is of higher quality, but in some scenarios it may not. To this end, the perceptual quality metric SSIM  \cite{2004Image} is calculated to evaluate the visual quality. The higher SSIM value indicates the better visual quality.
All the experiments were implemented in Matlab 2021b on a desktop with 3.30GHz CPU and 16GB RAM.

\begin{figure}
    \centering
    \includegraphics[width=0.95\textwidth]{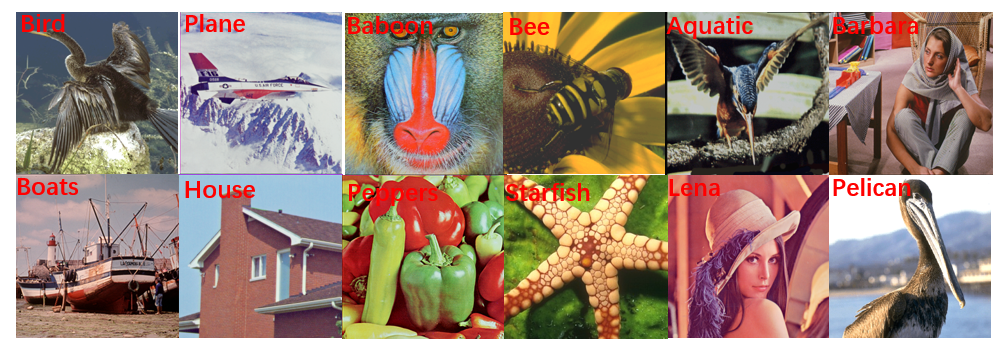}
    \caption{Images in Set12 \cite{wang2019variational}.}
\end{figure}

\begin{table*}[htbp]
\begin{center}{ \tiny
\resizebox{\textwidth}{90mm}{
\begin{tabular}{|c||c|c|c|c|c|c|}
\hline
\ Methods & SV-TV \cite{jia2019color} & CBM3D \cite{dabov2007image} & IRCNN \cite{zhang2017learning} & pQSTV \cite{wu2022total} & QWNNM \cite{yu2019quaternion} & QWSNM (Ours) \\
\hline
\multicolumn{7}{|c|}{$\sigma = 15$}\\
\hline
Bird & 31.16/0.9383 & 32.54/0.9567 & \textbf{32.96/0.9607} & 31.17/0.9414 & 32.77/0.9565 & \underline{32.86/0.9593} \\
Plane & 30.82/0.8940 & 33.76/0.9554 & 34.25/0.9573 & 32.72/0.9382 & \underline{34.69/0.9598} & \textbf{34.81/0.9607} \\
Baboon & 28.14/0.9321 & 29.59/0.9474 & 29.96/0.9514 & 28.68/0.9344 & \underline{30.18/0.9537} & \textbf{30.24/0.9539} \\
Bee & 31.09/0.9394 & 34.69/0.9647 & 35.31/0.9696 & 34.06/0.9606 & \underline{35.78/0.9729} & \textbf{35.81/0.9733} \\
Aquatic & 31.19/0.9093 & 33.31/0.9491 & 33.82/0.9517 & 31.96/0.9335 & \underline{33.88/0.9532} & \textbf{34.06/0.9537} \\
Barbara & 29.64/0.9349 & 32.67/0.9641 & 32.57/0.9603 & 31.57/0.9526 & \underline{33.14/0.9662} & \textbf{33.24/0.9672} \\
Boat & 30.38/0.9132 & 32.71/0.9631 & \textbf{33.30/0.9672} & 31.47/0.9501 & 33.08/0.9659 & \underline{33.15/0.9661} \\
House & 30.82/0.9575 & \textbf{34.85/0.9801} & 34.07/0.9779 & 33.42/0.9746 & \underline{34.74/0.9796} & 34.72/0.9794 \\
Peppers & 28.82/0.9750 & 32.53/0.9888 & 32.85/0.9899 & 32.18/0.9881 & \underline{33.35/0.9907} & \textbf{33.42/0.9908} \\
Starfish & 31.00/0.9718 & 32.96/0.9862 & \textbf{33.61/0.9879} & 31.79/0.9822 & 33.38/0.9866 & \underline{33.43/0.9869} \\
Lena & 32.15/0.9843 & 33.94/0.9892 & 33.91/0.9891 & 33.00/09866 & \underline{34.13/0.9898} & \textbf{34.15/0.9899} \\
Pelican & 31.15/0.9084 & 32.65/0.9449 & 32.37/0.9376 & 31.26/0.9218 & \underline{32.88/0.9479} & \textbf{32.94/0.9489} \\
Average & 30.53/0.9382 & 33.02/0.9658 & 33.25/0.9667 & 31.94/0.9553 & \underline{33.50/0.9686} & \textbf{33.57/0.9692} \\
\hline
\multicolumn{7}{|c|}{$\sigma = 25$}\\
\hline
Bird & 28.23/0.8875 & 29.73/0.9225 & \textbf{30.22/0.9311} & 28.39/0.8973 & 29.52/0.9223 & \underline{30.11/0.9291} \\
Plane & 28.95/0.8586 & 31.14/0.8586 & 31.63/0.9346 & 30.06/0.9055 & \underline{32.10/0.9415} & \textbf{32.16/0.9425} \\
Baboon & 25.46/0.8650 & 27.05/0.9032 & 27.44/0.9132 & 26.19/0.8827 & \underline{27.62/0.9145} & \textbf{27.66/0.9166} \\
Bee & 29.73/0.9219 & 32.48/0.9483 & 32.98/0.9537 & 31.90/0.9417 & \underline{33.30/0.9563} & \textbf{33.38/0.9569} \\
Aquatic & 28.80/0.8716 & 30.68/0.9215 & 31.23/0.9275 & 29.39/0.8992 & \underline{31.27/0.9290} & \textbf{31.30/0.9301} \\
Barbara & 27.39/0.9006 & 30.04/0.9390 & 30.10/0.9376 & 28.89/0.9212 & \underline{30.49/0.9445} & \textbf{30.59/0.9455} \\
Boat & 28.06/0.8949 & 29.92/0.9372 & \textbf{30.59/0.9458} & 28.70/0.9181 & 30.31/0.9437 & \underline{30.39/0.9443} \\
House & 29.15/0.9479 & \textbf{33.03/0.9721} & 32.13/0.9694 & 31.78/0.9649 & 32.93/0.9717 & \underline{33.01/0.9719} \\
Peppers & 27.09/0.9655 & 30.06/0.9812 & 30.46/0.9832 & 29.51/0.9794 & \underline{30.84/0.9841} & \textbf{30.92/0.9844} \\
Starfish & 28.90/0.9666 & 30.27/0.9770 & \textbf{30.99/0.9800} & 28.99/0.9705 & 30.89/0.9786 & \underline{30.90/0.9787} \\
Lena & 29.78/0.9738 & 32.27/0.9847 & 32.30/0.9849 & 31.42/0.9814 & \underline{32.49/0.9856} & \textbf{32.55/0.9858} \\
Pelican & 27.77/0.8491 & 30.51/0.9206 & 30.58/0.9186 & 29.48/0.8976 & \underline{30.74/0.9226} & \textbf{30.76/0.9237} \\
Average & 28.28/0.9086 & 30.60/0.9449 & 30.86/0.9483 & 29.56/0.9300 & \underline{31.04/0.9485} & \textbf{31.14/0.9508} \\
\hline
\multicolumn{7}{|c|}{$\sigma = 35$}\\
\hline
Bird & 26.55/0.8869 & 27.92/0.8869 & \textbf{28.53/0.9022} & 26.91/0.8574 & 28.32/0.8974 & \underline{28.40/0.8996} \\
Plane & 27.36/0.7858 & 29.34/0.9055 & 29.89/0.9113 & 28.41/0.8734 & \underline{30.32/0.9233} & \textbf{30.41/0.9253} \\
Baboon & 24.38/0.8278 & 25.40/0.8555 & 25.94/0.8718 & 24.71/0.8329 & \underline{26.00/0.8726} & \textbf{26.10/0.8779} \\
Bee & 28.31/0.8914 & 30.86/0.9333 & 31.50/0.9405 & 30.56/0.9268 & \underline{31.82/0.9433} & \textbf{31.92/0.9439} \\
Aquatic & 27.05/0.8099 & 28.81/0.8890 & 29.47/0.9024 & 27.56/0.8563 & \underline{29.48/0.9041} & \textbf{29.62/0.9075} \\
Barbara & 26.01/0.8682 & 28.15/0.9120 & 28.57/0.9179 & 27.30/0.8937 & \underline{28.77/0.9219} & \textbf{28.89/0.9237}\\
Boat & 26.45/0.8472 & 28.00/0.9083 & \textbf{28.86/0.9238} & 27.19/0.8904 & 28.48/0.9183 & \underline{28.57/0.9194} \\
House & 27.75/0.9283 & 31.58/0.9636 & 30.78/0.9617 & 30.53/0.9564 & \underline{31.88/0.9659} & \textbf{32.06/0.9668} \\
Peppers & 26.04/0.9562 & 28.19/0.9722 & 28.88/0.9764 & 28.02/0.9712 & \underline{29.21/0.9774} & \textbf{29.30/0.9780} \\
Starfish & 27.19/0.9485 & 28.38/0.9672 & \textbf{29.36/0.9725} & 27.50/0.9602 & 29.23/0.9713 & \underline{29.25/0.9715} \\
Lena & 28.60/0.9665 & 30.91/0.9799 & 31.16/0.9812 & 30.22/0.9764 & \underline{31.30/0.9818} & \textbf{31.38/0.9821} \\
Pelican & 26.62/0.8042 & 29.09/0.8994 & 29.34/0.9011 & 28.56/0.8743 & \underline{29.36/0.9039} & \textbf{29.41/0.9066} \\
Average & 26.86/0.8729 & 28.89/0.9227 & 29.35/0.9307 & 28.12/0.9059 & \underline{29.52/0.9318} & \textbf{29.61/0.9335} \\
\hline
\multicolumn{7}{|c|}{$\sigma = 45$}\\
\hline
Bird & 25.19/0.7892 & 26.87/0.8602 & \textbf{27.33/0.8743} & 25.84/0.8179 & 26.90/0.8594 & \underline{27.13/0.8684} \\
Plane & 26.16/0.8385 & 28.32/0.8887 & 28.72/0.8920 & 27.24/0.8496 & \underline{29.00/0.9040} & \textbf{29.11/0.9086} \\
Baboon & 23.38/0.7797 & 24.43/0.8150 & \underline{24.91/0.8431} & 23.81/0.7951 & 24.75/0.8278 & \textbf{25.10/0.8512}\\
Bee & 27.39/0.8721 & 30.06/0.9237 & 30.43/0.9293 & 29.44/0.9122 & \underline{30.73/0.9309} & \textbf{30.83/0.9338} \\
Aquatic & 25.74/0.7645 & 27.73/0.8656 & \underline{28.21/0.8780} & 26.29/0.8249 & 28.01/0.8745 & \textbf{28.27/0.8833}\\
Barbara & 24.81/0.8349 & 27.25/0.8658 & \underline{27.49/0.9015} & 26.01/0.8689 & 27.45/0.8988 & \textbf{27.55/0.9035} \\
Boat & 25.08/0.8060 & 26.79/0.8851 & \textbf{27.62/0.9029} & 25.81/0.8612 & 26.98/0.8866 & \underline{27.23/0.8942} \\
House & 26.79/0.9120 & 30.86/0.9584 & 29.95/0.9547 & 29.46/0.9455 & \textbf{31.31/0.9620} & \underline{31.29/0.9619} \\
Peppers & 25.19/0.9474 & 27.02/0.9647 & 27.74/0.9699 & 26.83/0.9634 & \textbf{28.01/0.9713} & \underline{27.97/0.9710} \\
Starfish & 25.85/0.9339 & 27.36/0.9612 & \textbf{28.04/0.9657} & 26.30/0.9495 & 28.01/0.9625 & \underline{28.02/0.9638}\\
Lena & 27.37/0.9568 & 30.26/0.9772 & \underline{30.29/0.9778} & 29.21/0.9714 & 30.18/0.9773 & \textbf{30.37/0.9783} \\
Pelican & 25.33/0.7549 & 28.25/0.8876 & \underline{28.44/0.8889} & 27.23/0.8505 & 28.28/0.8824 & \textbf{28.45/0.8908}\\
Average & 25.69/0.8492 & 27.93/0.9044 & 28.26/\underline{0.9147} & 26.96/0.8842 & \underline{28.30}/0.9115 & \textbf{28.44/0.9174} \\
\hline
\end{tabular}}
}
\caption{Comparison of the denoising PSNR (dB) values. The best results are highlighted in \textbf{bold} and the second-best results are \underline{underlined}.}
\end{center}
\end{table*}

\begin{figure}[htbp]
\centering
\subfigure[Baboon]{\includegraphics[width=0.30\textwidth]{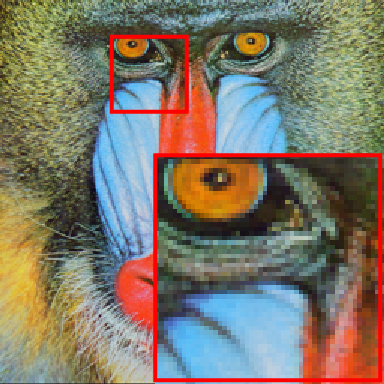}}
\subfigure[(15.08/0.4890)]{\includegraphics[width=0.30\textwidth]{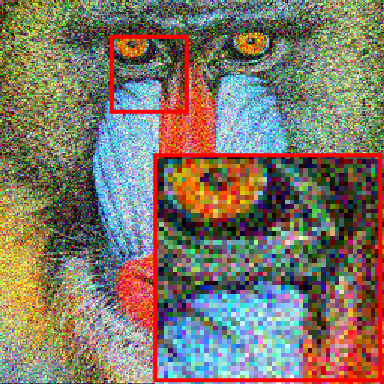}}
\subfigure[(23.38/0.7797)]{\includegraphics[width=0.30\textwidth]{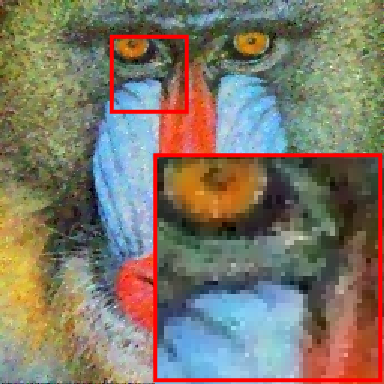}}
\subfigure[(24.43/0.8150)]{\includegraphics[width=0.30\textwidth]{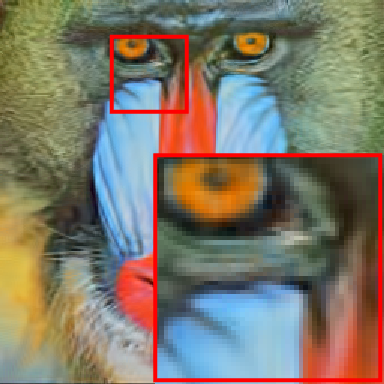}}
\subfigure[(24.91/0.8431)]{\includegraphics[width=0.30\textwidth]{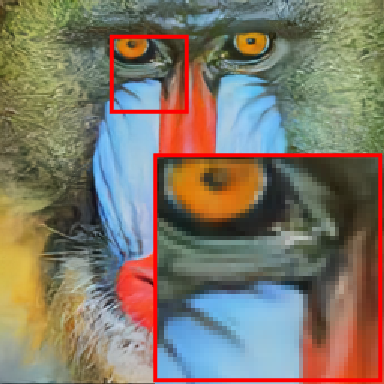}}
\subfigure[(23.81/0.7951)]{\includegraphics[width=0.30\textwidth]{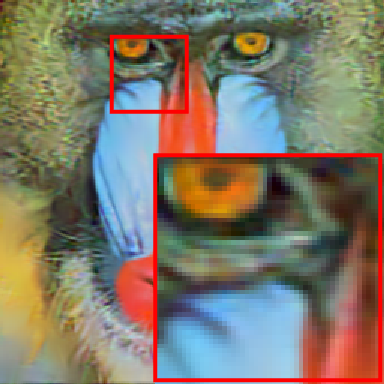}}
\subfigure[(24.75/0.8278)]{\includegraphics[width=0.30\textwidth]{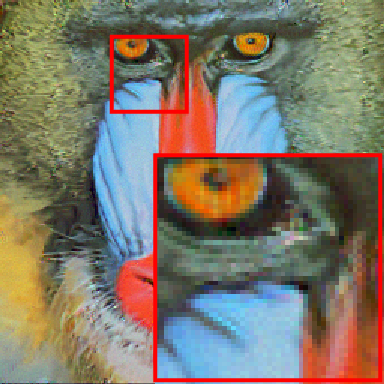}}
\subfigure[(25.10/0.8512)]{\includegraphics[width=0.30\textwidth]{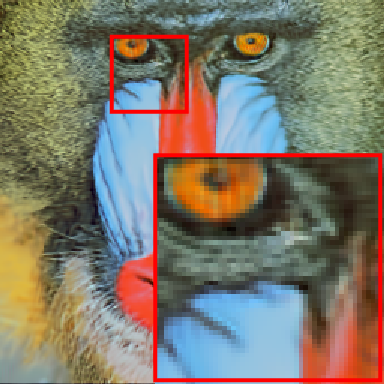}}
\caption{Denoising performance on ``Baboon'' with visual quality and numerical results (PSNR/SSIM). (a) Ground truth; (b) Noisy image ($\sigma = 45$); (c) SV-TV \cite{jia2019color}; (d) CBM3D \cite{dabov2007image}; (e) IRCNN \cite{zhang2017learning}; (f) pQSTV \cite{wu2022total}; (g) QWNNM \cite{yu2019quaternion}; (h) QWSNM (Ours).}
\end{figure}

\subsection{Color Image Denoising}
For the color image denoising, the recovery performance is evaluated on the benchmark Set12\footnote{The test images of Set12 are mainly collected from Wang et al. \cite{wang2019variational}. Concretely, Bird, Plane, Baboon, Bee, Aquatic, Barbara, Boat, House, Peppers and Starfish are all with the size of $256 \times 256$, Lena and Pelican are with the size of $512 \times 512$.} dataset, which contains 12 natural RGB color images shown in Fig. 2. All the color noisy images are synthesized by adding the Gaussian noise with mean zero and variance $\sigma^{2}$.
To be concrete, we test the scenarios of noise levels $\sigma$ equal to 15, 25, 35 and 45 respectively. In our experiments, we compare our proposed QWSNM algorithm with several prevailing methods, including SV-TV \cite{jia2019color}, CBM3D \cite{dabov2007image}, IRCNN \cite{zhang2017learning}, QWNNM \cite{yu2019quaternion}, pQSTV \cite{wu2022total}. Particularly, it is noted that SV-TV, QWNNM and pQSTV are all quaternion-based methods which represent the color image via a pure quaternion matrix to preserve the relationship of three channels. IRCNN \cite{zhang2017learning} is a convolution neural network method which was trained to handle a wide range of noise levels. CBM3D \cite{dabov2007image} applies the classical BM3D algorithm to the denoising of color images and its working principle is to convert the color image from the RGB space into a less correlated luminance-chrominance space. For the comparison methods, the involved parameters are all manually-tuned or automatically set up according to the reference papers to achieve the best performance.
In our proposed algorithm, parameters that need to be considered include the patch size $w$, similar patch number $M$, search window $W$, the power $p$ value in the weighted Schatten $p$ norm regularizer, the iteration number $K$, the parameters $c$ and $\epsilon$ in the weight $w_{i}=\frac{c}{\sigma_{i}(\mathbf{\dot{X}})+\epsilon}$. For $\sigma\leq 40$, we set the search window $W$ as $30$. When $0<\sigma \leq 20$, we set the similar patch number $M$, the patch size $w\times w$ and the iteration number $K$ as $70$, $4\times4$, and $8$, respectively. When $20<\sigma \leq 40$, we set the similar number $M$, the patch size $w\times w$ and the iteration number $K$ as $90$, $5\times5$, and $12$, respectively. For $40<\sigma\leq 50$, we set the search window $W$, the similar patch number $M$, the patch size $w\times w$ and the iteration number $K$ as $40$, $120$, $5\times5$, and $14$, respectively. Throughout the denoising experiments, we fix the power $p$ value and the regularization parameter $\lambda$ as $0.95$ and $1$ for simplicity. In addition, the parameters $c$ and $\epsilon$ are set as $\sqrt{2}$ and \verb"eps" (a Matlab built-in function) respectively, which is as the same as the parameter setting in the WNNM \cite{gu2014weighted}.

Table 2 reports the PSNR and SSIM values of each image as well as the average results of the competing approaches under different noise levels. The best results are highlighted in bold and the second-best results are underlined. One can observe that our proposed QWSNM algorithm achieves the best objective performance in most cases. For the PSNR metric, our method achieves $0.32$dB, $0.28$dB, $0.26$dB and $0.18$dB improvement over the state-of-the-art approach IRCNN \cite{zhang2017learning} on average with respect to $\sigma=15,25,35,45$. The main reason behind this phenomenon is that we adopt the quaternion representation, which can better preserve the inter-relationship between RGB channels. Compared with the most similar quaternion-based approach QWNNM \cite{yu2019quaternion}, our method achieves slight quantitative gain of $0.07$dB, $0.10$dB, $0.09$dB and $0.14$dB on average with respect to $\sigma=15,25,35,45$. A reasonable explanation is that QWNNM \cite{yu2019quaternion} tends to over-shrink the dominant rank components, which constraints its capacity of recovering the images with rich textures. The above quantitative results fully demonstrate the effectiveness of exploiting both the quaternion representation and WSNM regularization jointly, as in our proposed QWSNM approach. Human subject perception is the ultimate judge of image quality, which also plays a crucial role in the estimation of denoising algorithms. The visual quality comparisons for the test image ``Baboon'' are shown in Fig. 3. As can be observed from the highlighted window, SV-TV method produces agonizing visual artifacts, while CBM3D, IRCNN and pQSTV approaches are pone to over-smooth the image, losing some of the tiny details. Although QWNNM method overcomes the over-smooth phenomenon, it generates unpleasant visual artifacts. In contrast, our proposed QWSNM approach is not only capable of achieving a better visual perception of both texture and detail information, but also significantly eliminating visual artifacts.  


\subsection{Color Image Deblurring}
In this subsection, we conduct extensive color image deblurring experiments to demonstrate the superiority of our proposed QWSNM. The test images are selected from Set12 and Set27\footnote{The test images of Set27 are downloaded from \url{https://github.com/Huang-chao-yan/dataset}. Image 1-7 and 16 are with the size of $500\times500$; Image 8 is with the size of $560\times392$; Image 9 and 22-25 are with the size of $481\times321$; Image 10 and 21 are with the size of $1024\times683$; Image 11 is with the size of $1024\times764$; Image 12 is with the size of $321\times481$; Image 13 is with the size of $1024\times701$; Image 14 is with the size of $1024\times783$; Image 15 is with the size of $1024\times684$; Image 17 is with the size of $1024\times605$; Image 18 is with the size of $448\times296$; Image 19 is with the size of $256\times256$; Image 20 is with the size of $288\times288$; Image 26 is with the size of $418\times378$; Image 27 is with the size of $1024\times937$.}, which are shown in Figs. 2 and 4, respectively.  Three sets of experiments are conducted for simulated color image deblurring. The first is a $9\times9$ uniform kernel, the second is a Gaussian kernel with standard variation 1.6, and the third is a motion kernel with motion length 20 and motion angle 60. Then, additive Gaussian noise with $\sigma=15$ is added to the blurred images. The involved parameters of our algorithm are set as follows: $\lambda=65$, $\beta=7.5$ for Gaussian blur with noise; $\lambda=115$, $\beta=7.5$ for motion blur with noise; $\lambda=115$, $\beta=8.5$ for average blur with noise. Particularly, we fix the search window size $W$, the similar number $M$ and the patch size $w$ as 30, 155 and 6$\times$6, respectively. Moreover, we study the influence of the parameters $c$ and $\epsilon$ in the weight $w_{i}=\frac{c}{\sigma_{i}(\mathbf{\dot{X}})+\epsilon}$. Experiments on the test image ``Bird'' with motion blur and noise are conducted, where $c$ is set to $[0.1:5]*\sqrt{2}$ with an increment of 0.2 and $\epsilon$ is set to $[1,250000]*$\verb"eps" with step size 10000. As shown in Fig. 4, the parameter $\epsilon$ does not dramatically affect the results and we set $\epsilon=$ \verb"eps" for all our deblurring experiments. The adjustment of parameter $c$ has a greater impact on the results and we set $c=2.2*\sqrt{2}$ for all deblurring experiments through trial and error.

We compare the proposed QWSNM with five representative methods including SV-TV \cite{jia2019color}, IDD-BM3D \cite{danielyan2011bm3d}, IRCNN \cite{zhang2017learning}, MSWNNM \cite{yair2018multi} and QWNNM* \cite{huang2022quaternion}. The PSNR and SSIM values of compared deblurring methods with average blur AB(9,9)/$\sigma=15$ are shown in Table 3. The corresponding results with motion blur MB(20,60)/$\sigma=15$ and with Gaussian blur GB(25,1.6)/$\sigma=15$ are reported in Tables 4 and 5, respectively. The best results are highlighted in bold and the second-best results are underlined. As expected, it can be observed that the proposed QWSNM approach consistently generates much better deblurring results on average on different types of blur. Moreover, we display the restored images to demonstrate the perceptual superiority of the proposed method. Figs. 6, 7, 8 show the visual results of these competing methods. For Img26, we can clearly observe that SV-TV, IDD-BM3D, IRCNN and MSWNNM are apt to over-smooth the text image, leading to the characters unrecognized. Although QWNNM approach achieves better visual results than the above four methods, it still tends to generate the over-smooth phenomenon. By contrast, the text image recovered by our proposed QWSNM is sharper and the characters can be better recognized. For Img1, it is evident that SV-TV method generates color distortions and color artifacts, while IDD-BM3D, IRCNN and MSWNNM approaches are prone to over-smooth the image, losing many vital details. Although QWNNM method can prevent the over-smooth phenomenon to some extent, benefiting from the advantages of quaternion representation, it is still not able to recover some of the texture structures. Compared with these competing methods, our proposed QWSNM achieves the best visual results on the whole, which not only preserves more texture structures in the wallpaper of the window, but also effectively eliminates the undesirable color distortions.          


\begin{figure}
    \centering
    \includegraphics[width=0.95\textwidth]{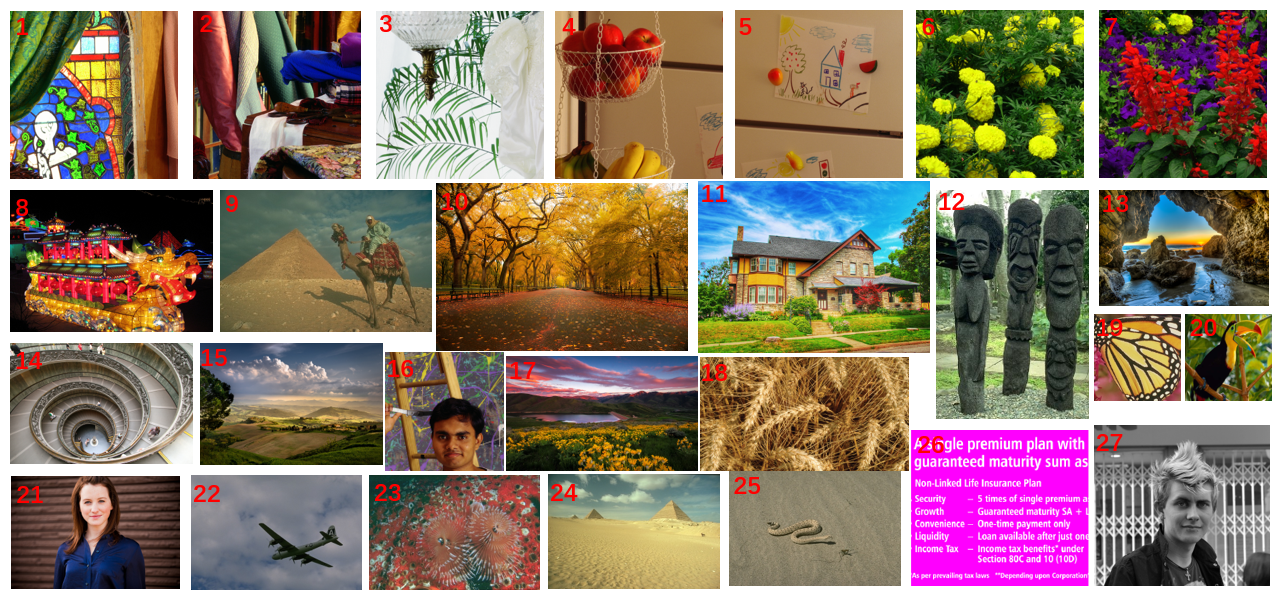}
    \caption{Images in Set27 \cite{huang2022quaternion}.}
\end{figure}

\begin{figure}[htbp]
    \centering
    \includegraphics[width=0.80\textwidth]{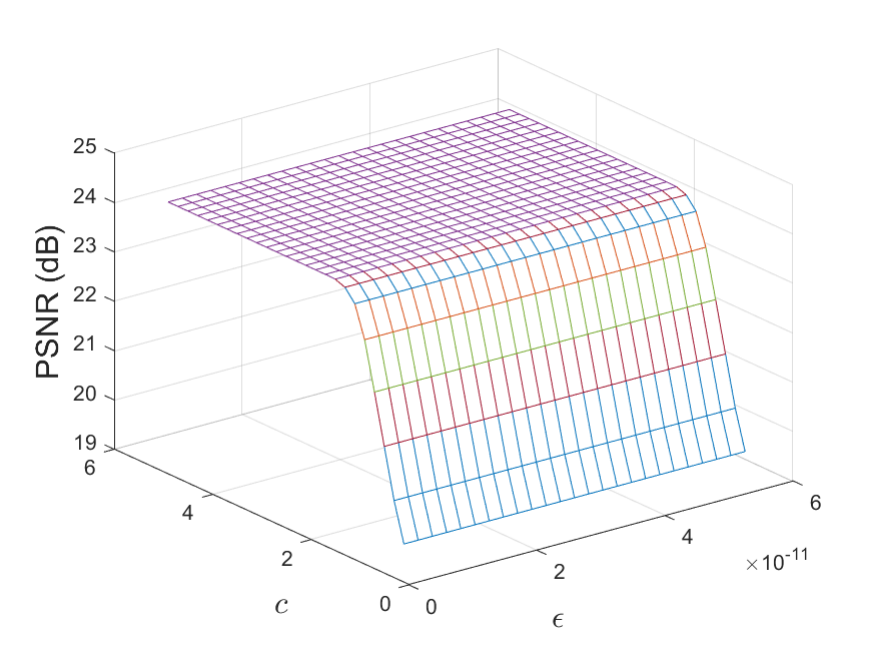}
    \caption{Evolution of PSNR values with different parameters $c$ and $\epsilon$ in the case of motion deblurring. Test image: ``Bird''.}
\end{figure}

\begin{figure}[htbp]
\centering
\subfigure[Img26]{\includegraphics[width=0.30\textwidth]{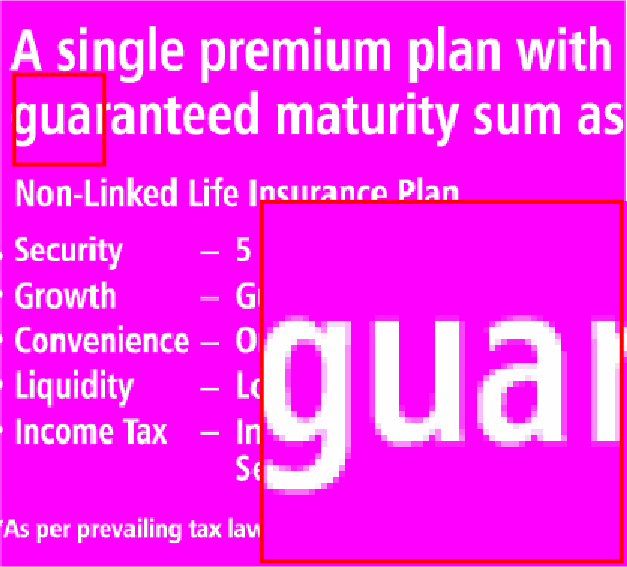}}
\subfigure[(15.55/0.8716)]{\includegraphics[width=0.30\textwidth]{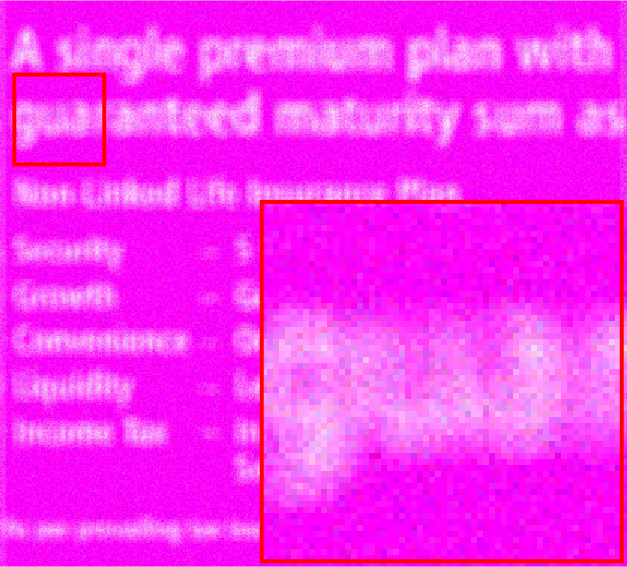}}
\subfigure[(17.23/0.9039)]{\includegraphics[width=0.30\textwidth]{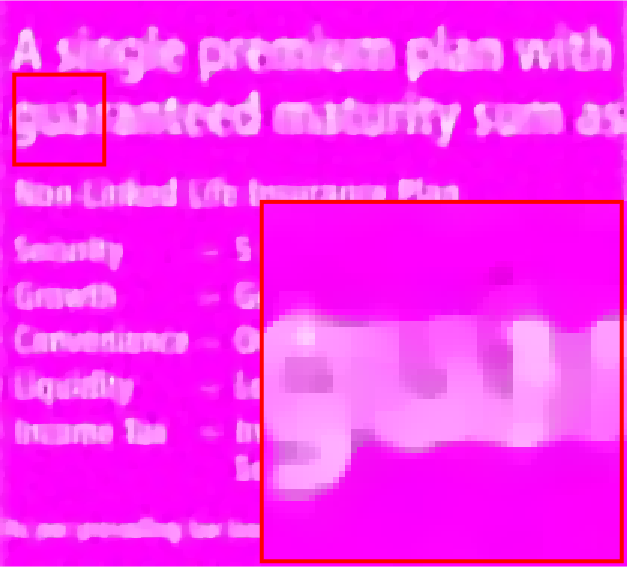}}
\subfigure[(18.23/0.9241)]{\includegraphics[width=0.30\textwidth]{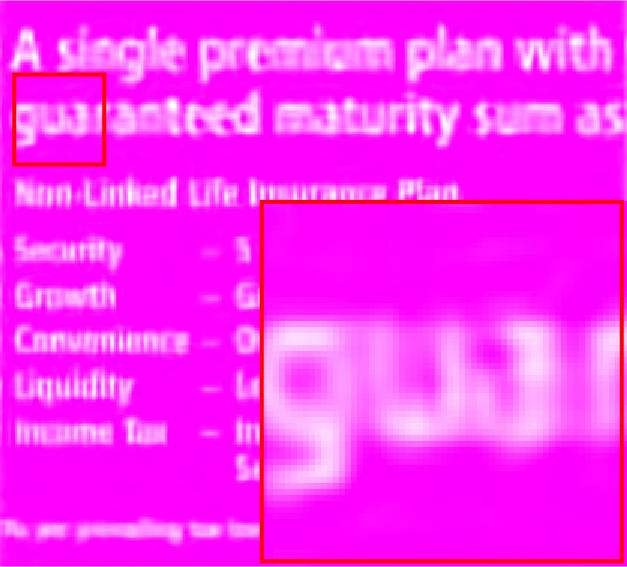}}
\subfigure[(19.31/0.9480)]{\includegraphics[width=0.30\textwidth]{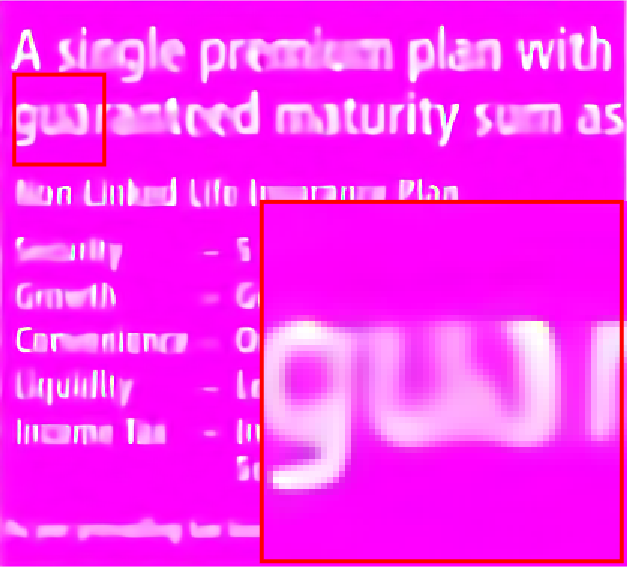}}
\subfigure[(19.03/0.9380)]{\includegraphics[width=0.30\textwidth]{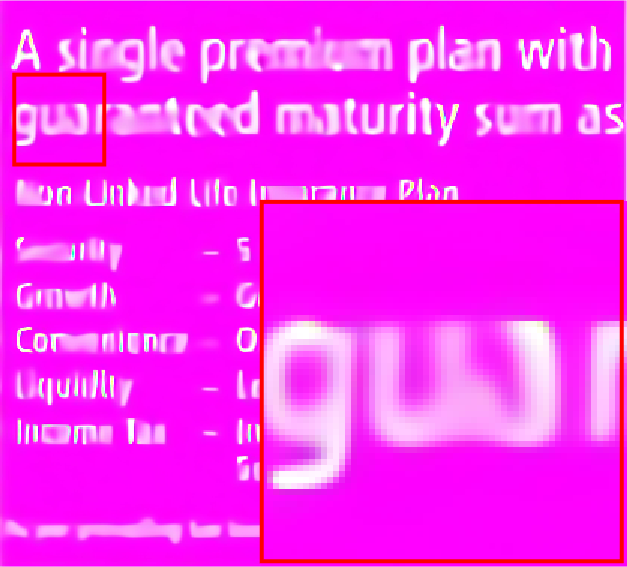}}
\subfigure[(20.48/0.9544)]{\includegraphics[width=0.30\textwidth]{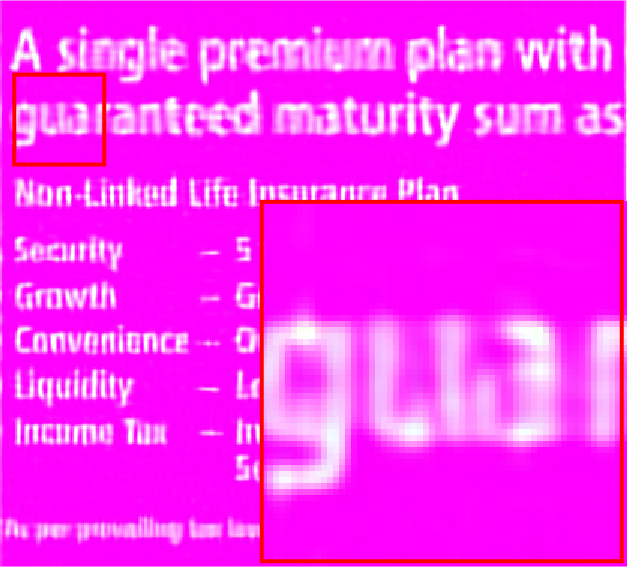}}
\subfigure[(21.06/0.9610)]{\includegraphics[width=0.30\textwidth]{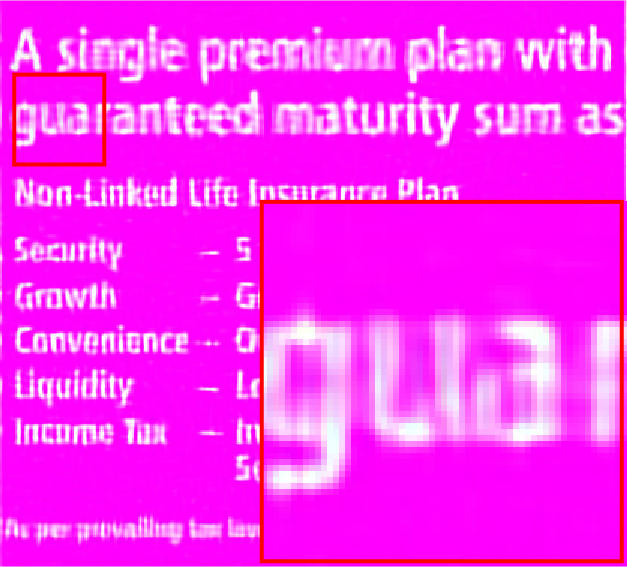}}
\caption{Deblurring performance on ``Img26'' with visual quality and numerical results (PSNR/SSIM). (a) Ground truth; (b) Degraded image with average kernel (9,9) and Gaussian noise level
$\sigma = 15$; Restored image by (c) SV-TV \cite{jia2019color}; (d) IDD-BM3D \cite{danielyan2011bm3d}; (e) IRCNN \cite{zhang2017learning}; (f) MSWNNM \cite{yair2018multi}; (g) QWNNM* \cite{huang2022quaternion}; (h) QWSNM (Ours).}
\end{figure}

\begin{figure}[htbp]
\centering
\subfigure[Plane]{\includegraphics[width=0.30\textwidth]{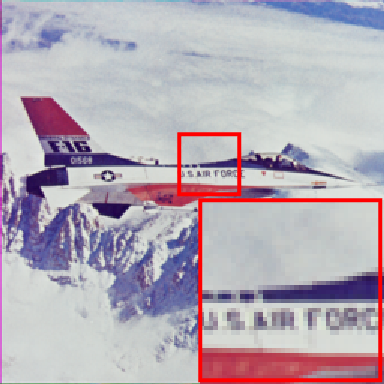}}
\subfigure[(21.88/0.4406)]{\includegraphics[width=0.30\textwidth]{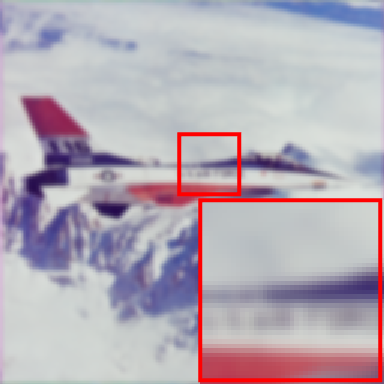}}
\subfigure[(24.52/0.7808)]{\includegraphics[width=0.30\textwidth]{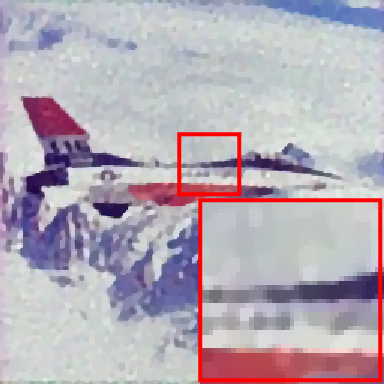}}
\subfigure[(26.04/0.8527)]{\includegraphics[width=0.30\textwidth]{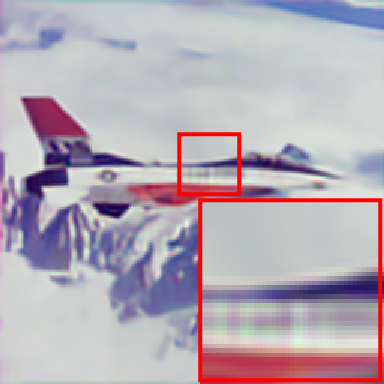}}
\subfigure[(27.04/0.8561)]{\includegraphics[width=0.30\textwidth]{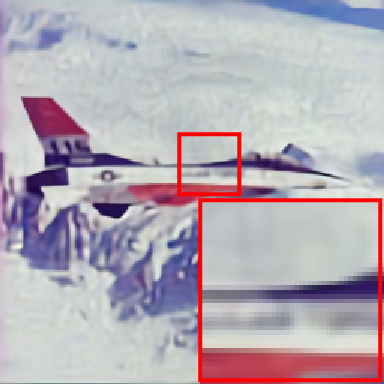}}
\subfigure[(27.18/0.8744)]{\includegraphics[width=0.30\textwidth]{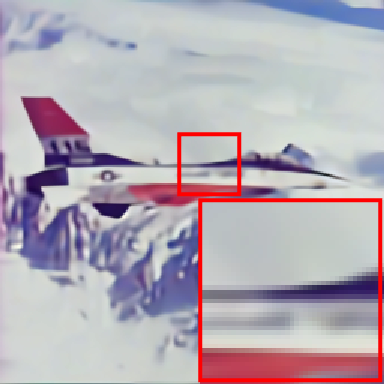}}
\subfigure[(27.19/0.8776)]{\includegraphics[width=0.30\textwidth]{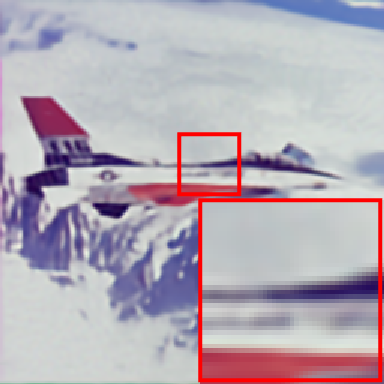}}
\subfigure[(27.30/0.8895)]{\includegraphics[width=0.30\textwidth]{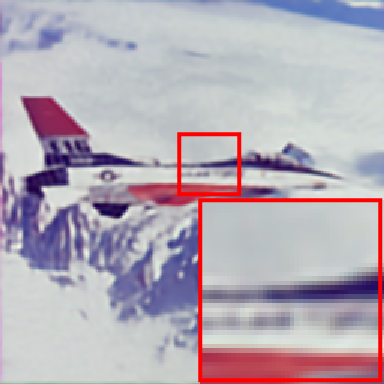}}
\caption{Deblurring performance on ``Plane'' with visual quality and numerical results (PSNR/SSIM). (a) Ground truth; (b) Degraded image with Gaussian kernel (25,1.6) and Gaussian noise level $\sigma =15$; Restored image by  (c) SV-TV \cite{jia2019color}; (d) IDD-BM3D \cite{danielyan2011bm3d}; (e) IRCNN \cite{zhang2017learning}; (f) MSWNNM \cite{yair2018multi}; (g) QWNNM* \cite{huang2022quaternion}; (h) QWSNM (Ours).}
\end{figure}

\begin{figure}[htbp]
\centering
\subfigure[Img1]{\includegraphics[width=0.30\textwidth]{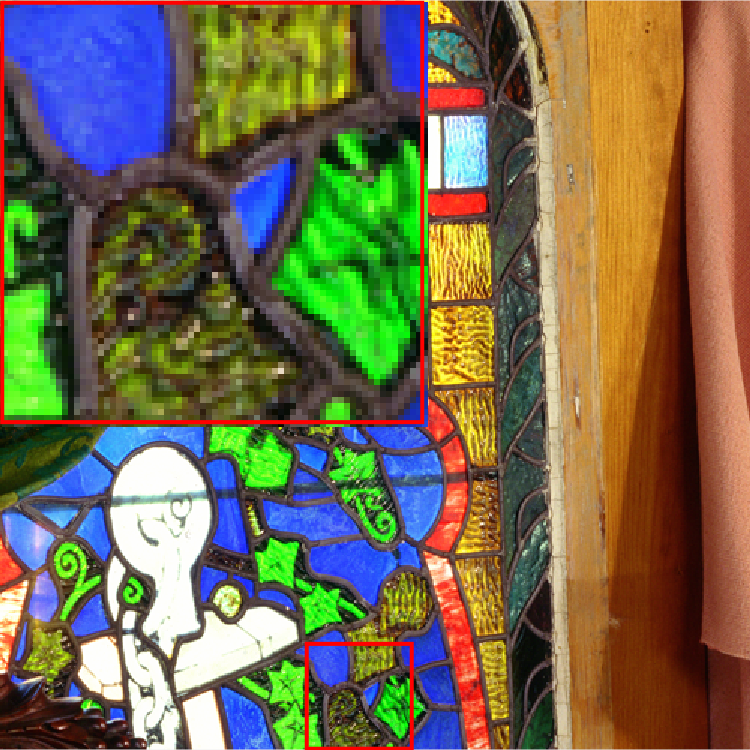}}
\subfigure[(17.47/0.6858)]{\includegraphics[width=0.30\textwidth]{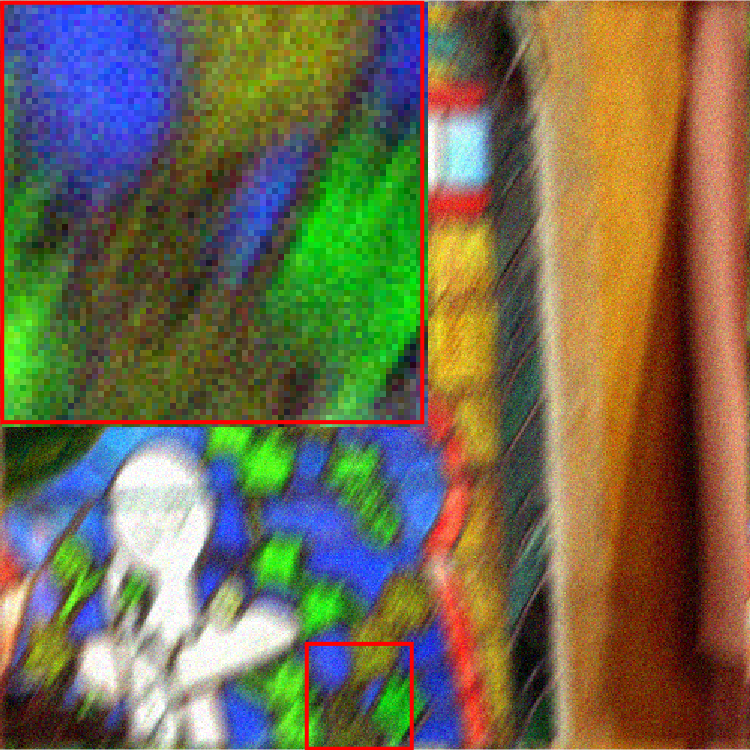}}
\subfigure[(20.89/0.8342)]{\includegraphics[width=0.30\textwidth]{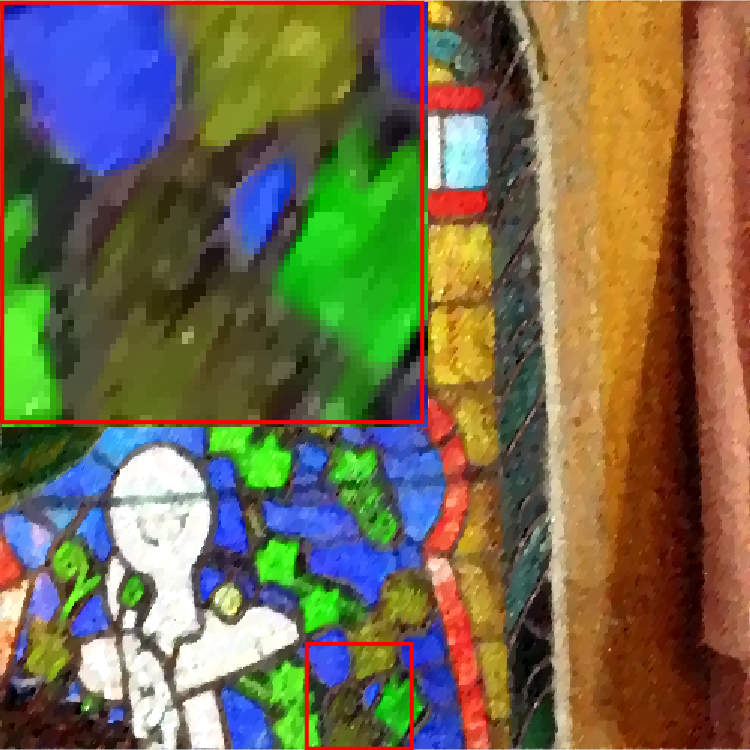}}
\subfigure[(20.66/0.8272)]{\includegraphics[width=0.30\textwidth]{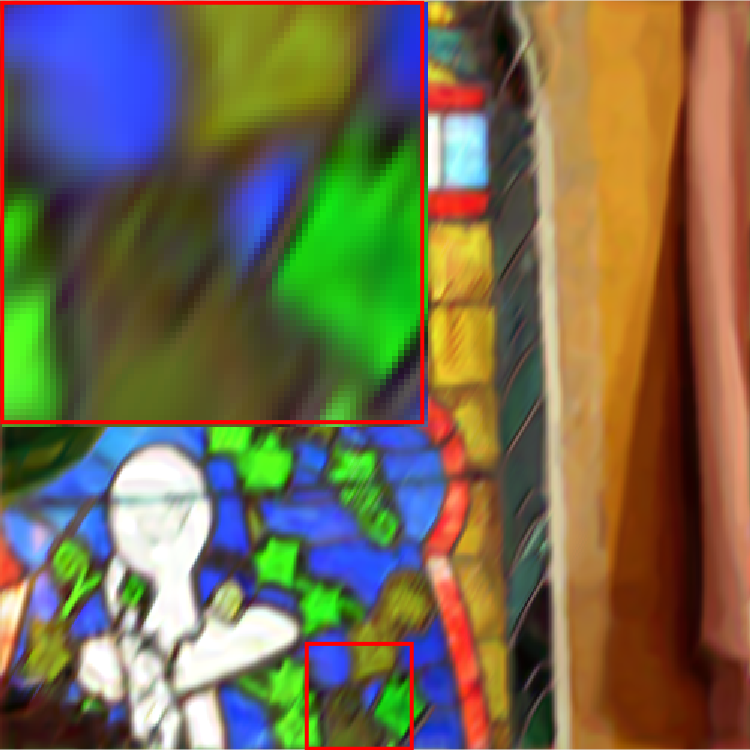}}
\subfigure[(22.58/0.8764)]{\includegraphics[width=0.30\textwidth]{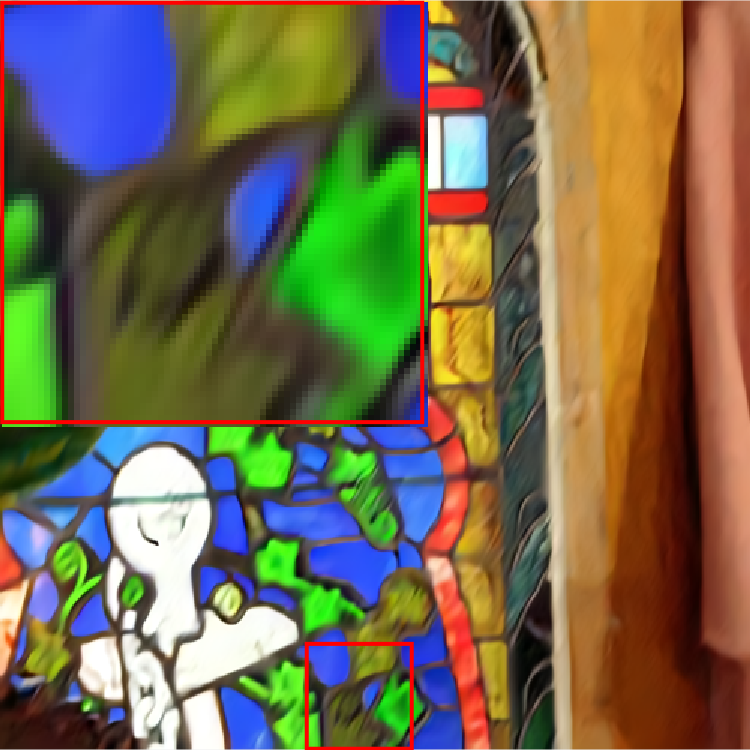}}
\subfigure[(22.56/0.8752)]{\includegraphics[width=0.30\textwidth]{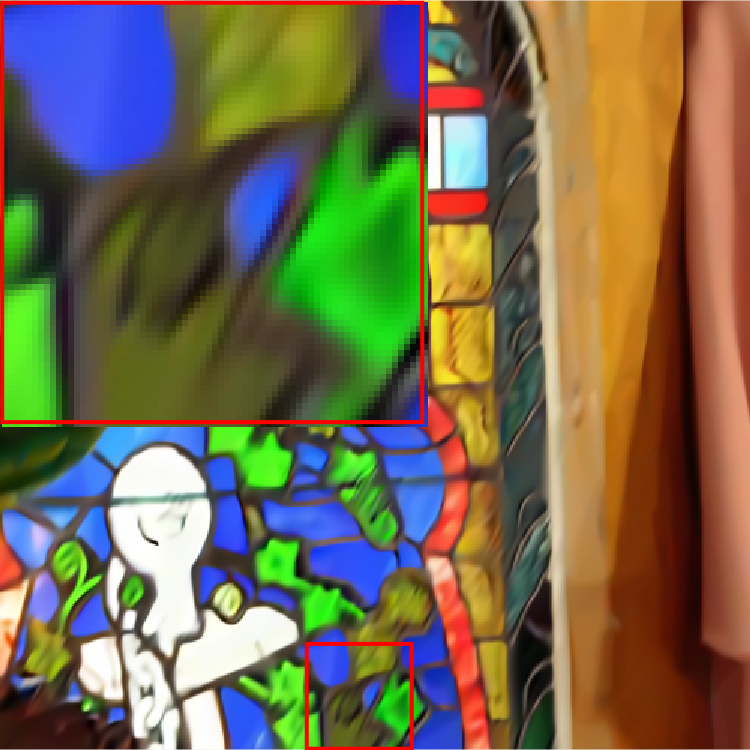}}
\subfigure[(22.56/0.8728)]{\includegraphics[width=0.30\textwidth]{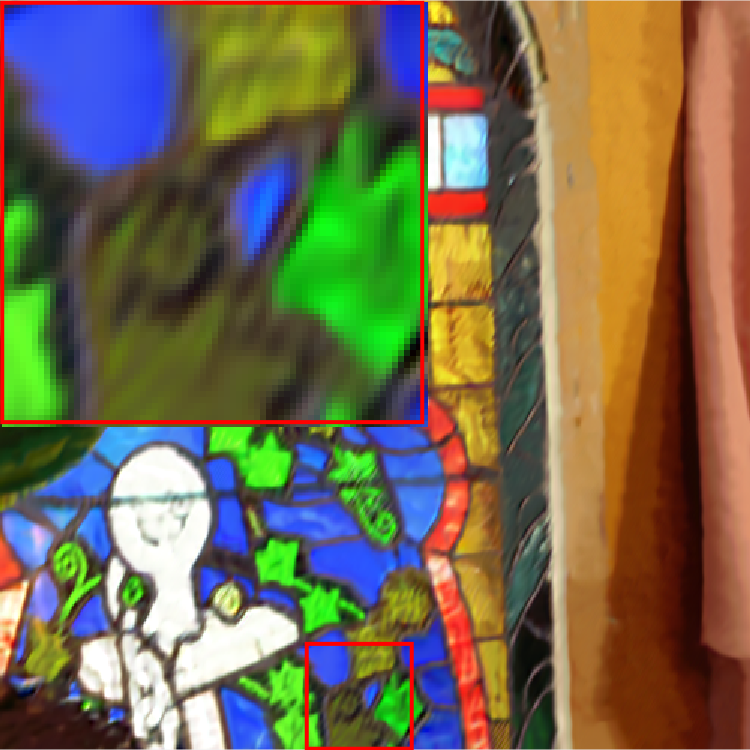}}
\subfigure[(22.92/0.8810)]{\includegraphics[width=0.30\textwidth]{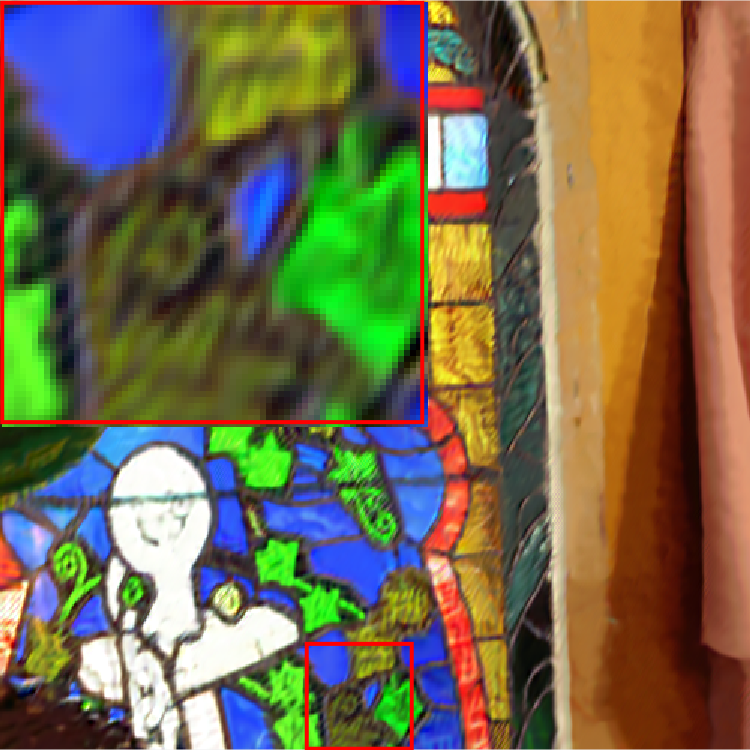}}
\caption{Deblurring performance on ``Img1'' with visual quality and numerical results (PSNR/SSIM). (a) Ground truth; (b) Degraded image with motion kernel (20,60) and Gaussian noise level $\sigma =15$; Restored image by  (c) SV-TV \cite{jia2019color}; (d) IDD-BM3D \cite{danielyan2011bm3d}; (e) IRCNN \cite{zhang2017learning}; (f) MSWNNM \cite{yair2018multi}; (g) QWNNM* \cite{huang2022quaternion}; (h) QWSNM (Ours).}
\end{figure}

\begin{table*}[htbp]
\begin{center}{ \tiny
\resizebox{\textwidth}{80mm}{
\begin{tabular}{|c||c|c|c|c|c|c|}
\hline
\ Methods & SV-TV \cite{jia2019color} & IDD-BM3D \cite{danielyan2011bm3d} & IRCNN \cite{zhang2017learning} & MSWNNM \cite{yair2018multi} & QWNNM* \cite{huang2022quaternion} & QWSNM (Ours) \\
\hline
\multicolumn{7}{|c|}{Set27}\\
\hline
Img1 & 21.50/0.8369 & 21.47/0.8348 & 22.68/0.8683 & 22.66/0.8672 & \underline{22.83/0.8696} & \textbf{23.17/0.8764} \\
Img2 & 25.59/0.8497 & 25.02/0.8460 & \underline{26.53/0.8714} & 26.52/0.8712 & 26.45/0.8712 & \textbf{26.64/0.8724} \\
Img3 & 21.85/0.7563 & 20.92/0.7365 & 23.52/0.8177 & 23.60/\underline{0.8221} & \underline{23.95}/0.8189 & \textbf{24.41/0.8350} \\
Img4 & 26.31/0.9564 & 26.07/0.9546 & 27.42/0.9653 & 27.45/0.9657 & \underline{27.68/0.9671} & \textbf{27.81/0.9680} \\
Img5 & 29.92/0.9710 & 29.89/0.9716 & 31.58/0.9806 & 31.68/0.9808 & \underline{31.85/0.9815} & \textbf{32.01/0.9823} \\
Img6 & 23.00/0.8640 & 22.92/0.8619 & 23.78/0.8791 & 23.73/0.8792 & \underline{24.07/0.8865} & \textbf{24.14/0.8870} \\
Img7 & 23.76/0.8541 & 24.36/0.8699 & 24.99/0.8845 & 24.93/0.8839 & \underline{25.25/0.8914} & \textbf{25.34/0.8931} \\
Img8 & 21.03/0.7982 & 21.25/0.8430 & 22.18/0.8658 & 22.13/0.8398 & \underline{22.53/0.8686} & \textbf{22.81/0.8694} \\
Img9 & 28.71/0.9310 & 28.61/0.9343 & 28.92/0.9385 & 28.91/0.9384 & \underline{28.98/0.9391} & \textbf{29.00/0.9394} \\
Img10 & 19.65/0.8445 & 19.59/0.8439 & 20.30/0.8648 & 20.26/0.8639 & \underline{20.57/0.8723} & \textbf{20.63/0.8743} \\
Img11 & 20.75/0.8124 & 20.59/0.8071 & 21.63/0.8414 & 21.58/0.8386 & \underline{21.96/0.8502} & \textbf{22.06/0.8536} \\
Img12 & 21.77/0.5826 & 21.16/0.5424 & \underline{21.89/0.5992} & 21.83/0.5919 & 21.88/0.5960 & \textbf{21.92/0.5995} \\
Img13 & 23.59/0.7909 & 23.27/0.7856 & 24.43/0.8228 & 24.38/0.8169 & \underline{24.65/0.8267} & \textbf{24.71/0.8324} \\
Img14 & 22.53/0.7253 & 22.09/0.7085 & 23.66/0.7786 & 23.68/0.7807 & \underline{24.02/0.7885} & \textbf{24.30/0.7978} \\
Img15 & 27.86/0.9044 & 27.54/0.9051 & 28.60/0.9152 & 28.58/0.9150 & \underline{28.72/0.9173} & \textbf{28.74/0.9176} \\
Img16 & 26.01/0.8077 & 26.58/0.8237 & 27.46/0.8419 & 27.46/0.8402 & \underline{28.08/0.8604} & \textbf{28.18/0.8606} \\
Img17 & 23.36/0.8375 & 23.36/0.8375 & 24.15/08592 & 24.11/0.8573 & \underline{24.37/0.8589} & \textbf{24.47/0.8621} \\
Img18 & 18.97/0.8045 & 18.78/0.7964 & 19.21/0.8166 & 19.15/0.8136 & \underline{19.42/0.8246} & \textbf{19.48/0.8256} \\
Img19 & 22.06/0.8944 & 20.87/0.8584 & 22.60/0.9038 & 22.72/0.9065 & \underline{23.05/0.9069} & \textbf{23.43/0.9191} \\
Img20 & 26.07/0.9135 & 26.63/0.9347 & 27.83/0.9446 & \underline{27.88/0.9452} & 27.71/0.9431 & \textbf{27.94/0.9475} \\
Img21 & 32.12/0.9536 & 32.52/0.9663 & \underline{33.71/0.9677} & \textbf{33.73/0.9700} & 33.24/0.9651 & 33.30/0.9656 \\
Img22 & 33.60/0.9458 & 32.61/0.9524 & \underline{34.06/0.9607} & \textbf{34.26/0.9619} & 33.78/0.9526 & 34.04/0.9544 \\
Img23 & 23.43/0.8129 & 23.28/0.8104 & 23.88/0.8333 & 23.80/0.8292 & \underline{24.26/0.8466} & \textbf{24.30/0.8482} \\
Img24 & 29.27/0.9563 & 29.14/0.9590 & \textbf{29.69/0.9645} & \underline{29.65/0.9642} & 29.55/0.9619 & 29.64/0.9627 \\
Img25 & 26.00/0.7769 & 25.83/0.7733 & \textbf{26.17/0.7843} & 26.13/0.7810 & 26.13/0.7807 & \underline{26.15/0.7814} \\
Img26 & 17.23/0.9039 & 18.23/0.9241 & 19.31/0.9480 & 19.03/0.9380 & \underline{20.48/0.9544} & \textbf{21.06/0.9610} \\
Img27 & 29.13/0.8344 & 29.12/0.8608 & \textbf{31.14/0.8848} & \underline{31.10/0.8835} & 30.67/0.8739 & 30.84/0.8744 \\
Average & 24.63/0.8487  & 24.51/0.8490 & 25.60/0.8743 & 25.59/0.8684 & \underline{25.76/0.8762} & \textbf{25.94/0.8799} \\
\hline
\multicolumn{7}{|c|}{Set12}\\
\hline
Bird & 23.01/0.7037 & 22.77/0.6913 & 23.33/0.7240 & 23.27/0.7191 & \underline{23.49/0.7316} & \textbf{23.55/0.7361} \\
Plane & 24.24/0.7709 & 23.80/0.7837 & 25.00/0.8220 & 24.98/0.8222 & \underline{25.12/0.8265} & \textbf{25.22/0.8295} \\
Baboon & 22.02/0.6954 & 21.97/0.6790 & \underline{22.38/0.7004} & 22.31/0.6945 & 22.37/0.6984 & \textbf{22.46/0.7023} \\
Bee & 27.03/0.8886 & 28.11/0.9103 & 28.67/0.9129 & 28.75/0.9137 & \underline{29.09/0.9174} & \textbf{29.20/0.9186} \\
Aquatic & 23.26/0.7307 & 22.89/0.7308 & 24.06/0.7804 & 24.03/0.7824 & 24.11/0.7861 & \textbf{24.28/0.7874} \\
Barbara & 22.86/0.7804 & 22.66/0.7731 & 23.60/\underline{0.8070} & 23.60/0.8068 & \underline{23.62}/0.8040 & \textbf{23.79/0.8087}\\
Boat & 22.38/0.7679 & 21.71/0.7555 & \underline{22.98/0.8019} & 22.93/0.7977 & 22.96/0.8006 & \textbf{23.17/0.8042} \\
House & 26.34/0.9085 & 26.64/0.9111 & 28.04/0.9335 & 28.15/0.9342 & \underline{28.61/0.9360} & \textbf{28.95/0.9395} \\
Peppers & 22.57/0.9251 & 23.73/0.9298 & 24.63/0.9413 & \underline{24.65/0.9415} & 24.64/0.9405 & \textbf{24.80/0.9440} \\
Starfish & 23.15/0.9207 & 23.02/0.9251 & 23.86/0.9354 & 23.83/0.9354 & \underline{24.03/0.9365} & \textbf{24.15/0.9367} \\
Lena & 27.55/0.9575 & 27.29/0.9560 & \underline{28.41/0.9643} & 28.39/0.9639 & 28.37/0.9640 & \textbf{28.60/0.9654} \\
Pelican & 24.58/0.8221 & 23.93/0.8122 & 24.85/\textbf{0.8402} & 24.75/\underline{0.8394} & \underline{24.92}/0.8339 & \textbf{25.04}/0.8362 \\
Average & 24.08/0.8226 & 24.04/0.8215 & 24.98/0.8469 & 24.97/0.8459 & \underline{25.11/0.8471} & \textbf{25.27/0.8507} \\
\hline
\end{tabular}}
}
\caption{PSNR (dB) and SSIM values of different restoration models for AB(9,9)/$\sigma=15$.
The best results are highlighted in \textbf{bold} and the second-best results are \underline{underlined}.}
\end{center}
\end{table*}

\begin{table*}[htbp]
\begin{center}{ \tiny
\resizebox{\textwidth}{80mm}{
\begin{tabular}{|c||c|c|c|c|c|c|}
\hline
\ Methods & SV-TV \cite{jia2019color} & IDD-BM3D \cite{danielyan2011bm3d} & IRCNN \cite{zhang2017learning} & MSWNNM \cite{yair2018multi} & QWNNM* \cite{huang2022quaternion} & QWSNM (Ours) \\
\hline
\multicolumn{7}{|c|}{Set27}\\
\hline
Img1 & 23.28/0.8822 & 23.46/0.8822 & 24.20/0.9052 & 24.19/0.9056 & \underline{24.46/0.9075} & \textbf{24.50/0.9089} \\
Img2 & 27.16/0.8803 & 27.02/0.8871 & 28.19/0.8988 & 28.30/0.9032 & \underline{28.32/0.9041} & \textbf{28.38/0.9063} \\
Img3 & 25.12/0.8501 & 24.53/0.8442 & \underline{26.84}/0.8804 & 26.78/0.8667 & 26.80/\underline{0.8930} & \textbf{27.15/0.8950} \\
Img4 & 27.99/0.9690 & 28.20/0.9702 & 29.34/0.9764 & \underline{29.74/0.9787} & 29.68/0.9785 & \textbf{29.79/0.9791} \\
Img5 & 31.61/0.9796 & 32.58/0.9841 & 33.18/0.9861 & 33.52/0.9870 & \underline{33.90/0.9879} & \textbf{33.96/0.9881} \\
Img6 & 25.03/0.9077 & 25.00/0.9047 & 26.17/0.9228 & 26.18/0.9248 & \underline{26.24/0.9250} & \textbf{26.31/0.9261} \\
Img7 & 26.04/0.9074 & 26.73/0.9171 & 27.57/0.9303 & 27.52/0.9311 & \underline{27.71/0.9339} & \textbf{27.76/0.9350} \\
Img8 & 22.71/0.8209 & 23.26/0.8923 & 23.94/0.8764 & 23.93/0.7691 & \underline{24.27/0.8975} & \textbf{24.37/0.8992} \\
Img9 & 29.57/0.9378 & 29.50/0.9455 & 29.69/0.9441 & 29.77/0.9452 & \underline{29.90/0.9493} & \textbf{29.96/0.9500}\\
Img10 & 20.64/0.8724 & 20.70/0.8749 & 21.58/0.8963 & 21.57/0.8960 & \underline{21.72/0.8998} & \textbf{21.77/0.9013} \\
Img11 & 21.78/0.8459 & 21.79/0.8423 & 22.77/0.8746 & 22.78/0.8741 & \underline{22.94/0.8750} & \textbf{23.00/0.8773} \\
Img12 & 22.81/0.6606 & 22.12/0.6088 & \textbf{22.95/0.6760} & \underline{22.93/0.6698} & 22.85/0.6606 & 22.90/0.6653 \\
Img13 & 25.39/0.8491 & 24.90/0.8347 & 26.40/0.8760 & 26.35/0.8750 & \underline{26.52/0.8776} & \textbf{26.56/0.8786} \\
Img14 & 24.69/0.8160 & 24.19/0.7977 & 25.89/0.8449 & 25.83/0.8432 & \underline{26.14/0.8604} & \textbf{26.25/0.8619} \\
Img15 & 29.25/0.9241 & 29.01/0.9225 & 29.85/0.9293 & 29.80/0.9279 & \underline{30.13/0.9373} & \textbf{30.19/0.9378} \\
Img16 & 28.16/0.8641 & 29.37/0.8897 & 30.12/0.8979 & 30.10/0.8965 & \underline{30.76/0.9108} & \textbf{30.80/0.9114} \\
Img17 & 24.51/0.8674 & 24.67/0.8682 & 25.45/0.8806 & 25.40/0.8785 & \underline{25.64/0.8869} & \textbf{25.67/0.8875} \\
Img18 & 20.44/0.8585 & 20.28/0.8523 & 20.90/0.8747 & 20.86/0.8722 & \underline{21.01/0.8776} & \textbf{21.06/0.8791} \\
Img19 & 24.85/0.9386 & 23.93/0.9248 & \textbf{25.70/0.9494} & \underline{25.69/0.9489} & 25.27/0.9444 & 25.44/0.9459 \\
Img20 & 28.10/0.9310 & 29.60/0.9626 & 30.58/0.9597 & 30.56/0.9588 & \underline{30.84/0.9676} & \textbf{30.87/0.9688} \\
Img21 & 33.50/0.9624 & 34.15/0.9743 & 34.37/0.9684 & 34.35/0.9678 & \textbf{35.01/0.9757} & \underline{34.90/0.9743} \\
Img22 & 34.47/0.9306 & 34.81/0.9658 & 35.33/0.9439 & 35.45/0.9500 & \textbf{35.84/0.9635} & \underline{35.76/0.9600} \\
Img23 & 25.55/0.8834 & 25.44/0.8784 & 26.30/0.9022 & 26.38/0.9025 & \underline{26.60/0.9065} & \textbf{26.62/0.9071} \\
Img24 & 30.06/0.9603 & 29.85/0.9655 & 30.41/0.9667 & 30.45/0.9672 & \underline{30.51/0.9698} & \textbf{30.54/0.9701} \\
Img25 & 26.42/0.7889 & 26.25/0.7826 & \underline{26.50/0.7918} & 26.39/0.7912 & 26.48/0.7916 & \textbf{26.52/0.7920} \\
Img26 & 19.56/0.9405 & 21.99/0.9689 & 22.84/0.9740 & 22.69/0.9658 & \underline{23.35/0.9777} & \textbf{23.60/0.9791} \\
Img27 & 31.18/0.8735 & 31.12/0.8925 & 32.77/0.8883 & 32.70/0.8856 & \underline{32.89/0.9084} & \textbf{32.96/0.9098} \\
Average & 26.29/0.8853  & 26.46/0.8903 & 27.40/0.9044 & 27.43/0.9007 & \underline{27.62/0.9099} & \textbf{27.69/0.9109} \\
\hline
\multicolumn{7}{|c|}{Set12}\\
\hline
Bird & 24.52/0.7808 & 24.47/0.7736 & 25.12/0.8080 & 25.14/0.8084 & \underline{25.20/0.8093} & \textbf{25.27/0.8130} \\
Plane & 25.90/0.8014 & 26.04/0.8527 & 27.04/0.8561 & 27.18/0.8744 & \underline{27.19/0.8776} & \textbf{27.30/0.8895} \\
Baboon & 22.97/0.7466 & 22.98/0.7282 & 23.42/\textbf{0.7617} & 23.36/0.7533 & \underline{23.43}/0.7516 & \textbf{23.48}/\underline{0.7572} \\
Bee & 28.66/0.9039 & 30.26/0.9308 & 30.65/0.9293 & 30.98/0.9363 & \underline{31.32/0.9391} & \textbf{31.42/0.9455} \\
Aquatic & 25.09/0.7961 & 25.37/0.8271 & 26.30/0.8486 & 26.42/0.8638 & \underline{26.51/0.8669} & \textbf{26.58/0.8702} \\
Barbara & 24.15/0.8275 & 24.27/0.8301 & 24.91/0.8530 & \underline{24.94/0.8543} & 24.92/0.8517 & \textbf{25.00/0.8585}\\
Boat & 23.68/0.8086 & 23.62/0.8365 & \underline{24.71}/0.8501 & \textbf{24.74}/\underline{0.8534} & 24.55/0.8504 & 24.62/\textbf{0.8585} \\
House & 27.32/0.9242 & 28.61/0.9402 & 29.10/0.9456 & 29.12/0.9460 & \underline{29.82/0.9496} & \textbf{29.90/0.9500} \\
Peppers & 25.55/0.9501 & 25.94/0.9561 & \underline{26.65/0.9622} & 26.64/0.9620 & 26.57/0.9615 & \textbf{26.69/0.9643} \\
Starfish & 25.15/0.9388 & 25.47/0.9487 & 26.60/0.9557 & 26.65/0.9560 & \underline{26.67/0.9596} & \textbf{26.77/0.9654} \\
Lena & 29.06/0.9684 & 29.27/0.9691 & 30.27/0.9754 & 30.24/0.9750 & \underline{30.48/0.9761} & \textbf{30.55/0.9800} \\
Pelican & 25.70/0.8403 & 25.15/0.8412 & \underline{26.18/0.8610} & 26.16/0.8605 & 26.13/0.8602 & \textbf{26.26/0.8701} \\
Average & 25.65/0.8572 & 25.95/0.8687 & 26.75/0.8839 & 26.80/0.8870 & \underline{26.90/0.8878} & \textbf{26.98/0.8931} \\
\hline
\end{tabular}}
}
\caption{PSNR (dB) and SSIM values of different restoration models for GB(25,1.6)/$\sigma=15$. The best results are highlighted in \textbf{bold} and the second-best results are \underline{underlined}.}
\end{center}
\end{table*}

\begin{table*}[htbp]
\begin{center}{ \tiny
\resizebox{\textwidth}{80mm}{
\begin{tabular}{|c||c|c|c|c|c|c|}
\hline
\ Methods & SV-TV \cite{jia2019color} & IDD-BM3D \cite{danielyan2011bm3d} & IRCNN \cite{zhang2017learning} & MSWNNM \cite{yair2018multi} & QWNNM* \cite{huang2022quaternion} & QWSNM (Ours) \\
\hline
\multicolumn{7}{|c|}{Set27}\\
\hline
Img1 & 20.89/0.8342 & 20.66/0.8272 & \underline{22.58/0.8764} & 22.56/0.8752 & 22.55/0.8728 & \textbf{22.92/0.8810} \\
Img2 & 24.96/0.8331 & 24.20/0.8279 & 26.13/0.8668 & 26.13/0.8660 & \underline{26.19/0.8695} & \textbf{26.35/0.8715} \\
Img3 & 22.05/0.7712 & 20.01/0.7163 & 24.05/0.8387 & \underline{24.14/0.8399} & 24.06/0.8359 & \textbf{24.24/0.8412} \\
Img4 & 26.47/0.9591 & 26.03/0.9561 & 27.77/0.9684 & 27.80/0.9687 & \underline{28.22/0.9712} & \textbf{28.28/0.9714} \\
Img5 & 29.47/0.9695 & 29.13/0.9691 & 31.08/0.9788 & 31.11/0.9787 & \underline{31.68/0.9816} & \textbf{31.83/0.9820} \\
Img6 & 22.59/0.8520 & 22.29/0.8458 & 23.62/0.8755 & 23.57/0.8741 & \underline{23.88/0.8815} & \textbf{23.99/0.8833} \\
Img7 & 22.50/0.8231 & 23.05/0.8408 & 24.23/0.8703 & 24.16/0.8690 & \underline{24.56/0.8799} & \textbf{24.71/0.8834} \\
Img8 & 20.40/0.7541 & 20.80/0.8098 & 21.79/0.8535 & 21.74/0.7293 & \underline{22.17/0.9099} & \textbf{22.48/0.9352} \\
Img9 & 28.14/0.9202 & 27.93/0.9251 & \textbf{28.61/0.9360} & 28.55/0.9350 & 28.58/0.9349 & \underline{28.59/0.9351} \\
Img10 & 19.48/0.8410 & 19.23/0.8370 & 20.29/0.8647 & 20.24/0.8632 & \underline{20.58/0.8728} & \textbf{20.65/0.8748} \\
Img11 & 20.30/0.8028 & 19.92/0.7916 & 21.33/0.8362 & 21.27/0.8330 & \underline{21.56/0.8435} & \textbf{21.76/0.8470} \\
Img12 & 21.62/0.5859 & 20.84/0.5268 & \textbf{21.83/0.6056} & 21.75/0.5968 & 21.72/0.5975 & \underline{21.82/0.5984} \\
Img13 & 23.27/0.7840 & 22.56/0.7680 & 24.35/0.8222 & 24.27/0.8160 & \underline{24.42/0.8224} & \textbf{24.59/0.8260} \\
Img14 & 22.04/0.7166 & 21.15/0.6841 & 23.49/0.7790 & 23.49/0.7782 & \underline{23.79/0.7840} & \textbf{24.13/0.7979} \\
Img15 & 27.07/0.8937 & 26.36/0.8883 & 27.98/\underline{0.9097} & 27.93/0.9092 & \underline{28.04}/0.9070 & \textbf{28.19/0.9160} \\
Img16 & 25.57/0.7891 & 25.71/0.8053 & 27.01/0.8327 & 27.01/0.8300 & \underline{27.19/0.8436} & \textbf{27.66/0.8533} \\
Img17 & 22.81/0.8187 & 22.58/0.8149 & 23.92/\underline{0.8478} & 23.88/0.8449 & \underline{24.15}/0.8467 & \textbf{24.32/0.8515} \\
Img18 & 19.29/0.8263 & 18.87/0.8119 & 19.63/0.8408 & 19.56/0.8375 & \underline{19.90/0.8507} & \textbf{19.96/0.8521} \\
Img19 & 20.99/0.8738 & 19.19/0.8249 & 22.24/\underline{0.9006} & \textbf{22.33/0.9021} & 22.06/0.8943 & \underline{22.27}/0.8995 \\
Img20 & 24.82/0.8786 & 25.09/0.9105 & 26.94/0.9377 & \underline{26.94/0.9378} & 26.75/0.9357 & \textbf{27.03/0.9394} \\
Img21 & 31.52/0.9483 & 31.60/0.9591 & \underline{33.34/0.9670} & \textbf{33.35/0.9672} & 32.67/0.9611 & 32.75/0.9613 \\
Img22 & 32.56/0.9247 & 30.88/0.9362 & 33.38/\textbf{0.9594} & 33.47/\underline{0.9580} & \underline{33.75}/0.9549 & \textbf{33.81}/0.9556 \\
Img23 & 23.21/0.8145 & 22.88/0.8051 & 23.72/0.8366 & 23.64/0.8314 & \underline{24.26/0.8542} & \textbf{24.32/0.8549} \\
Img24 & 28.70/0.9483 & 28.57/0.9526 & \textbf{29.31/0.9604} & \underline{29.24/0.9601} & 29.16/0.9578 & 29.17/0.9582 \\
Img25 & 25.84/0.7787 & 25.63/0.7715 & \textbf{26.16/0.7884} & 26.07/0.7820 & 26.05/0.7831 & \underline{26.08/0.7835} \\
Img26 & 17.26/0.9082 & 18.42/0.9311 & 19.92/0.9496 & 19.68/0.9487 & \underline{22.02/0.9705} & \textbf{22.08/0.9708} \\
Img27 & 28.35/0.8276 & 27.05/0.8296 & \textbf{30.54/0.8824} & \underline{30.50/0.8810} & 30.02/0.8649 & 30.32/0.8700 \\
Average & 24.15/0.8399 & 23.73/0.8358  & 25.38/0.8735 & 25.35/0.8685 & \underline{25.56/0.8753} & \textbf{25.72/0.8813} \\
\hline
\multicolumn{7}{|c|}{Set12}\\
\hline
Bird & 22.77/0.6867 & 22.37/0.6725 & 23.34/0.7266 & 23.25/0.7186 & \underline{23.59/0.7417} & \textbf{23.62/0.7425} \\
Plane & 23.60/0.7149 & 23.16/0.7555 & 24.67/0.8134 & 24.66/0.8123 & \underline{24.75/0.8196} & \textbf{24.83/0.8212} \\
Baboon & 21.70/0.6936 & 21.74/0.6730 & 22.16/0.7003 & 22.08/0.6912 & \underline{22.26/0.7018} & \textbf{22.30/0.7024} \\
Bee & 25.48/0.8554 & 26.59/0.8868 & 27.97/0.9064 & 28.02/0.9066 & \underline{28.63/0.9130} & \textbf{28.75/0.9140} \\
Aquatic & 22.77/0.6857 & 22.25/0.6919 & 23.91/0.7730 & 23.87/0.7732 & \underline{24.16/0.7792} & \textbf{24.25/0.7835} \\
Barbara & 22.59/0.7728 & 22.44/0.7701 & 23.44/0.8113 & 23.40/0.8080 & \underline{23.64/0.8168} & \textbf{23.74/0.8199}\\
Boat & 22.02/0.7383 & 21.11/0.7355 & 23.00/0.8068 & 22.93/0.8032 & \underline{22.95/0.8043} & \textbf{23.03/0.8072} \\
House & 25.04/0.8795 & 25.02/0.8845 & 26.33/0.9132 & 26.46/0.9145 & \underline{27.82/0.9275} & \textbf{28.00/0.9300} \\
Peppers & 23.15/0.9174 & 23.31/0.9257 & \underline{24.56/0.9410} & 24.54/0.9405 & 24.42/0.9381 & \textbf{24.61/0.9412} \\
Starfish & 22.55/0.9069 & 22.15/0.9133 & 23.51/0.9339 & 23.47/0.9337 & \underline{23.87/0.9368} & \textbf{23.95/0.9381} \\
Lena & 27.55/0.9564 & 27.05/0.9571 & \underline{28.66/0.9674} & 28.65/0.9670 & 28.63/0.9672 & \textbf{28.73/0.9678} \\
Pelican & 24.36/0.8167 & 23.16/0.8000 & 24.88/0.8438 & 24.79/0.8420 & \underline{25.03/0.8399} & \textbf{25.07/0.8405} \\
Average & 23.63/0.8020 & 23.36/0.8055 & 24.71/0.8448 & 24.68/0.8426 & \underline{24.98/0.8489} & \textbf{25.07/0.8506} \\
\hline
\end{tabular}}
}
\caption{PSNR (dB) and SSIM values of different restoration models for MB(20,60)/$\sigma=15$. The best results are highlighted in \textbf{bold} and the second-best results are \underline{underlined}.}
\end{center}
\end{table*}

\begin{table*}
\begin{center}{\tiny
\resizebox{\textwidth}{60mm}{
\begin{tabular}{|c||c|c|c|c|c|c|c|c|c|c|}
\hline
\ $p$ value & $p=0.15$ & $p=0.25$ & $p=0.35$ & $p=0.45$ & $p=0.55$ & $p=0.65$ & $p=0.75$ & $p=0.85$ & $p=0.95$ & $p=1$ \\
\hline
\multicolumn{11}{|c|}{MB(20,60)/$\sigma=15$}\\
\hline
Bird & 19.28/0.4117 & 19.28/0.4117 & 19.28/0.4117 & 19.28/0.4117 & 19.28/0.4117 & 19.33/0.4144 & 19.51/0.4272 & 20.26/0.4818 & \textbf{23.62/0.7425} & 23.41/0.7270 \\
Plane & 19.36/0.3255 & 19.36/0.3255 & 19.36/0.3255 & 19.36/0.3255 & 19.36/0.3255 & 19.40/0.3278 & 19.59/0.3389 & 20.37/0.3913 & \textbf{24.83/0.8212} & 24.63/0.8125\\
Baboon & 19.06/0.5565 & 19.06/0.556 & 19.06/0.556 & 19.06/0.556 & 19.06/0.556 & 19.09/0.5582 & 19.26/0.5663 & 19.92/05977 & \textbf{22.30/0.7018} & 22.21/0.6928 \\
Bee & 20.39/0.6448 & 20.39/0.6448 & 20.39/0.6448 & 20.39/0.6448 & 20.39/0.6448 & 20.44/0.6469 & 20.70/0.6567 & 21.75/0.6985 & \textbf{28.75/0.9140} & 28.47/0.9117 \\
Aquatic & 18.59/0.3513 & 18.59/0.3513 & 18.59/0.3513 & 18.59/0.3513 & 18.59/0.3513 & 18.62/0.3535 & 18.78/0.3639 & 19.39/0.4101 & \textbf{24.25/0.7835} & 23.86/0.7690 \\
Barbara & 19.16/0.5531 & 19.16/0.5531 & 19.16/0.5531 & 19.16/0.5531 & 19.16/0.5531 & 19.20/0.5555 & 19.38/0.5664 & 20.07/0.6108 & \textbf{23.74/0.8199} & 23.65/0.8164\\
Boat & 18.32/0.4328 & 18.32/0.4238 & 18.32/0.4238 & 18.32/0.4238 & 18.32/0.4238 & 18.35/0.4262 & 18.50/0.4375 & 19.07/0.4874 & \textbf{23.03/0.8072} & 22.50/0.7895 \\
House & 19.65/0.6424 & 19.65/0.6424 & 19.65/0.6424 & 19.65/0.6424 & 19.65/0.6424 & 19.69/0.6452 & 19.91/0.6584 & 20.77/0.7109 & \textbf{28.00/0.9300} & 27.19/0.9215 \\
Peppers & 19.24/0.8107 & 19.24/0.8107 & 19.24/0.8107 & 19.24/0.8107 & 19.24/0.8107 & 19.28/0.8123 & 19.46/0.8194 & 20.17/0.8463 & \textbf{24.61/0.9412} & 24.56/0.9415 \\
Starfish & 19.81/0.7516 & 19.81/0.7516 & 19.81/0.7516 & 19.81/0.7516 & 19.81/0.7516 & 18.94/0.7536 & 19.11/0.7622 & 19.77/0.7965 & \textbf{23.95/0.9381} & 23.71/0.9363 \\
Lena & 20.94/0.8271 & 20.94/0.8271 & 20.94/0.8271 & 20.94/0.8271 & 20.94/0.8271 & 21.00/0.8292 & 21.29/0.8386 & 22.52/0.8745 & \textbf{28.73/0.9678} & 28.63/0.9671 \\
Pelican & 19.56/0.4135 & 19.56/0.4135 & 19.56/0.4135 & 19.56/0.4135 & 19.56/0.4135 & 19.60/0.4161 & 19.80/0.4256 & 20.65/0.4839 & \textbf{25.07/0.8405} & 24.75/0.8332 \\
\hline
\multicolumn{11}{|c|}{AB(9,9)/$\sigma=15$}\\
\hline
Bird & 20.33/0.4712 & 20.33/0.4712 & 20.33/0.4712 & 20.33/0.4712 & 20.34/0.4717 & 20.39/0.4756 & 20.65/0.4946 & 21.74/0.5868 & \textbf{23.55/0.7361} & 23.41/0.7247 \\
Plane & 20.69/0.3765 & 20.69/0.3765 & 20.69/0.3765 & 20.69/0.3765 & 20.70/0.3770 & 20.76/0.3808 & 21.04/0.3997 & 22.36/0.5058 & \textbf{25.22/0.8295} & 25.07/0.8224\\
Baboon & 19.90/0.5806 & 19.90/0.5806 & 19.90/0.5806 & 19.90/0.5806 & 19.91/0.5809 & 19.95/0.5832 & 20.17/0.5936 & 21.06/0.6448 & \textbf{22.46/0.7023} & 22.36/0.6927 \\
Bee & 22.11/0.6941 & 22.11/0.6941 & 22.11/0.6941 & 22.11/0.6941  & 22.11/0.6941 & 22.19/0.6973 & 22.59/0.7116 & 24.24/0.7698  & \textbf{29.20/0.9186} & \textbf{29.20}/0.9126 \\
Aquatic & 19.97/0.4135 & 19.97/0.4135 & 19.97/0.4135 & 19.97/0.4135 & 19.99/0.4142 & 20.04/0.4178 & 20.27/0.4350 & 21.45/0.5222 & \textbf{24.28/0.7874} & 24.15/0.7849 \\
Barbara & 20.08/0.5912 & 20.08/0.5912 & 20.08/0.5912 & 20.08/0.5912 & 20.09/0.5914 & 20.14/0.5945 & 20.37/0.6092 & 21.40/0.6745 & \textbf{23.79/0.8087} & 23.53/0.8053\\
Boat & 19.55/0.4805 & 19.55/0.4805 & 19.55/0.4805 & 19.55/0.4805 & 19.56/0.4817 & 19.61/0.4852 & 19.82/0.5027 & 20.94/0.5942 & \textbf{23.17/0.8042} & 22.92/0.7976 \\
House & 21.51/0.7215 & 21.51/0.7215 & 21.51/0.7215 & 21.51/0.7215 & 21.52/0.7230 & 21.60/0.7267 & 21.95/0.7437 & 23.58/0.8137 & \textbf{28.95/0.9395} & 28.74/0.9369 \\
Peppers & 20.48/0.8458 & 20.48/0.8458 & 20.48/0.8458 & 20.48/0.8458 & 20.49/0.8467 & 20.55/0.8486 & 20.80/0.8569 & 21.96/0.8893 & \textbf{24.80/0.9440} & 24.72/0.9433 \\
Starfish & 20.35/0.7915 & 20.35/0.7915 & 20.35/0.7915 & 20.35/0.7915 & 20.36/0.7930 & 20.41/0.7954 & 21.66/0.8066 & 22.76/0.8485 & \textbf{24.15/0.9367} & 24.06/0.9373 \\
Lena & 22.19/0.9601 & 22.19/0.8601 & 22.19/0.860 & 22.19/0.860 & 22.20/0.8604 & 22.29/0.8628 & 22.70/0.8738 & 24.30/0.9104 & \textbf{28.60/0.9654} & 28.49/0.9646 \\
Pelican & 20.85/0.4638 & 20.85/0.4638 & 20.85/0.4638 & 20.85/0.4638 & 20.85/0.4638 & 20.91/0.4683 & 21.21/0.4874 & 22.44/0.5740 & \textbf{25.04/0.8362} & 24.95/0.8215 \\
\hline
\multicolumn{11}{|c|}{GB(25,1.6)/$\sigma=15$}\\
\hline
Bird & 21.14/0.5421 & 21.14/0.5421 & 21.14/0.5421 & 21.14/0.5421 & 21.14/0.5421 & 21.21/0.5460 & 21.88/0.4406 & 23.05/0.6712 & \textbf{25.27/0.8130} & 24.98/0.7967 \\
Plane & 21.62/0.4285 & 21.62/0.4285 & 21.62/0.4285 & 21.62/0.4285 & 21.62/0.4285 & 21.69/0.4320 & 22.04/0.4494 & 23.89/0.5663 & \textbf{27.30/0.8895} & 26.86/0.8692\\
Baboon & 20.54/0.6377 & 20.54/0.6377 & 20.54/0.6377 & 20.54/0.6377 & 20.54/0.6377 & 20.59/0.6400 & 20.84/0.6508 & 22.02/0.7096 & \textbf{23.48/0.7572} & 23.23/0.7376 \\
Bee & 22.45/0.7063 & 22.45/0.7063 & 22.45/0.7063 & 22.45/0.7063  & 22.45/0.7063 & 22.54/0.7090 & 22.97/0.7224 & 25.26/0.7944  & \textbf{31.27/0.9386} & 31.17/0.9375 \\
Aquatic & 21.34/0.4930 & 21.34/0.4930 & 21.34/0.4930 & 21.34/0.4930 & 21.34/0.4930 & 21.40/0.4965 & 21.71/0.5139 & 23.36/0.6185 & \textbf{26.58/0.8702} & 26.32/0.8597 \\
Barbara & 20.92/0.6405 & 20.92/0.6405 & 20.92/0.6405 & 20.92/0.6405 & 20.92/0.6405 & 20.97/0.6435 & 21.25/0.6581 & 22.61/0.7336 & \textbf{25.00/0.8585} & 24.81/0.8476\\
Boat & 20.60/0.5385 & 20.60/0.5385 & 20.60/0.5385 & 20.60/0.5385 & 20.60/0.5385 & 20.66/0.5419 & 20.92/0.5584 & 22.32/0.6579 & \textbf{24.62/0.8524} & 24.20/0.8401 \\
House & 22.01/0.7448 & 22.01/0.7448 & 22.01/0.7448 & 22.01/0.7448 & 22.01/0.7448 & 22.08/0.7483 & 22.47/0.7649 & 24.54/0.8443 & \textbf{29.90/0.9500} & 29.61/0.9475 \\
Peppers & 21.49/0.8720 & 21.49/0.8720 & 21.49/0.8720 & 21.49/0.8720 & 21.49/0.8720 & 21.55/0.8738 & 21.88/0.8820 & 23.51/0.9179 & \textbf{26.63/0.9643} & 26.40/0.9604 \\
Starfish & 21.49/0.8123 & 21.49/0.8123 & 21.49/0.8123 & 21.49/0.8123 & 21.49/0.8123 & 21.55/0.8144 & 21.87/0.8250 & 23.53/0.8763 & \textbf{26.77/0.9654} & 26.46/0.9581 \\
Lena & 22.46/0.8675 & 22.46/0.8675 & 22.46/0.8675 & 22.46/0.8675 & 22.46/0.8675 & 22.52/0.8683 & 22.95/0.8789 & 25.21/0.9247 & \textbf{30.55/0.9800} & 30.27/0.9750 \\
Pelican & 21.29/0.4810 & 21.29/0.4810 & 21.29/0.4810 & 21.29/0.4810 & 21.29/0.4810 & 21.35/0.4820 & 21.68/0.4994 & 23.36/0.6044 & \textbf{26.26/0.8701} & 26.05/0.8615 \\
\hline
\end{tabular}}
}
\caption{PSNR (dB) and SSIM values of different $p$ values for color image deblurring on the Set12 dataset. The best results are marked in \textbf{bold}.}
\end{center}
\end{table*}

\begin{figure}
\centering
\subfigure[$\sigma$=15]{\includegraphics[width=0.495\textwidth]{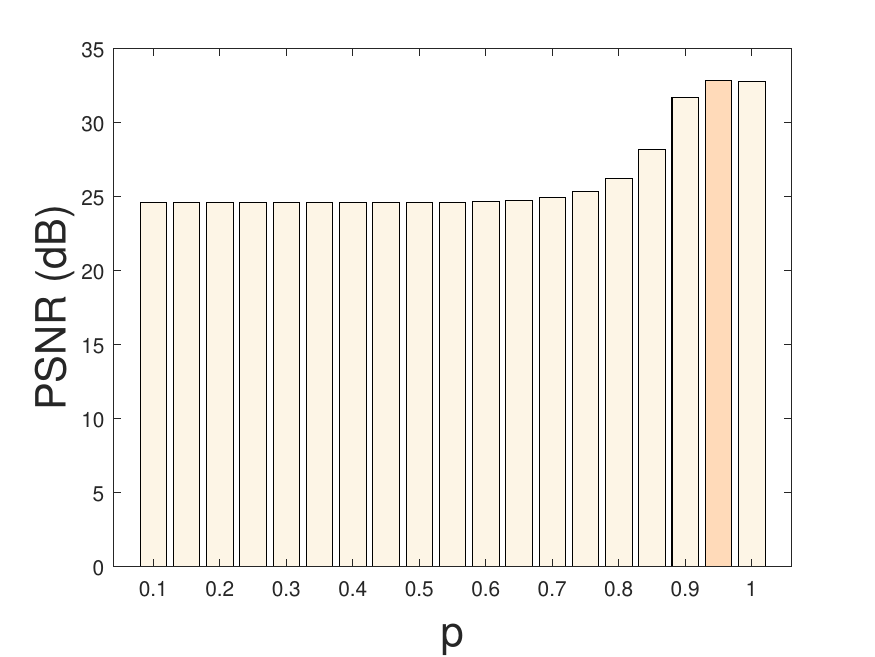}}
\subfigure[$\sigma$=25]{\includegraphics[width=0.495\textwidth]{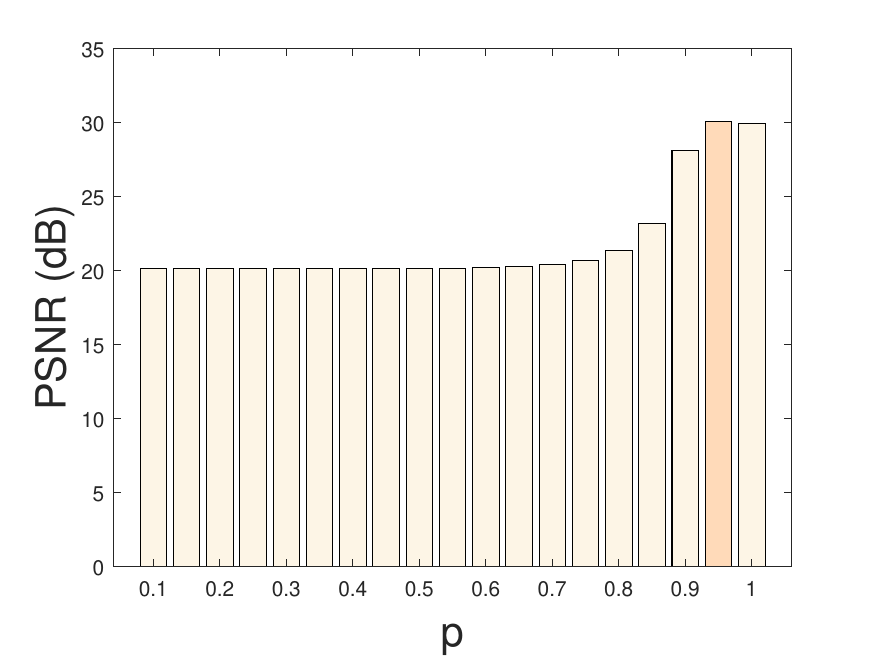}}
\subfigure[$\sigma$=35]{\includegraphics[width=0.495\textwidth]{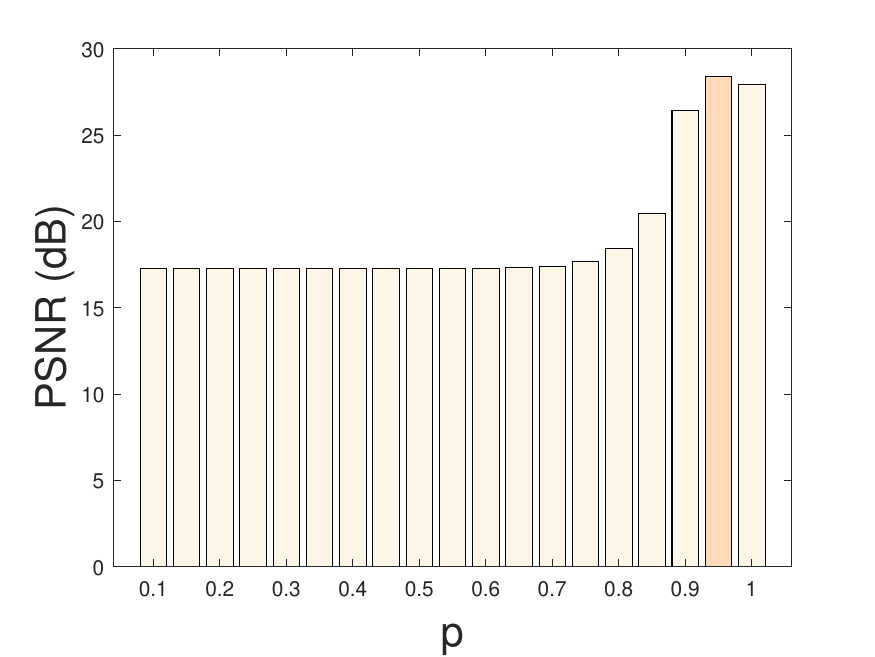}}
\subfigure[$\sigma$=45]{\includegraphics[width=0.495\textwidth]{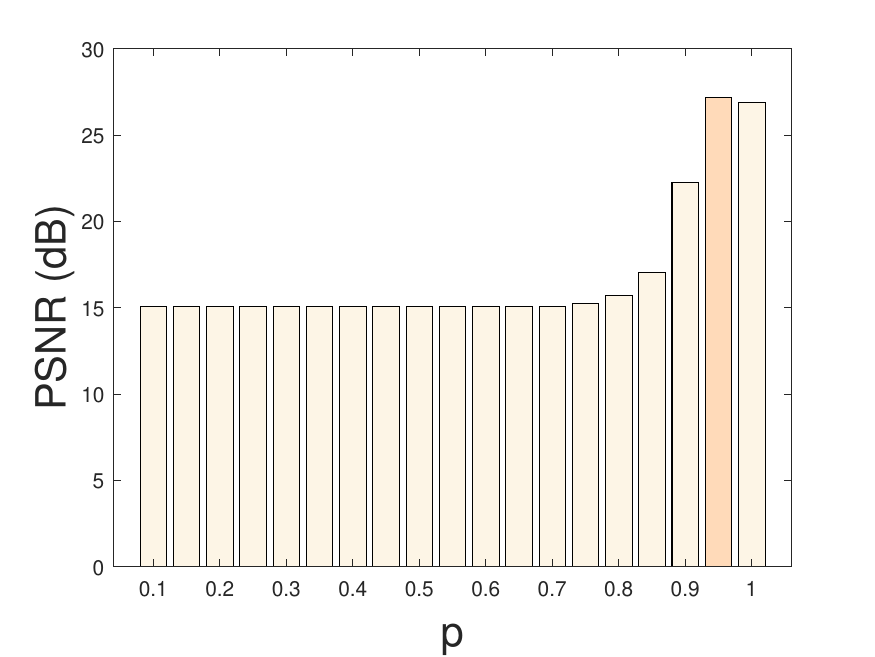}}
\caption{The influence of changing $p$ values on denoised results under different noise levels $\sigma$. Test image: ``Bird''.}
\end{figure}

\begin{figure}
\centering
\subfigure[]{\includegraphics[width=0.495\textwidth]{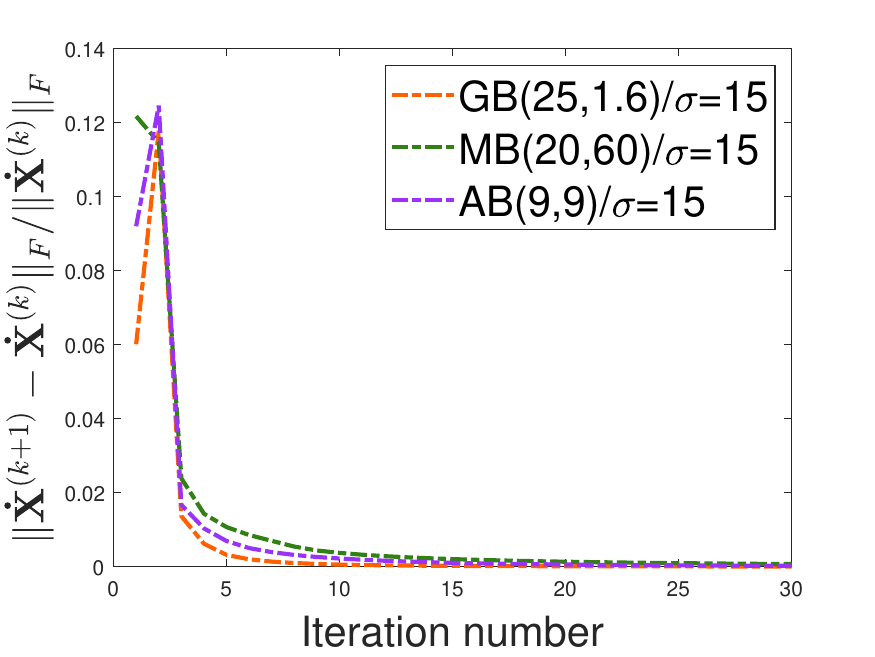}}
\subfigure[]{\includegraphics[width=0.495\textwidth]{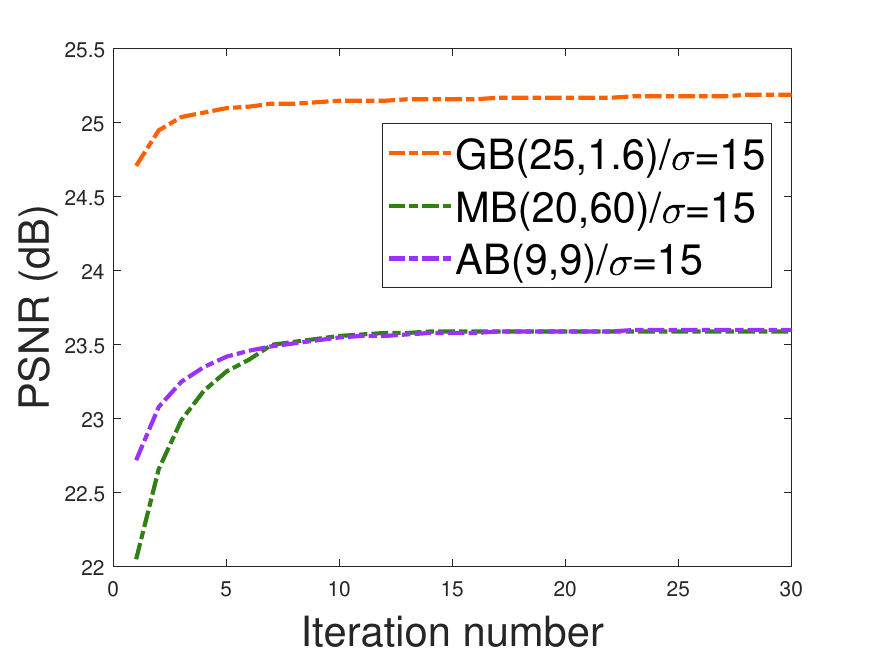}}
\caption{The evolution curves of (a) relative error and (b) PSNR value between consecutive iterations in the case of deblurring. Test image: ``Bird''.}
\end{figure}

\subsection{Analysis of Power $p$}
In this part, the effects of power $p$ in the weighted Schatten $p$-norm are studied.
Table 6 reports the PSNR and SSIM values for deblurring with the power $p$ ranging from 0.15 to 1 in the Set12 dataset. From the Table 6, it can be observed that the recovery performance of our method is relatively sensitive to the parameter $p$. When $p$ is 0.95, the PSNR and SSIM of QWSNM on all test images consistently achieve the highest values. Therefore, $p$ value of 0.95 is a reasonable choice for color image deblurring. To illustrate the influence of $p$ values under different noise levels for color image denoising, we just utilize the test image ``Bird'' as an example and the results are presented in Fig. 9. In each subfigure, the vertical coordinate represents the PSNR value under a certain noise level, while the horizontal coordinate denotes the values of $p$ ranging from 0.1 to 1 with interval 0.05. It can be seen that when $p$ is 0.95, our algorithm consistently achieve the highest PSNR values under different noise levels.
Therefore, we also select $p=0.95$ for color image denoising.

\subsection{Convergence Study}
Fig. 10 illustrates the empirical convergence of our proposed algorithm in the cases of three blur kernels on the test image ``Bird''. One can clearly observe that with the growth of iteration number, all the relative error and PSNR curves decrease and increase monotonically and ultimately become flat and stable, numerically exhibiting the reliable convergence behavior of proposed QWSNM algorithm.
It is noted that other scenarios also have the similar conclusions.

\section{Conclusion}
In this paper, we proposed a quaternion-based QWSNM model for color image restoration, which combines the advantages of the WSNM regularizer and quaternion representation.
To be concrete, the WSNM regularizer can better adaptively shrink the rank components over the WNNM counterpart, and the quaternion representation of color images is capable of fully preserving the correlation of three color channels.
In particular, we modified the QADMM algorithm through a continuation strategy to iteratively solve the resulting optimization problem. Moreover,  rigorous convergence analysis was provided to verify the sound properties of our proposed algorithm. Experimental results on color image denoising under different noise levels and color image deblurring under three types of blur kernels demonstrated the effectiveness and robustness of QWSNM.
Compared with the related competing methods, the visual results showed that our proposed QWSNM is capable of better preserving the color structure and avoiding color distortion.

However, despite the good recovery performance, our proposed algorithm has the efficiency limitation. In fact, the QSVD in the optimization process is quite time-consuming,  so how to improve the algorithmic efficiency remains a significant issue. Very recently, Gai et al. \cite{gai2023theory} introduced the theory of reduced biquaternion sparse representation, which benefited from the commutative of the reduced biquaternion algebra.
In the future research, we would pay more attention to the reduced biquaternion representation and extend existing algorithms to the reduced biquaternion domain for color image restoration. In parallel, we plan to extend the proposed QWNNM algorithm to other relevant tasks, such as color image inpainting, color image super-resolution and compressive sensing, etc.

\section*{Acknowledgements}
The research was supported by the NSFC under Grants 12001005, 12271083.


\section*{Conflict of Interest}
The authors declare that they have no known competing financial interests or personal relationships that could have appeared to influence the work reported in this paper.

\section*{Code Availability}
Code generated or used during the study are available from the corresponding author on
reasonable request.

\bibliography{qwsnm}
\bibliographystyle{spmpsci}

\end{document}